\theoremstyle{plain}
\newtheorem{theorem}{Theorem}
\newtheorem{lemma}{Lemma}
\newtheorem{corollary}{Corollary}[theorem]
\theoremstyle{definition}
\newtheorem{definition}{Definition}
\theoremstyle{remark}
\newcommand{\vectorname}[1]{{\mathrm{\mathbf{#1}}}}
\newcommand{\myparagraph}[1]{\vspace{0pt}\noindent{\bf #1:}}
\newcommand{\ie}{\textit{i.e.}}
\title{Transitivity Recovering Decompositions: \\ Interpretable and Robust Fine-Grained Relationships}
\newcommand*{\affaddr}[1]{#1} 
\newcommand*{\affmark}[1][*]{\textsuperscript{#1}}
\author{%
    \hspace{-5pt}
    Abhra Chaudhuri\affmark[1,5,6]
    \hspace{18pt}
   Massimiliano Mancini\affmark[2]
   \hspace{18pt}
   Zeynep Akata\affmark[3,4]
   \hspace{18pt}
   Anjan Dutta\affmark[5,6]
   \thanks{Abhra Chaudhuri (ac1151@exeter.ac.uk) is the corresponding author.}
   \\
\hspace{-15pt}
\affaddr{\affmark[1] University of Exeter}
\hspace{20pt}
\affaddr{\affmark[2] University of Trento}
\hspace{20pt}
\affaddr{\affmark[3] University of T\"{u}bingen}
\\
\hspace{-15pt}
\affaddr{\affmark[4] MPI for Informatics}
\hspace{5pt}
\affaddr{\affmark[5] The Alan Turing Institute}
\hspace{5pt}
\affaddr{\affmark[6] University of Surrey}
}
\begin{document}

\maketitle

\begin{abstract}

Recent advances in fine-grained representation learning leverage local-to-global (emergent) relationships for achieving state-of-the-art results. The relational representations relied upon by such methods, however, are abstract. We aim to deconstruct this abstraction by expressing them as interpretable graphs over image views.
We begin by theoretically showing that abstract relational representations are nothing but a way of recovering transitive relationships among local views.
Based on this, we design Transitivity Recovering Decompositions (TRD), a graph-space search algorithm that identifies interpretable equivalents of abstract emergent relationships at both instance and class levels, and with no post-hoc computations.
We additionally show that TRD is \emph{provably robust} to noisy views, with empirical evidence also supporting this finding.
The latter allows TRD to perform at par or even better than the state-of-the-art, while being fully interpretable. Implementation is available at \url{https://github.com/abhrac/trd}.

\end{abstract}

\section{Introduction}

Identifying discriminative object parts (local views) has traditionally served as a powerful approach for learning fine-grained representations \cite{zhang2016SPDACNN, wei2018MaskCNN, lam2017HSNet}. Isolated local views, however, miss out on the larger, general structure of the object, and hence, need to be considered in conjunction with the global view for tasks like fine-grained visual categorization (FGVC) \cite{Zhang2021MMAL}.
Additionally, the way in which local views combine to form the global view (local-to-global / emergent relationships \cite{oconnor2021emergence}) has been identified as being crucial for distinguishing between classes that share the same set of parts, but differ only in the way the parts relate to each other \cite{Hinton2011PartRelationImportance, Zhao2021GaRD, Chaudhuri2022RelationalProxies, Caron2021DiNo, Patacchiola2022RelationalReasoningSSL, Choudhury2021UnsupervisedParts}.
However, representations produced by state-of-the-art approaches that leverage the emergent relationships are encoded in an abstract fashion -- for instance, through the summary embeddings of transformers \cite{Caron2021DiNo, Chaudhuri2022RelationalProxies}, or via aggregations on outputs from a GNN \cite{Zhao2021GaRD, han2022ViG}. This makes room for the following question that we aim to answer through this work: \textit{how can we make such abstract relational representations more human-understandable?} Although there are several existing works that generate explanations
for localized, fine-grained visual features \cite{Chen2019InterpFGVC, Wagner2019InterpFGVCCNN, Huang2020InterpFGVCGroup, rao2021CounterfactualFGVC},
providing interpretations for abstract representations obtained for discriminative, emergent relationships still remains unaddressed.

Illustrated in \cref{fig:teaser}, we propose to make existing relational representations interpretable through bypassing their abstractions, and expressing all computations in terms of a graph over image views, both at the level of the image as well as that of the class (termed \emph{class proxy}, a generalized representation of a class). At the class level, this takes the form of what we call a Concept Graph: a generalized relational representation 
of the salient concepts across all instances of that class.
We use graphs, as they are a naturally interpretable model of relational interactions for a variety of tasks \cite{Yang2020ExpImgGraph, desai2022singlestage, agarwal2020VisualRelationshipGraphs, Li2021VisualRelationshipGraph}, allowing us to visualize entities (e.g., object parts, salient features), and their relationships.

We theoretically answer the posed question by introducing the notion of transitivity (\cref{def:transitivity}) for identifying \emph{subsets} of local views that strongly influence the global view. We then show that the relational abstractions are nothing but a way of encoding transitivity,
and design Transitivity Recovering Decompositions (TRD), an algorithm that decomposes both input images and output classes into graphs over views by recovering transitive cross-view relationships.
Through the process of transitivity recovery (\cref{subsec:transitivity_recovery}), TRD identifies maximally informative subgraphs that co-occur in the instance and the concept (class) graphs, providing end-to-end relational transparency.
In practice, we implement TRD by optimizing a GNN to match a graph-based representation of the input image to the concept graph of its corresponding class (through minimizing the Hausdorff Edit Distance between the two). The input image graph is obtained by decomposing the image into its constituent views \cite{Chaudhuri2022RelationalProxies, Wei2017SCDA, Zhang2021MMAL}, and connecting complementary sets of views. The concept graph is obtained by performing an online clustering on the node and edge embeddings of the training instances of the corresponding class. The process is detailed in \cref{subsec:pipeline}.

We also show that TRD is \emph{provably robust} to noisy views in the input image (local views that do not / negatively contribute to the downstream task \cite{chuang2022RINCE}). We perform careful empirical validations under various noise injection models to confirm that this is indeed the case in practice. The robustness allows TRD to always retain, and occasionally, even boost performance across small, medium, and large scale FGVC benchmarks,
circumventing a known trade-off in the explainability literature \cite{Herm2023ExplPerfTradeoff, DarpaXAI, Dziugaite2020EnforcingIA, Ulf2011ExplAccTradeoffInsilico}.
Finally, the TRD interpretations are ante-hoc -- graphs encoding the learned relationships are produced as part of the inference process, requiring no post-hoc GNN explainers \cite{Ying2019GNNExplainer, Yuan2021SubgraphX, Huang2022GraphLime, Schnake2022GNNLRP, Yuan2020XGNN}.

In summary, with the purpose of bringing interpretability to abstract relational representations, we make the following contributions --
(1) show the existence of graphs that are equivalent to abstract local-to-global relational representations, and derive their information theoretic and topological properties;
(2) design TRD, a provably robust algorithm that generates such graphs in an ante-hoc manner, incorporating transparency into the relationship computation pipeline at both instance and class levels; (3) extensive experiments demonstrating not only the achieved interpretability and robustness, but also state-of-the-art performance on benchmark FGVC datasets.

\begin{figure*}
    \centering
    \includegraphics[width=\textwidth]{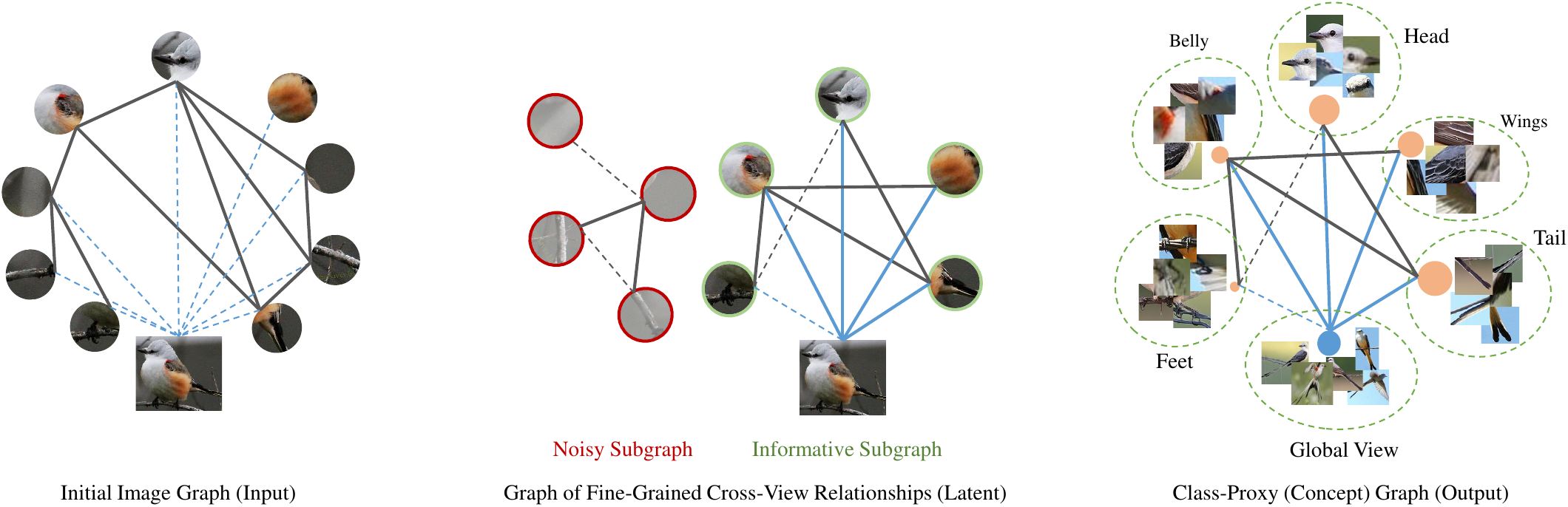}
    \caption{
    Instead of learning representations of emergent relationships that are abstract aggregations of views, our method deconstructs the input, latent, and the class representation (proxy) spaces into graphs, thereby ensuring that all stages along the inference path are interpretable.
    }
    \label{fig:teaser}
\end{figure*}

\section{Related Work}

\myparagraph{Fine-grained visual categorization}
Learning localized image features \cite{Angelova2013DetectSegment, Zhang2014RcnnFgvc, Lin2015DeepLAC}, with extensions exploiting the relationship between multiple images and between network layers \cite{Luo2019CrossX} were shown to be foundational for FGVC. Strong inductive biases like normalized object poses \cite{Branson2014PoseNorm, Yang2022FGVCSSLPose} and data-driven methods like deep metric learning \cite{Cui2016FgvcDml} were used to tackle the high intra-class and low inter-class variations. Unsupervised part-based models leveraged CNN feature map activations \cite{Huang2016PartStackedCNN, Zhang2021MMAL} or identified discriminative sequences of parts \cite{Behera2021CAP}. Novel ways of training CNNs for FGVC included boosting \cite{Moghimi2016BoostedCNN}, kernel pooling \cite{Cui2017}, and channel masking \cite{Ding2021CDB}. Vision transformers, with their ability to learn localized features, had also shown great promise in FGVC \cite{Wang2021FFVT, He2022TransFG, Lu2021SwinFGHash}. FGVC interpretability has so far focused on the contribution of individual parts \cite{Chen2019InterpFGVC, Donnelly2021DeformablePA}. However, the usefulness of emergent relationships for learning fine-grained visual features was demonstrated by \cite{Wang2020GraphPropFGVC, Zhao2021GaRD, Caron2021DiNo, Chaudhuri2022RelationalProxies}. Our method gets beyond the abstractions inherent in such approaches, and presents a fully transparent pipeline by expressing all computations in terms of graphs representing relationships, at both the instance and the class-level.


\myparagraph{Relation modeling in deep learning} 
Relationships between entities serve as an important source of semantic information as has been demonstrated in graph representation learning \cite{Teru2020RelPredGNN, desai2022singlestage}, deep reinforcement learning \cite{Zambaldi2019RelationRL}, question answering \cite{Santoro2019RelationQA}, object detection \cite{Hu2018RelationOD}, knowledge distillation \cite{Park2019RKD}, and few-shot learning \cite{Sung2018RelationFSL}.
%
%
While the importance of learning relationships between different views of an image was demonstrated in the self-supervised context \cite{Patacchiola2022RelationalReasoningSSL, Caron2021DiNo}, \cite{Chaudhuri2022RelationalProxies} showed how relational representations themselves can be leveraged for tasks like FGVC.
%
However, existing works that leverage relational information for representation learning typically (1) model all possible ways the local views can combine to form the global view through a transformer, and distill out the information about the most optimal combination in the summary embedding \cite{Caron2021DiNo, Chaudhuri2022RelationalProxies}, or (2) model the underlying relations through a GNN, but performing an aggregation on its outputs \cite{han2022ViG, Zhao2021GaRD}. Both (1) and (2) produce vector-valued outputs, and such, cannot be decoded in a straightforward way to get an understanding of what the underlying emergent relationships between views are. This lack of transparency is what we refer to as “abstract”. The above set of methods exhibit this abstraction not only at the instance, but at the class-level as well, which also appear as vector-valued embeddings. On the contrary, an interpretable relationship encoder should be able to produce graphs encoding relationships in the input, intermediate, and output spaces, while also representing a class as relationships among concepts, instead of single vectors that summarize information about emergence. This interpretability is precisely what we aim to achieve through this work.

%

\myparagraph{Explainability for Graph Neural Networks}
GNNExplainer \cite{Ying2019GNNExplainer} was the first framework that proposed explaining GNN predictions by identifying maximally informative subgraphs and subset of features that influence its predictions.
This idea was extended in \cite{Yuan2021SubgraphX} through exploring subgraphs via Monte-Carlo tree search and identifying the most informative ones based on their Shapley values \cite{Shapley23ShapVal}.
However, all such methods inherently provided either local \cite{Ying2019GNNExplainer, Huang2022GraphLime, Schnake2022GNNLRP} or global explanations \cite{Yuan2020XGNN}, but not both.
To address this issue, PGExplainer \cite{Luo2020PGExplainer} presented a parameterized approach to provide generalized, class-level explanations for GNNs, while \cite{Wang2021LocalGlobalExplainability} obtained multi-grained explanations based on the pre-training (global) and fine-tuning (local) paradigm.
\cite{Lin2021Gem} proposed a GNN explanation approach from a causal perspective, which led to better generalization and faster inference.
On the other hand, \cite{Bajaj2021RobustCFExpl} improved explanation robustness by modeling decision regions induced by similar graphs, while ensuring counterfactuality.
Although ViG \cite{han2022ViG} presents an algorithm for representing images as a graph of its views, 
it exhibits an abstract computation pipeline lacking explainability. A further discussion on this is presented in \cref{subsec:graph_rep}.
Also, generating explanations for graph metric-spaces \cite{Zhu2020ProxyGML, Li2022RGCL} remains unexplored, which we aim to address through this work.

\section{Transitivity Recovering Decompositions}
%
%
%

Our methodology is designed around three central theorems -
(i) \cref{thm:existence} shows
the existence of semantically equivalent graphs for every abstract representation of emergent relationships,
(ii) \cref{thm:max_mi} establishes an information theoretic criterion for identifying such a graph, and
(iii) \cref{thm:transitivity_recovery} shows how transitivity recovering functions can guide the search for candidate solutions that satisfy the above criterion. Additionally, \cref{thm:robustness} formalizes the robustness of transitivity recovering functions to noisy views. Due to space constraints, we defer all proofs to the \cref{subsec:proofs}.

\myparagraph{Preliminaries}
Consider an image $\vectorname{x} \in \mathbb{X}$ with a categorical label $\vectorname{y} \in \mathbb{Y}$ from an FGVC task.
Let $\vectorname{g} = c_g(\vectorname{x})$ and $\mathbb{L} = \{\vectorname{l}_1, \vectorname{l}_2,... \: \vectorname{l}_k\} = c_l(\vectorname{x})$ be the global and set of local views of an image $\vectorname{x}$ respectively, jointly denoted as $\mathbb{V} = \{\vectorname{g}\} \cup \mathbb{L}$, where $c_g$ and $c_l$ are cropping functions applied on $\vectorname{x}$ to obtain such views.
Let $f$ be a semantically consistent, relation-agnostic encoder (\cref{subsec:sem_const,subsec:rel-agn}) that takes as input $\vectorname{v} \in \mathbb{V}$ and maps it to a latent space representation $\vectorname{z} \in \mathbb{R}^n$, where $n$
is the representation dimensionality. Specifically, the representations of the global view $\vectorname{g}$ and local views $\mathbb{L}$ obtained from $f$ are then denoted by $\vectorname{z}_g = f(\vectorname{g})$ and $\mathbb{Z_L} = \{f(\vectorname{l}) : \vectorname{l} \in \mathbb{L}\} = \{\vectorname{z}_{l_1}, \vectorname{z}_{l_2},... \: \vectorname{z}_{l_k}\}$ respectively, together denoted as $\mathbb{Z}_\mathbb{V} = \{\vectorname{z}_g, \vectorname{z}_{l_1}, \vectorname{z}_{l_2},... \: \vectorname{z}_{l_k}\}$.
Let $\xi$ be a function that encodes the relationships $\vectorname{r} \in \mathbb{R}^n$ between the global ($\vectorname{g}$) and the set of local ($\mathbb{L}$) views. DiNo \cite{Caron2021DiNo} and Relational Proxies \cite{Chaudhuri2022RelationalProxies} can be considered as candidate implementations for $\xi$. Let $\mathbb{G}$ be a set of graphs $\{(\mathbb{V}, \mathbb{E}_1, \mathbb{F}_{\mathbb{V}_1}, \mathbb{F}_{\mathbb{E}_1}), (\mathbb{V}, \mathbb{E}_2, \mathbb{F}_{\mathbb{V}_2}, \mathbb{F}_{\mathbb{E}_2}), ...\}$, where the nodes in each graph constitute of the view set $\mathbb{V}$, and the edge set $\mathbb{E}_i \in \mathcal{P}(\mathbb{V}\times\mathbb{V})$, where $\mathcal{P}$ denotes the power set. $|\mathbb{G}| = |\mathcal{P}(\mathbb{V}\times\mathbb{V})|$, meaning that $\mathbb{G}$ is the set of all possible graph topologies with $\mathbb{V}$ as the set of nodes.
The node features $\mathbb{F}_{\mathbb{V}_i}$, and the edge features $\mathbb{F}_{\mathbb{E}_i} \in \mathbb{R}^n$.

\subsection{Decomposing Relational Representations}


\begin{definition}[\textbf{Semantic Relevance Graph}]
A graph $\mathcal{G}_s \in \mathbb{G}$ of image views where each view-pair is connected by an edge with strength proportional to their joint relevance in forming the semantic label $\vectorname{y}$ of the image. Formally, the weight of an edge $E_{S_{ij}}$ connecting $\vectorname{v}_i$ and $\vectorname{v}_j$ is proportional to $I(\vectorname{v}_i\vectorname{v}_j; \vectorname{y})$.
    \label{def:semantic_relevance}
\end{definition}

Intuitively, $E_{S_{ij}}$ is a measure of how well the two views pair together as compatible puzzle pieces in depicting the central concept in the image.


\begin{theorem}
    For a relational representation $\vectorname{r}$ that minimizes the information gap $I(\vectorname{x}; \vectorname{y} | \mathbb{Z}_\mathbb{V})$,
    there exists a metric space $(\mathcal{M}, d)$ that defines a Semantic Relevance Graph
    such that:
    \begin{align*}
        d(\vectorname{z}_i, \vectorname{y}) &> \delta \wedge d(\vectorname{z}_j, \vectorname{y}) > \delta \Rightarrow \\
        \exists \;\xi: \mathbb{R}^n \xrightarrow{\text{sem}} \mathbb{R}^n &\mid \vectorname{r} = \xi(\varphi(\vectorname{z}_i), \varphi(\vectorname{z}_j)), d(\vectorname{r}, \vectorname{y}) \leq \delta
    \end{align*}
    where $\mathcal{M} \in \mathbb{R}^n$, $\xrightarrow{\text{sem}}$ and $d$ denote semantically consistent transformations and distance metrics respectively, $\varphi: \mathbb{Z}_\mathbb{V} \xrightarrow{\text{sem}} \mathcal{M}$, and $\delta$ is some finite, empirical bound on semantic distance.
    \label{thm:existence}
\end{theorem}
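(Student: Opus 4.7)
The plan is to construct the metric space $(\mathcal{M}, d)$ from the information-theoretic structure forced by $\vectorname{r}$, and then exhibit $\xi$ as an explicit witness that closes the semantic gap whenever the individual views cannot. First I would invoke the hypothesis that $\vectorname{r}$ minimises the information gap $I(\vectorname{x}; \vectorname{y} \mid \mathbb{Z}_\mathbb{V})$. Using the chain rule of mutual information, driving this conditional term towards zero forces $\vectorname{r}$ to encode exactly the label-relevant content absent from any single $\vectorname{z}_i$, which, by subadditivity of $I(\cdot;\vectorname{y})$ over individual views, must reside in the pairwise joint mutual informations $I(\vectorname{v}_i \vectorname{v}_j; \vectorname{y})$ -- precisely the edge weights $E_{S_{ij}}$ of the Semantic Relevance Graph in \cref{def:semantic_relevance}.

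Second, I would build $\mathcal{M}$ as a semantically consistent embedding of label-predictive vectors, equipped with a divergence $d$ that is monotone decreasing in mutual information with $\vectorname{y}$ (for instance, $d(\vectorname{u}, \vectorname{y}) = -\log p(\vectorname{y} \mid \vectorname{u})$ up to an additive constant). Because $f$ is semantically consistent and relation-agnostic (cf.\ \cref{subsec:sem_const,subsec:rel-agn}), $\varphi$ can be chosen as a label-preserving map into $\mathcal{M}$, so that $d(\vectorname{z}_i, \vectorname{y}) > \delta$ is equivalent to saying that $I(\vectorname{v}_i; \vectorname{y})$ falls below the tolerance implied by $\delta$. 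The antecedent of the implication therefore translates to the statement that neither view alone carries enough label-information to clear the $\delta$-threshold.

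Third, with (a) $\vectorname{r}$ carrying the residual label information and (b) that residual being supported on the pairwise edges of $\mathcal{G}_s$, I would construct the witness $\xi$ via a Markov-factorisation / data-processing argument: the label-sufficiency of $\varphi$ on each individual view forces the joint posterior $p(\vectorname{y} \mid \vectorname{v}_i, \vectorname{v}_j)$ to factor through $(\varphi(\vectorname{z}_i), \varphi(\vectorname{z}_j))$, so I can take $\xi$ to be the Bayes-optimal posterior aggregator on $\mathcal{M} \times \mathcal{M}$. By construction, $\vectorname{r} = \xi(\varphi(\vectorname{z}_i), \varphi(\vectorname{z}_j))$ inherits the full $I(\vectorname{v}_i \vectorname{v}_j; \vectorname{y})$, which, under the monotonicity of $d$ in mutual information, yields the required $d(\vectorname{r}, \vectorname{y}) \leq \delta$.

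The main obstacle will be justifying that a single metric $d$ simultaneously governs (i) the individual view-to-label distances, (ii) the joint view-to-label distance, and (iii) the distance of the relational output $\vectorname{r}$ to the label, all within the same ambient $\mathbb{R}^n$. I would resolve this by taking $\mathcal{M}$ to be the sufficient-statistics manifold of $p(\vectorname{y} \mid \cdot)$ -- an information-geometric submanifold embedded in $\mathbb{R}^n$ -- on which one divergence-based $d$ controls all three quantities at once; the semantic consistency of $f$ and of $\xi$ then places $\vectorname{r}$ into the same ambient space, and the monotone relation between $d$ and mutual information reproduces the edge weights $E_{S_{ij}}$ as exactly the $d$-shrinkage obtained by pairing views, closing the argument.
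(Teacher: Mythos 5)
Your proposal is essentially correct but takes the opposite logical route from the paper. The paper argues by contradiction: it assumes no such metric space exists, observes that this would make $\mathbb{Z}_\mathbb{V}$ a \emph{sufficient} representation (so that $I(\vectorname{x};\vectorname{y}) = I(\mathbb{Z}_\mathbb{V};\vectorname{y})$, cf.\ \cref{subsec:sufficiency}), and then derives a contradiction with the information gap $I(\vectorname{x};\vectorname{r}\mid\mathbb{Z}_\mathbb{V}) > 0$ that must exist because $f$ is relation-agnostic (\cref{subsec:inf_gap}). You instead construct the witnesses directly: $d$ as a posterior-based divergence monotone in label information, $\mathcal{M}$ as the induced sufficient-statistics manifold, and $\xi$ as the Bayes-optimal pairwise aggregator. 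Both arguments pivot on the same two facts (semantic consistency ties $d$ to mutual information; relation-agnosticity of $f$ leaves residual label information in the joint view statistics), so the proofs are logically equivalent in what they assume; yours buys an explicit candidate for $(\mathcal{M}, d, \xi)$ and makes the link to the edge weights $I(\vectorname{v}_i\vectorname{v}_j;\vectorname{y})$ of \cref{def:semantic_relevance} concrete, while the paper's contradiction argument is shorter and needs only the \emph{existence} of the gap, not its localization to pairs. One step of yours needs repair: you justify the factorization of $p(\vectorname{y}\mid\vectorname{v}_i,\vectorname{v}_j)$ through $(\varphi(\vectorname{z}_i),\varphi(\vectorname{z}_j))$ by ``label-sufficiency of $\varphi$ on each individual view,'' which contradicts the antecedent $d(\vectorname{z}_i,\vectorname{y}) > \delta$ (the views are individually \emph{insufficient} by hypothesis); the correct justification is the distributive property of $f$ (\cref{subsec:dist_f}), which guarantees that passing to $\mathbb{Z}_\mathbb{V}$ loses no emergent information, so the joint pair of embeddings still carries $I(\vectorname{v}_i\vectorname{v}_j;\vectorname{y})$ for $\xi$ to extract.
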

\textit{Intuition}: The theorem says that, even if $\vectorname{z}_i$ and $\vectorname{z}_j$ individually cannot encode anymore semantic information, the metric space $(\mathcal{M}, d)$ encodes their relational semantics via its distance function $d$.
Assuming the theorem to be false would mean that the outputs of the aggregation do not allow for a distance computation that is semantically meaningful, and hence, there is no source from which $\vectorname{r}$ (the relational embedding) can be learned. This would imply that either (i) the information gap (\cref{subsec:inf_gap}) does not exist, which is false, because $f$ is a relation-agnostic encoder, or (ii) all available metric spaces are relation-agnostic and hence, the information gap would always persist. The latter is again in contrary to what is shown in \cref{subsec:suff_learner}, which derives the necessary and sufficient conditions for a model to bridge the information gap. Thus, $(\mathcal{M}, d)$ defines $\mathcal{G}_s$ with weights $1 / d(\varphi(\vectorname{v}_i), \varphi(\vectorname{v}_j)) \propto I(\vectorname{v}_i\vectorname{v}_j; \vectorname{y})$, where $\varphi: \mathbb{V} \to \mathcal{M}$.

\begin{corollary}[Proxy / Concept Graph]
    Proxies can also be represented by graphs.
    \label{cor:ProxyGraphs}
\end{corollary}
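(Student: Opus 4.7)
The plan is to lift \cref{thm:existence} from the instance level to the class level by recognizing that a class proxy plays the same role for a class $\vectorname{y}$ that the relational embedding $\vectorname{r}$ plays for a single image $\vectorname{x}$: both are aggregations that must retain the discriminative semantic content while abstracting away lower-level detail. So the corollary essentially reduces to re-instantiating \cref{thm:existence} with the appropriate substitutions.

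First I would set up the class-level analogue of the view set. Instead of the per-image views $\mathbb{V}$ of a single $\vectorname{x}$, I would collect view embeddings across all training instances of class $\vectorname{y}$ and consolidate them (the paper's pipeline does this via online clustering on node and edge embeddings) into a finite set of concept prototypes $\mathbb{C}_{\vectorname{y}} = \{\vectorname{c}_1,\ldots,\vectorname{c}_m\}$. Each $\vectorname{c}_i$ is a class-level semantic atom, the counterpart of a local or global view at the instance level. The proxy $\vectorname{p}_{\vectorname{y}}$ then plays the role of $\vectorname{r}$: it is the class-level embedding that must minimize the information gap $I(\vectorname{y}; \vectorname{y} \mid \mathbb{Z}_{\mathbb{C}_{\vectorname{y}}})$ in the sense of \cref{thm:existence}.

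Next I would invoke \cref{thm:existence} directly under the substitution $\mathbb{V} \mapsto \mathbb{C}_{\vectorname{y}}$ and $\vectorname{r} \mapsto \vectorname{p}_{\vectorname{y}}$. The theorem then guarantees a metric space $(\mathcal{M}', d')$ and a semantically consistent aggregation $\xi'$ such that $\vectorname{p}_{\vectorname{y}} = \xi'(\varphi'(\vectorname{c}_i), \varphi'(\vectorname{c}_j))$ with $d'(\vectorname{p}_{\vectorname{y}}, \vectorname{y}) \leq \delta$. This metric space induces a Semantic Relevance Graph in the sense of \cref{def:semantic_relevance}, whose nodes are the concept prototypes $\vectorname{c}_i$ and whose edge weights satisfy $1/d'(\varphi'(\vectorname{c}_i), \varphi'(\vectorname{c}_j)) \propto I(\vectorname{c}_i \vectorname{c}_j; \vectorname{y})$. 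This graph is exactly the Concept Graph asserted by the corollary.

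The main obstacle is justifying that the proxy satisfies the same information-gap hypothesis that \cref{thm:existence} requires of $\vectorname{r}$. At the instance level this was guaranteed by the relation-agnostic encoder $f$; at the class level I would need a parallel argument that single-instance embeddings cannot on their own encode class-level semantics (so a relational aggregation across instances is genuinely necessary) and that the clustering step preserves enough mutual information with $\vectorname{y}$ for $\mathbb{C}_{\vectorname{y}}$ to be a faithful replacement for the raw per-instance view sets. Once these two points are in place, the conclusion follows by a word-for-word reapplication of \cref{thm:existence}, so the substantive content of the corollary is really in verifying that the class-level setting meets its hypotheses rather than in any new construction.
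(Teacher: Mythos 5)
Your construction lands in the same place as the paper's --- cluster the instance-level view embeddings of a class into concept prototypes that serve as nodes, and cluster the instance edge embeddings to obtain the edges --- but your justification takes a detour the paper does not need. The paper's argument (given only as an intuition, not a formal proof) is more elementary: the metric space $(\mathcal{M}, d)$ already guaranteed by \cref{thm:existence} is semantically consistent, so any locality (hypersphere/cluster) of view embeddings in it is automatically a semantic relevance neighborhood, and the cluster centers therefore inherit the semantic relevance needed to serve as graph nodes. Nothing new has to be re-proved at the class level; the corollary is read off from the properties of the \emph{existing} metric space. By contrast, you re-instantiate \cref{thm:existence} under the substitution $\mathbb{V} \mapsto \mathbb{C}_{\vectorname{y}}$, $\vectorname{r} \mapsto \vectorname{p}_{\vectorname{y}}$, which forces you to verify a class-level analogue of the information-gap hypothesis --- a step you explicitly flag as an obstacle and do not close. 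Two concrete issues with that route: first, the quantity $I(\vectorname{y}; \vectorname{y} \mid \mathbb{Z}_{\mathbb{C}_{\vectorname{y}}})$ you write down is degenerate (it is just the conditional entropy $H(\vectorname{y} \mid \mathbb{Z}_{\mathbb{C}_{\vectorname{y}}})$ and does not play the role that $I(\vectorname{x}; \vectorname{y} \mid \mathbb{Z}_{\mathbb{V}})$ plays in \cref{thm:existence}); second, there is no class-level relation-agnostic encoder in the paper's setup to ground such a hypothesis, so the ``word-for-word reapplication'' would not actually go through as stated. If you instead argue directly from the semantic consistency of $d$ on $\mathcal{M}$ --- clusters are semantic neighborhoods, hence their centers are valid proxy nodes, hence the clustered node and edge embeddings form a graph --- you get the corollary without introducing any new hypotheses at all.
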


\textit{Intuition}:
    Since $(\mathcal{M}, d)$ is equipped with a semantically relevant distance function, the hypersphere
    enclosing a locality/cluster of local/global view node embeddings can be identified as a semantic relevance neighborhood. The centers of such hyperspheres can be considered as a proxy representation for each such view,
    generalizing some salient concept of a class.
    The centers of each such view cluster can then act as the nodes of a proxy/concept graph, connected by edges, that are also obtained in a similar manner by clustering the instance edge embeddings.

\begin{theorem}
    The graph $\mathcal{G}^*$ underlying $\vectorname{r}$
    is a member of $\mathbb{G}$ with maximal label information. Formally,
    \begin{equation*}
        \mathcal{G}^* = \arg \max_{\mathcal{G} \in \mathbb{G}} I(\mathcal{G}, \vectorname{y})
    \end{equation*}
    \label{thm:max_mi}
    \vspace{-20pt}
\end{theorem}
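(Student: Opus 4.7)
The plan is to leverage Theorem 1, which establishes that the relational representation $\vectorname{r}$ minimizing the information gap $I(\vectorname{x}; \vectorname{y} \mid \mathbb{Z}_\mathbb{V})$ is semantically equivalent to a Semantic Relevance Graph $\mathcal{G}_s$ induced by the metric space $(\mathcal{M}, d)$, with edge weights $1/d(\varphi(\vectorname{v}_i), \varphi(\vectorname{v}_j)) \propto I(\vectorname{v}_i \vectorname{v}_j; \vectorname{y})$. I will identify $\mathcal{G}^*$ with this $\mathcal{G}_s$ and show it is the argmax.

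First, I would formalize $I(\mathcal{G}, \vectorname{y})$ by treating $\mathcal{G} = (\mathbb{V}, \mathbb{E}, \mathbb{F}_\mathbb{V}, \mathbb{F}_\mathbb{E})$ as a structured random variable and invoking the chain rule to obtain a decomposition of the form $I(\mathcal{G}, \vectorname{y}) = I(\mathbb{V}; \vectorname{y}) + I(\mathbb{E}, \mathbb{F}_\mathbb{E}; \vectorname{y} \mid \mathbb{V})$. Because every $\mathcal{G} \in \mathbb{G}$ shares the same node set $\mathbb{V}$, the first term is constant across $\mathbb{G}$, so the argmax reduces to maximizing the edge-conditional term $I(\mathbb{E}, \mathbb{F}_\mathbb{E}; \vectorname{y} \mid \mathbb{V})$.

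Second, I would argue that $\mathcal{G}^*$ maximizes the edge contribution. By Definition 1, its edges $E_{S_{ij}}$ are weighted directly by the pairwise label information $I(\vectorname{v}_i \vectorname{v}_j; \vectorname{y})$, so they are a sufficient statistic for the information that view-pairs carry about $\vectorname{y}$. For any competitor $\mathcal{G}' \in \mathbb{G}$ whose edge set or weighting differs, the edge features are a (possibly lossy) function of view-pairs, and the data-processing inequality gives $I(\mathbb{F}'_\mathbb{E}; \vectorname{y} \mid \mathbb{V}) \leq I(\mathbb{E}^*, \mathbb{F}^*_\mathbb{E}; \vectorname{y} \mid \mathbb{V})$, with equality only when the weighting is (up to a bijection) proportional to $I(\vectorname{v}_i \vectorname{v}_j; \vectorname{y})$.

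Third, I would close the argument by contradiction to make the link to Theorem 1 explicit. Suppose some $\mathcal{G}' \in \mathbb{G}$ satisfied $I(\mathcal{G}', \vectorname{y}) > I(\mathcal{G}^*, \vectorname{y})$. Since Theorem 1 provides a semantic equivalence between $\vectorname{r}$ and $\mathcal{G}^*$, one could read off from $\mathcal{G}'$ an alternative relational embedding $\vectorname{r}'$ with $I(\vectorname{r}'; \vectorname{y}) > I(\vectorname{r}; \vectorname{y})$, strictly shrinking the information gap that $\vectorname{r}$ was assumed to already minimize -- a contradiction.

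The main obstacle will be the first step: making the node/edge information decomposition fully rigorous when $\mathbb{F}_\mathbb{E}$ and the topology $\mathbb{E}$ are statistically coupled with $\mathbb{V}$, and formally justifying that treating an arbitrary weighting $w(\vectorname{v}_i, \vectorname{v}_j)$ as post-processing of the pair $(\vectorname{v}_i, \vectorname{v}_j)$ is legitimate inside $\mathbb{G}$. I would handle this by conditioning throughout on $\mathbb{V}$ and appealing to the fact that any admissible edge-feature map factors through view pairs, so that the data-processing inequality applies uniformly across $\mathbb{G}$.
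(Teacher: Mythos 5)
Your closing step is the right idea and is essentially the paper's argument, but the paper gets there by a shorter and tighter route that you should compare against. The paper never decomposes $I(\mathcal{G};\vectorname{y})$ into node and edge terms. It argues: (i) by \cref{thm:existence}, $\mathcal{G}^*$ encodes both the relation-agnostic (node) and relation-aware (edge) components, hence is a \emph{sufficient} representation, so $I(\mathcal{G}^*;\vectorname{y}) = I(\vectorname{x};\vectorname{r}\mid\vectorname{z}) + I(\vectorname{z};\vectorname{y}) = I(\vectorname{x};\vectorname{y})$; and (ii) any competitor $\mathcal{G}'$ is produced through the Markov chain $\vectorname{x}\rightarrow\mathbb{Z}_\mathbb{V}\rightarrow\mathcal{G}'$, so the data-processing inequality (together with the chain rule and $I(\mathcal{G}';\vectorname{y}\mid\mathbb{Z}_\mathbb{V})=0$) caps $I(\mathcal{G}';\vectorname{y})$ at $I(\vectorname{x};\vectorname{y})$. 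The contradiction with $I(\mathcal{G}';\vectorname{y})>I(\mathcal{G}^*;\vectorname{y})$ is then immediate. Your third step gestures at this, but as written it is under-justified: ``read off a better $\vectorname{r}'$ contradicting minimality of the gap'' only works if you first establish the absolute ceiling $I(\mathcal{G}';\vectorname{y})\leq I(\vectorname{x};\vectorname{y})$ via the explicit Markov chain, since a priori ``$\vectorname{r}$ minimizes the gap'' could denote a constrained minimum that a cleverer $\vectorname{r}'$ might beat. Make the DPI ceiling explicit and your step three becomes the paper's proof.

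Your second step contains a genuine flaw that you should not rely on. An edge weight proportional to $I(\vectorname{v}_i\vectorname{v}_j;\vectorname{y})$ is a scalar recording \emph{how much} information a pair carries, not \emph{what} that information is; two pairs with identical mutual-information values can be informative about $\vectorname{y}$ in entirely different ways, so such weights are not a sufficient statistic for the pairs' label information. Moreover, the data-processing inequality applied to a competitor's edge features bounds $I(\mathbb{F}'_\mathbb{E};\vectorname{y}\mid\mathbb{V})$ by the information in the raw view pairs, not by $I(\mathbb{E}^*,\mathbb{F}^*_\mathbb{E};\vectorname{y}\mid\mathbb{V})$; to conclude the inequality you state, you must separately show that $\mathcal{G}^*$ \emph{attains} the pairwise bound, which is exactly the sufficiency claim the paper imports from \cref{thm:existence}. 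In short, the node/edge decomposition buys you nothing here: the constant node term and the edge-DPI both collapse into the single global bound $I(\mathcal{G}';\vectorname{y})\leq I(\vectorname{x};\vectorname{y})$, and sufficiency of $\mathcal{G}^*$ is the only substantive fact you need to establish attainment.
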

%
\textit{Intuition}:
    The label information $I(\vectorname{x}, \vectorname{y})$ can be factorized into relation-agnostic and relation-aware components (\cref{subsec:inf_gap}).
    The node embeddings in $\mathcal{G}^*$ already capture the relation-agnostic component (being derived from $f$). From the implication in \cref{thm:existence}, we know that given the node embeddings, $\xi$ is able to reduce further uncertainty about the label by joint observation of view pairs (or in other words, edges in $\mathcal{G}^*$). Hence $\xi$ must be relation-aware, as the relation-agnostic component is already captured by $f$. Thus, $G^*$, which is obtained as a composition of $f$ and $\xi$, must be a \emph{sufficient} representation of $\vectorname{x}$ (\cref{subsec:sufficiency,subsec:inf_gap}).
    Since sufficiency is the upper-bound on the amount of label information that a representation can encode (\cref{subsec:sufficiency}), $\mathcal{G}^*$ is the graph that satisfies the condition in the theorem.
In the section below, we show that the way to obtain $\mathcal{G}^*$ from $\mathbb{G}$ is by recovering transitive relationships among local views.

\subsection{Transitivity Recovery}
\label{subsec:transitivity_recovery}

\begin{definition}[\textbf{Emergence}]
    The degree to which a set of local views ($\vectorname{v}_1, \vectorname{v}_2, ... \vectorname{v}_k$) contributes towards forming the global view. It can be quantified as $I(\vectorname{v}_1\vectorname{v}_2...\vectorname{v}_k; \vectorname{g})$.
    \label{def:emergence}
\end{definition}

\begin{definition}[\textbf{Transitivity}]
\label{def:transitivity}
    Local views $\vectorname{v}_1, \vectorname{v}_2$, and $\vectorname{v}_3$ are transitively related \textit{iff}, when any two view pairs have a high contribution towards emergence, the third pair 
    also has a high contribution. Formally,
    \begin{equation*}
        I(\vectorname{v}_i\vectorname{v}_j; \vectorname{g}) > \gamma \wedge I(\vectorname{v}_j\vectorname{v}_k; \vectorname{g}) > \gamma \Rightarrow I(\vectorname{v}_i\vectorname{v}_k; \vectorname{g}) > \gamma,
    \end{equation*}
    where $(i, j, k) \in \{1, 2, 3\}$ and $\gamma$ is some threshold for emergence. A function $\varphi(\vectorname{v})$ is said to be \emph{transitivity recovering} if the transitivity between $\vectorname{v}_1, \vectorname{v}_2$, and $\vectorname{v}_3$ can, in some way, be inferred from the output space of $\varphi$. 
    This helps us quantify the \emph{transitivity} of \emph{emergent} relationships across partially overlapping
    sets of views.
\end{definition}

\begin{theorem}
    A function $\varphi(\vectorname{z})$ that reduces the uncertainty $I(\vectorname{v}_i\vectorname{v}_j; \vectorname{g} | \varphi(\vectorname{z}_i)\varphi(\vectorname{z}_j))$ 
    about the emergent relationships among the set of views can define a $\mathcal{G}_s$
    by being transitivity recovering.
    \label{thm:transitivity_recovery}
\end{theorem}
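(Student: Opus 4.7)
My plan is to split the statement into two coupled sub-claims and prove them in sequence: (i) the information-theoretic hypothesis on $\varphi$ suffices to induce a candidate edge weighting on view pairs that tracks $I(\vectorname{v}_i\vectorname{v}_j; \vectorname{y})$, and (ii) these pairwise weights assemble into a single coherent $\mathcal{G}_s$ precisely when $\varphi$ is transitivity recovering. I would proceed directly, first constructing the edge weights from $\varphi$'s output metric, then arguing by contradiction that without transitivity recovery the weights cannot cohere into a graph.

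For sub-claim (i), I would chain the hypothesis through the information-gap decomposition invoked in the intuitions of \cref{thm:existence} and \cref{thm:max_mi}. Reducing $I(\vectorname{v}_i\vectorname{v}_j; \vectorname{g} \mid \varphi(\vectorname{z}_i)\varphi(\vectorname{z}_j))$ means the pair $(\varphi(\vectorname{z}_i), \varphi(\vectorname{z}_j))$ embeds the emergent contribution of $(\vectorname{v}_i, \vectorname{v}_j)$ to the global view. Since the relation-agnostic component is already absorbed by $f$ and any residual label-relevant information flows along $\vectorname{v}_i\vectorname{v}_j \to \vectorname{g} \to \vectorname{y}$, a data-processing argument transfers emergence to semantic relevance, so the induced distance $d(\varphi(\vectorname{z}_i), \varphi(\vectorname{z}_j))$ on the metric space $(\mathcal{M}, d)$ of \cref{thm:existence} can be made inversely monotone in $I(\vectorname{v}_i\vectorname{v}_j; \vectorname{y})$. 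This is exactly the edge-weight prescription of \cref{def:semantic_relevance}, giving a candidate graph $\widehat{\mathcal{G}}_s$ built from $\varphi$.

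For sub-claim (ii), I would argue by contradiction. Suppose $\varphi$ is not transitivity recovering; then there exist views $\vectorname{v}_i, \vectorname{v}_j, \vectorname{v}_k$ with $I(\vectorname{v}_i\vectorname{v}_j; \vectorname{g}) > \gamma$ and $I(\vectorname{v}_j\vectorname{v}_k; \vectorname{g}) > \gamma$ yet $I(\vectorname{v}_i\vectorname{v}_k; \vectorname{g}) \leq \gamma$, with this pattern undetectable from $\varphi$'s output. The first two edges in $\widehat{\mathcal{G}}_s$ would then jointly attribute a large emergence contribution to the triple (two strongly connected, overlapping puzzle pieces), while the third edge would deny it, violating the triangle-level consistency that the single metric $d$ on $\mathcal{M}$ must enforce. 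Hence $\widehat{\mathcal{G}}_s$ cannot be realized as a graph in $(\mathcal{M}, d)$ at all, and in particular cannot coincide with $\mathcal{G}_s$. Conversely, when $\varphi$ is transitivity recovering in the sense of \cref{def:transitivity}, such triangles close, the three pairwise weights are mutually compatible, and the weighting constructed in step (i) extends to a bona fide semantic relevance graph.

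The hardest part will be making the bridge from emergence (measured against $\vectorname{g}$) to semantic relevance (measured against $\vectorname{y}$) rigorous without circularity, since both notions are coupled through the sufficiency of $\vectorname{g}$ for $\vectorname{y}$ that is only informally developed earlier in the paper; I expect to need an explicit appeal to the relation-aware sufficient-learner characterization in \cref{subsec:suff_learner}. A secondary technical subtlety is that \cref{def:transitivity} is stated as a thresholded implication while $\mathcal{G}_s$ carries continuous weights, so I would either take $\gamma \to 0$ or recast transitivity recovery as order-preservation of the mutual-information ranking over view pairs, in order to make the triangle-closure step in sub-claim (ii) fully quantitative rather than merely qualitative.
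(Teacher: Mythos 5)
Your overall direction differs from the paper's, and the step you lean on hardest does not go through. The paper's proof is direct and constructive: it fixes a \emph{transitively related} triple in which only the middle view is individually close to the label ($d(\vectorname{z}_2,\vectorname{y})<\delta$ while $d(\vectorname{z}_1,\vectorname{y}),d(\vectorname{z}_3,\vectorname{y})>\delta$), invokes the Semantic Emergence property (\cref{subsec:sem_emergence}) to obtain $I(\vectorname{v}_1\vectorname{v}_2;\vectorname{g})>\gamma$ and $I(\vectorname{v}_2\vectorname{v}_3;\vectorname{g})>\gamma$, uses transitivity to get the third pair, and then exhibits a $\varphi$ with $d(\varphi(\vectorname{z}_i),\varphi(\vectorname{z}_j))\propto 1/I(\vectorname{z}_i\vectorname{z}_j;\vectorname{g})$. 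The conclusion that this ``defines a $\mathcal{G}_s$'' is reached by chaining the relation-agnostic identities $I(\vectorname{x};\vectorname{r})=I(\vectorname{v}_1\vectorname{v}_3;\vectorname{g})=I(\vectorname{x};\vectorname{y}\mid\vectorname{z}_1\vectorname{z}_3)$ (\cref{subsec:rel-agn,subsec:dist_f}) to show that the information gap $I(\vectorname{x};\vectorname{y}\mid\vectorname{z}_1\vectorname{z}_3\varphi(\vectorname{z}_1)\varphi(\vectorname{z}_3))$ is minimized, so that \cref{thm:existence} supplies the $\xi$ whose existence is precisely what ``defines a Semantic Relevance Graph'' means. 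Your sub-claim (i) is in the same spirit, but the bridge from emergence (against $\vectorname{g}$) to semantic relevance (against $\vectorname{y}$) is made in the paper by these identities rather than by a data-processing argument along $\vectorname{v}_i\vectorname{v}_j\to\vectorname{g}\to\vectorname{y}$; you correctly flagged this as the hard part but did not supply the identity that actually closes it.

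Sub-claim (ii) contains two genuine gaps. First, your negation of ``transitivity recovering'' is incorrect: by \cref{def:transitivity} this is a property of whether transitivity \emph{can be inferred from the output space of $\varphi$}, not a property of the views. The existence of a triple with two strong pairs and one weak pair is a statement about the data (the ``degenerate emergence'' case discussed in the intuition), and is fully compatible with $\varphi$ being transitivity recovering; conversely $\varphi$ can fail to be transitivity recovering even when every triple is transitive. Second, the triangle-inequality ``coherence'' argument does not force thresholded transitivity: with $d(a,b)=d(b,c)=0.45$ and $d(a,c)=0.8$ the triangle inequality is satisfied, yet at threshold $0.5$ two pairs are strong and one is weak, so a non-transitive triple is perfectly realizable in $(\mathcal{M},d)$ and your claimed contradiction never materializes. (A triangle-inequality argument of this flavour does appear in the paper, but in \cref{lemma:separability} for the robustness result, where it is used to prune edges to noisy views, not to derive an impossibility.) Your closing suggestion to recast transitivity recovery as order-preservation of the mutual-information ranking is a reasonable repair, but as written the contradiction step fails.
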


\textit{Intuition}:
    Reducing the uncertainty about transitivity among the set of views is equivalent to reducing the uncertainty about the emergent relationships, and hence, bridging the information gap.
    If $\vectorname{v_1}, \vectorname{v_2}$, and $\vectorname{v}_3$ are not transitively related, then it would imply that, at most, only one of the three views ($\vectorname{v}_j$) is semantically relevant,
    and the other two ($\vectorname{v}_i,\vectorname{v}_k$) are not.
    This leads to a \emph{degenerate} form of emergence.
    Thus, transitivity is the key property in the graph space that filters out such degenerate subsets in $\mathbb{L}$, helping identify view triplets where all the elements contribute
    towards emergence.
    A further discussion
    can be found in \cref{subsec:role_of_transitivity}.
    Now, we know that relational information must be leveraged to learn a sufficient representation.
    So, for sufficiency,
    a classifier operating in the graph space must leverage transitivity to match instances and proxies.
    In the proof, we hence show that transitivity in $\mathbb{G}$
    is equivalent to the relational information $\vectorname{r} \in \mathbb{R}^n$.
    Thus, the explanation graph $\mathcal{G}^*$ is a maximal MI member of $\mathbb{G}$ obtained by applying a transitivity recovering transformation on the Complementarity Graph.

\begin{figure*}[!t]
    \centering
    \includegraphics[width=\textwidth]{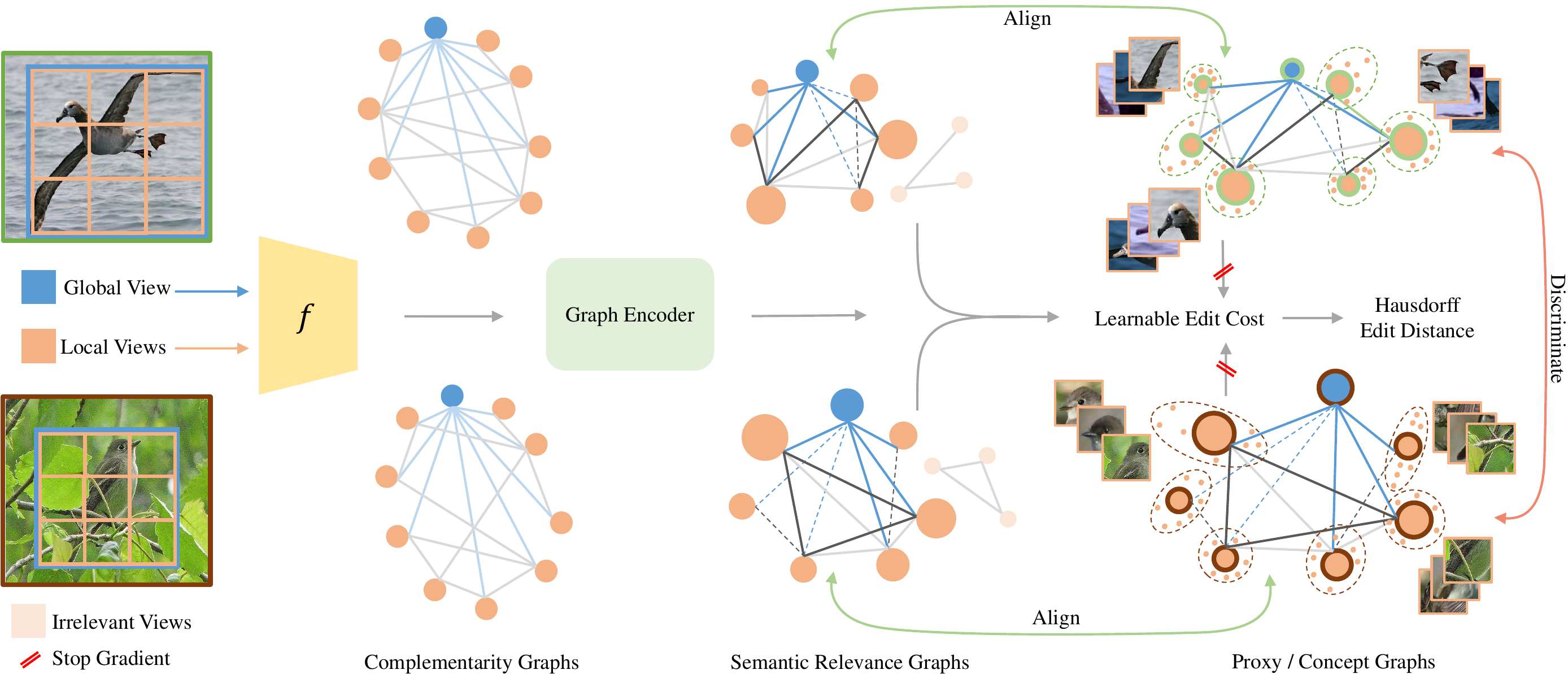}
    \caption{The TRD pipeline: We begin by computing a Complementarity Graph ($\mathcal{G}_c$) on the view embeddings obtained from $f$. The Graph Encoder derives the Semantic Relevance Graph ($\mathcal{G}_s$) through transitivity recovery on $\mathcal{G}_c$.
    We perform a class-level clustering of the instance node and edge embeddings, producing a proxy graph, representing the salient concepts of a class and their interrelationships.
    The sufficiency of $\mathcal{G}_s$ is guaranteed by minimizing its Hausdorff distance with its
    proxy graph.
    }
    \label{fig:model_diagram}
    \vspace{-20pt}
\end{figure*}

\subsection{Generating Interpretable Relational Representations}
\label{subsec:pipeline}
Depicted in \cref{fig:model_diagram}, the core idea behind TRD is to ensure relational transparency by expressing all computations leading to classification on a graph of image views (\cref{thm:existence}), all the way up to the class-proxy, by decomposing the latter into a concept graph (\cref{cor:ProxyGraphs}).
We ensure the sufficiency (\cref{thm:max_mi}) of the instance and proxy graphs through transitivity recovery (\cref{thm:transitivity_recovery}) by Hausdorff Edit Distance minimization.

\myparagraph{Complementarity Graph} We start by obtaining relational-agnostic representations $\mathbb{Z}_\mathbb{V}$ from the input image $\vectorname{x}$ by encoding each of its views $\vectorname{v} \in \mathbb{V}$ as $f(\vectorname{v})$. We then obtain a Complementarity Graph $\mathcal{G}_c \in \mathbb{G}$ with node features $\mathbb{F}_\mathbb{V} = \mathbb{Z}_\mathbb{V}$, and edge strengths inversely proportional to the value of the mutual information between the local view embeddings $I(\vectorname{z}_i, \vectorname{z}_j)$ that the edge connects. Specifically, we instantiate the edge features $\mathbb{F}_\mathbb{E}$ as learnable $n$-dimensional vectors, all with the same value of $1/|\vectorname{z}_i \cdot \vectorname{z}_j|$. This particular choice of calculating the edge weights is valid under the assumption that $f$ is a semantically consistent function.
Intuitively, local views with low mutual information (MI) are complementary to each other, while ones with high MI have a lot of redundancy. The \emph{inductive bias} behind the construction of such a graph is that, strengthening the connections among the complementary pairs suppresses the flow of redundant information during message passing, thereby reducing the search space in $\mathbb{G}$ for finding $\mathcal{G}^*$. The global view $\vectorname{g}$, however, is connected to all the local-views with a uniform edge weight of $\mathbb{1}^n$. This is because the local-to-global emergent information, \ie, $I(\vectorname{l}_i\vectorname{l}_j...; \vectorname{g})$, is what we want the model to discover through the learning process, and hence, should not incorporate it as an inductive bias.

\myparagraph{Semantic Relevance Graph} We compute the Semantic Relevance Graph $\mathcal{G}_s$ by propagating $\mathcal{G}_c$ through a Graph Attention Network (GAT) \cite{Veličković2018GAT}. The node embeddings obtained from the GAT correspond to the $\varphi(\vectorname{z})$ in our theorems. We ensure that $\varphi$ is transitivity recovering by minimizing the Learnable Hausdorff Edit Distance between the instance and the proxy graphs (\cref{cor:ProxyGraphs}), which is known to take into account the degree of local-to-global transitivity for dissimilarity computation \cite{Riba2021ParamHED}. Below, we discuss this process in further detail.

\myparagraph{Proxy / Concept Graph} We obtain the proxy / concept graph for each class via an online clustering of the Semantic Relevance node and edge embeddings ($\varphi_n(\vectorname{z})$, $\varphi_e(\vectorname{z})$ respectively) of the train-set instances of that class, using the Sinkhorn-Knopp algorithm \cite{Cuturi2013SinkhornKnopp, NeurIPS2020SwAV}. We set the number of node clusters to be equal to $|V|$. The number of edge clusters (connecting the node clusters) is equal to $|V|(|V| - 1) / 2$. Note that this is the case because all the graphs throughout our pipeline are complete. Based on our theory of transitivity recovery, sets of semantically relevant nodes should form cliques in the semantic relevance and proxy graphs, and remain disconnected from the irrelevant ones by learning an edge weight of $\mathbb{0}^n$. We denote the set of class proxy graphs by $\mathbb{P} = \{\mathcal{G}_{\vectorname{P}_1}, \mathcal{G}_{\vectorname{P}_2}, ..., \mathcal{G}_{\vectorname{P}_k}\}$, where $k$ is the number of classes.

\myparagraph{Inference and Learning objective} To recover end-to-end explainability, we need to avoid converting the instance and the proxy graphs to abstract vector-valued embeddings in $\mathbb{R}^n$ at all times. For this purpose, we perform matching between $\mathcal{G}_s$ and $\mathcal{G}_\vectorname{p} \in \mathbb{P}$ via a graph kernel \cite{Dutta2019SGE}. This kernel trick helps us bypass the computation of the abstract relationship $\xi: \mathbb{G} \to \mathbb{R}^n$ \footnote{Graph kernels, in general, help bypass mappings $\xi: \mathbb{G} \to \mathcal{H}$, where $\mathcal{H}$ is the Hilbert space \cite{Dutta2019SGE}.}, and perform the distance computation directly in $\mathbb{G}$, using only the graph-based decompositions, and thereby keeping the full pipeline end-to-end explainable.

We choose to use the Graph Edit Distance (GED) as our kernel \cite{Neuhaus2007EditDistanceKernel}. 
Although computing GED has been proven to be NP-Complete \cite{Wagner1974EditDistanceNPC}, quadratic time Hausdorff Edit Distance (HED) \cite{Fischer2015HED} and learning-based approximations \cite{Rica2021LearningEditDistance} have been shown to be of practical use. However, such approximations are rather coarse-grained, as they are either based on linearity assumptions \cite{Rica2021LearningEditDistance}, or consider only the substitution operations \cite{Fischer2015HED}. Learnable (L)-HED \cite{Riba2021ParamHED} introduced two additional learnable costs for node insertions and deletions using a GNN, thereby making it a more accurate approximation. We observe that our $\varphi$ already performs the job of the encoder prefix in L-HED (\cref{subsec:graph_edit_costs}).
Thus, we simply apply a fully connected MLP head $\psi: \mathcal{M} \to \mathbb{R}$ with shared weights on the $\varphi(\vectorname{z})$ embeddings and the vertices of $\mathcal{G}_p \in \mathbb{P}$
to compute the node insertion and deletion costs.
Thus, in our case, L-HED takes the following form:
\begin{equation}
    \begin{aligned}
    \label{eq:lhed}
    h(\mathcal{G}_s, \mathcal{G}_p) = \alpha \Biggl(\sum_{\vectorname{u} \in \mathcal{G}_s} \min_{\vectorname{v} \in \mathcal{G}_p} c(\vectorname{u}, \vectorname{v}) +
    \sum_{\vectorname{v} \in \mathcal{G}_p} \min_{\vectorname{u} \in \mathcal{G}_s} c(\vectorname{u}, \vectorname{v}) \Biggr)
\end{aligned}
\end{equation}
\begin{equation*}
    c(\vectorname{u}, \vectorname{v}) = 
    \begin{cases}
        ||\psi(\vectorname{u})||, \text{ for deletions ($\vectorname{u} \to \varepsilon$),}\\
        ||\psi(\vectorname{v})||, \text{ for insertions ($\varepsilon \to \vectorname{v}$),} \\
        ||\vectorname{u} - \vectorname{v}|| / 2, \text{ for substitutions},
    \end{cases}
\end{equation*}
where $\alpha = 1/(2|\mathbb{V}|)$, and $\varepsilon$ is the empty node \cite{Riba2021ParamHED}.
We use $h(\cdot, \cdot)$ as a distance metric for the Proxy Anchor Loss \cite{Kim2020CVPR}, which forms our final learning objective, satisfying \cref{thm:max_mi}. 
%
Below we discuss how transitivity recovery additionally \emph{guarantees} robustness to noisy views.

\subsection{Robustness}
\label{subsec:robustness}
Let $\mathcal{D}$ be the data distribution over $\mathbb{X} \times \mathbb{Y}$. Consider a sample $\vectorname{x} \in \mathbb{X}$ composed of views $\mathbb{V} = \{\vectorname{v}_1, \vectorname{v}_2, ..., \vectorname{v}_k\}$ with label $\vectorname{y} \in \mathbb{Y}$. A view $\vectorname{v} \in \mathbb{V}$ is said to be \emph{noisy} if $\operatorname{label}(\vectorname{v}) \neq y$ \cite{chuang2022RINCE}.
The fraction of noisy views for a given sample is denoted by $\eta$.
Let $f$ be a classifier that minimizes the empirical risk of the mapping $f(\vectorname{x}) \mapsto \vectorname{y}$ in the ideal noise-free setting.
Let $\mathbb{V}^\eta$ be the set of noisy views in $\mathbb{V}$ with topology $\tau^\eta$. Let $\mathbb{V}^* = \mathbb{V} - \mathbb{V}^\eta$ be the set of noise free (and hence, transitively related - \cref{thm:transitivity_recovery}) views with topology $\tau^*$.
Recall that $\mathcal{G}^*$ is the optimal graph of $\mathbb{V}$ satisfying the sufficiency criterion in \cref{thm:max_mi}, and $\mathcal{G}_\vectorname{p} \in \mathbb{P}$ is its proxy graph.

\begin{theorem}
    In the representation space of $\varphi$, the uncertainty in estimating the topology $\tau^\eta$ of $\mathbb{V}^\eta$ is exponentially greater in the error rate $\epsilon$, than the uncertainty in estimating the topology $\tau^*$ of $\mathbb{V}^*$.
    \label{thm:robustness}
\end{theorem}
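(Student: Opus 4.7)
The plan is to model topology estimation as a conditional entropy problem on edge-indicator vectors, bound the two entropies separately using the transitivity recovering property of $\varphi$ established in \cref{thm:transitivity_recovery}, and compare them to obtain the claimed exponential separation. I would represent the topology of any view set as a binary vector $\tau \in \{0,1\}^{\binom{k}{2}}$ encoding edge presence according to the semantic relevance criterion of \cref{def:semantic_relevance}, so that the uncertainty in recovering $\tau$ from $\varphi$'s output becomes the conditional entropy $H(\tau \mid \varphi(\mathbb{V}))$.

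For the clean set $\mathbb{V}^*$, all views are transitively related by construction, so \cref{thm:transitivity_recovery} applies to every pair: $\varphi$ drives $I(\vectorname{v}_i \vectorname{v}_j ; \vectorname{g} \mid \varphi(\vectorname{z}_i) \varphi(\vectorname{z}_j))$ to near zero on $\mathbb{V}^*$. Combined with the observation from \cref{subsec:pipeline} that transitively related nodes collapse into cliques in $\mathcal{G}_s$, the set of topologies admissible under $\varphi(\mathbb{V}^*)$ reduces to a small constant, yielding $H(\tau^* \mid \varphi(\mathbb{V}^*)) \leq C$ for some absolute constant $C$ absorbing the residual clique ambiguity.

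For the noisy set $\mathbb{V}^\eta$, the transitivity premise of \cref{thm:transitivity_recovery} fails: by definition noisy views satisfy $\operatorname{label}(\vectorname{v}) \neq \vectorname{y}$ and hence do not contribute to the emergence $I(\vectorname{v}_i \vectorname{v}_j ; \vectorname{g})$, as in \cref{def:emergence}. Therefore $\varphi$ supplies no information-theoretic reduction on edges within $\mathbb{V}^\eta$, and each such edge effectively behaves as an $\epsilon$-crossover channel. Aggregating over $\binom{|\mathbb{V}^\eta|}{2}$ potential edges via Fano-type inequalities and using $|\mathbb{V}^\eta| \approx \eta k$, I would obtain a lower bound on $H(\tau^\eta \mid \varphi(\mathbb{V}^\eta))$ that scales like $\Omega(2^{c \epsilon})$ in the error rate, producing the advertised exponential gap once divided by the clean-side constant.

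The main obstacle will be formalizing the per-edge error model on noisy views: the claim that $\varphi$ behaves like an $\epsilon$-noisy predictor on $\mathbb{V}^\eta$ must rule out trivial strategies (\eg, always predicting absent) that could deterministically resolve some edges. I would address this by noting that the L-HED objective in \cref{eq:lhed} only back-propagates through the matched portion of the proxy graph, so $\varphi$ receives no useful gradient on noisy-view edges; by a symmetry-over-permissible-topologies argument, its marginal per-edge predictor is then at best at chance. A secondary technicality is handling the mild inter-edge correlation introduced by GAT message passing, which I would absorb into a chain-rule expansion of the joint entropy, losing only a multiplicative constant that does not affect the exponential separation.
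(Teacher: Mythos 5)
Your high-level setup --- separating $\mathbb{V}^*$ from $\mathbb{V}^\eta$ and arguing that transitivity collapses the admissible topologies on the clean side while leaving the noisy side unconstrained --- matches the paper's strategy (its \cref{lemma:separability} shows the cut-set between the two vanishes under $\varphi$, and its ``continents vs.\ islands'' decomposition plays the role of your clique-collapse observation). But the quantitative core of your argument has a genuine gap. You model each noisy edge as an $\epsilon$-crossover channel and claim that aggregating Fano-type bounds over $\binom{|\mathbb{V}^\eta|}{2}$ edges yields $H(\tau^\eta \mid \varphi(\mathbb{V}^\eta)) = \Omega(2^{c\epsilon})$. This cannot work: the conditional entropy of a binary edge-indicator vector of length $\binom{m}{2}$ is bounded above by $\binom{m}{2}$ bits regardless of $\epsilon$, and the per-edge contribution $h(\epsilon)$ is subexponential in $\epsilon$ (indeed vanishing as $\epsilon \to 0$). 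An exponential dependence on the error rate simply cannot emerge from summing per-edge entropies, so the advertised separation is not established by this route. A secondary gap is on the clean side: bounding $H(\tau^* \mid \varphi(\mathbb{V}^*))$ by a constant requires more than per-pair transitivity recovery --- you must also control the edges \emph{between} distinct transitive cliques, which the paper handles with a dedicated redundancy lemma (\cref{lemma:reducibility}) plus Tur\'an's theorem, reducing the count of admissible clean topologies to $\mathcal{O}(n)$ against $2^{\eta^2/4}$ for the noisy set.

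The paper avoids your difficulty by not working with entropy at all: it measures ``uncertainty'' as the PAC confidence $\delta$ attainable at a given sample budget via Haussler's theorem, where the hypothesis-class size enters only logarithmically ($\log_2 \mathcal{O}(n)$ vs.\ $\log_2 \mathcal{O}(2^\eta) = \eta$) and, crucially, the probability mass of any particular noisy topology is itself approximated by $\epsilon$, which injects the extra factor of $\epsilon$ into the denominator of \cref{eqn:noisy_sc}. Inverting the two sample-complexity bounds gives $\delta_\eta \geq \mathcal{O}(2^{-\epsilon^3})$ against $\delta_* = \mathcal{O}(2^{-\epsilon})$, which is where the exponential-in-$\epsilon$ statement actually comes from. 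If you want to salvage an entropy-based version, you would need to redefine the uncertainty as the log-cardinality of the version space consistent with $M$ observations and re-derive the $\epsilon$-dependence through the sampling model --- at which point you have essentially reconstructed the paper's argument.
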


\textit{Intuition}:
This theorem is based on the fact the number of topologies that a set of noisy views can assume, is exponentially more (in the desired error rate) to what can be assumed by a set of transitive views. Since $\varphi$ is explicitly designed to be transitivity recovering, making predictions using $\varphi$ based on noisy views would go directly against its optimization objective of reducing the output entropy (due to the exponentially larger family of topologies to choose from). So, to minimize the uncertainty in \cref{eq:lhed}, $\varphi$ would always make predictions based on transitive views, disregarding all noisy views in the process, satisfying the formal requirement of robustness (\cref{def:robustness}).
In other words, the following property of TRD allowed us to arrive at this result 
-- the structural priors that we have on the transitive subgraphs make them the most optimal candidates for reducing the prediction uncertainty, relative to the isolated subgraphs of noisy views which do not exhibit such regularities.

\begin{figure*}[!t]
    \centering
    \includegraphics[width=\textwidth]{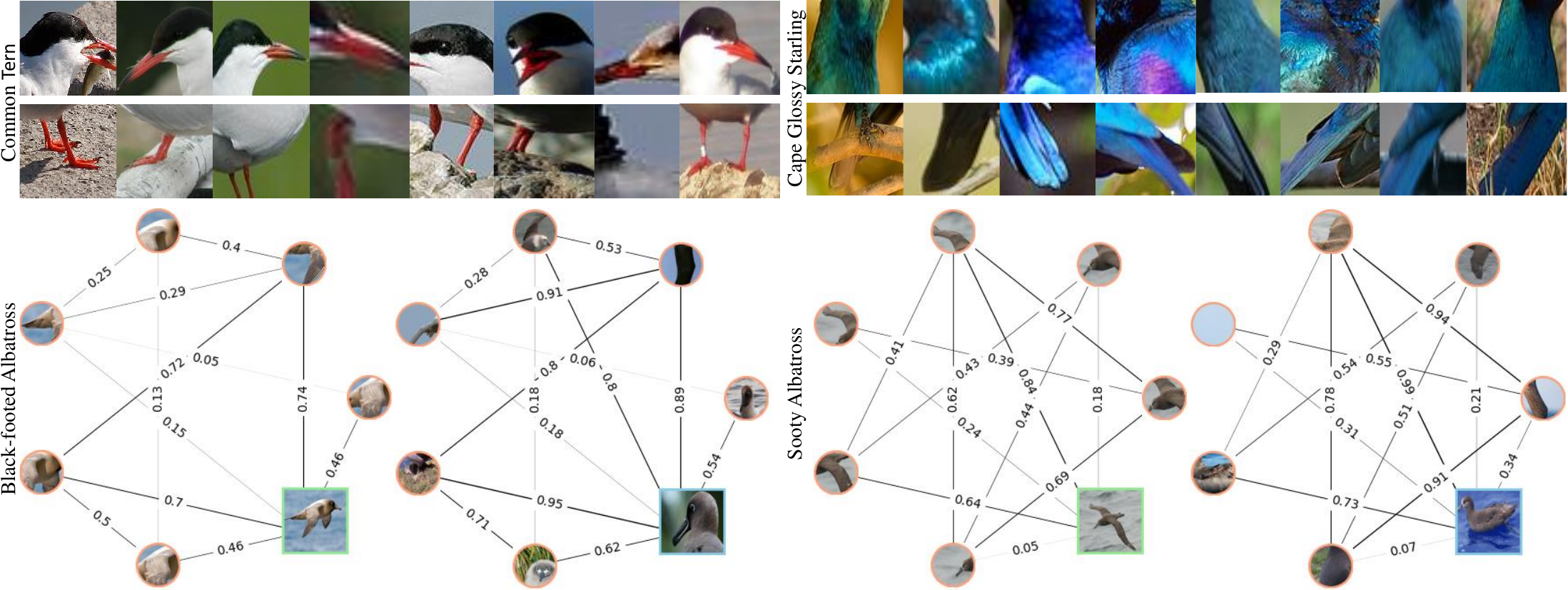}
    \caption{
    Sample TRD explanations on CUB.
    Top: 8 nearest neighbors (columns) of the top-2 proxy nodes / concepts (rows) with highest emergence (\cref{def:emergence}). Bottom: Instance (global view with green border) and proxy (global view with blue border) explanation graphs constructed using the top-6 nodes (nearest train set neighbor for the proxy nodes) with highest emergence.
    The edge-weights are $L_2$-norms of the edge embeddings (best visible on screen with zooming).
    }
    \label{fig:qualitative_results}
\end{figure*}

\section{Experiments}

\subsection{Experimental Settings \& Datasets}

\myparagraph{Implementation Details}
For obtaining the global view, we follow \cite{Wei2017SCDA, Zhang2021MMAL} by selecting the smallest bounding box containing the largest connected component of the thresholded final layer feature map obtained from an ImageNet-1K \cite{Deng2009ImageNetAL} pre-trained ResNet50 \cite{He2016ResNet}, which we also use as the relation-agnostic encoder $f$. The global view is resized to $224 \times 224$. We then obtain 64 local views by randomly cropping $28 \times 28$ regions within the global crop and resizing them to $224 \times 224$.
We use a 8-layer Graph Attention Network (GAT) (\cite{Veličković2018GAT}, with 4 attention heads in each hidden layer, and normalized via GraphNorm \cite{Cai2021GraphNorm} to obtain the Semantic Relevance Graph. We train TRD for 1000 epochs using the Adam optimizer, at an initial learning rate of 0.005 (decayed by a factor of 0.1 every 100 epochs), with a weight decay of $5 \times 10^{-4}$. We generally followed \cite{Veličković2018GAT, Chaudhuri2022RelationalProxies} for choosing the above settings.
We implement TRD with a single NVIDIA GeForce RTX 3090 GPU, an 8-core Intel Xeon processor, and 32 GBs of RAM.

\myparagraph{Datasets}
To verify the generalizability and scalability of our method, we evaluate it on small, medium and large-scale FGVC benchmarks. We perform small-scale evaluation on the Soy and Cotton Cultivar datasets \cite{Yu2020SoyCottonCultivar}, while we choose FGVC Aircraft \cite{Maji2013FGVCAircraft}, Stanford Cars \cite{krause2013StanfordCars}, CUB \cite{Wah2011CUB}, and NA Birds \cite{horn2015NABirds} for medium scale evaluation. For large-scale evaluation, we choose the iNaturalist dataset \cite{Horn2018iNat}, which has over 675K train set and 182K test set images.

\subsection{Interpretability \& Robustness}

\myparagraph{Qualitative Results} \cref{fig:qualitative_results} shows sample explanations obtained using TRD on the CUB dataset. The top rows represent proxy nodes (concepts), and the columns represent the 8 nearest neighbors of the corresponding proxy nodes, which is highly consistent across concepts and classes.
In the bottom are instance and proxy explanation graphs (top-6 nodes with highest emergence) from very similar-looking but different classes. For the proxies, the nearest train set neighbors to the node embeddings are visualized. Even with similar appearance, the graphs have very different structures. This shows that TRD is able to recover the discriminative relational properties of an image, and encode them as graphs. We provide additional visualizations in the supplementary.

\begin{figure}[!ht]
    \centering
    \includegraphics[width=\textwidth]{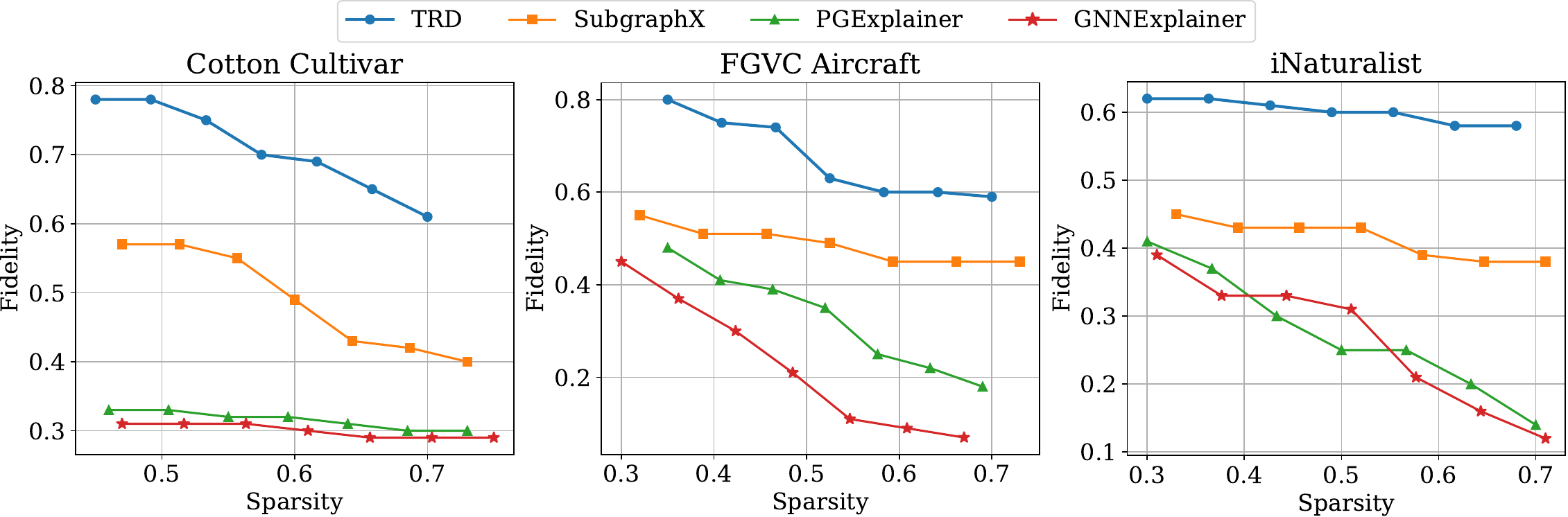}
    \caption{Fidelity \textit{vs} Sparsity curves of TRD and SOTA GNN interpretability methods.}
    \label{fig:fidelity_sparsity}
\end{figure}

\myparagraph{Quantitative Evaluation} We quantitatively evaluate the explanations obtained using TRD based on the Fidelity (relevance of the explanations to the downstream task) \textit{vs} Sparsity (precision of the explanations) plots for the generated explanations, which is a standard way to measure the efficacy of GNN explainability algorithms \cite{Yuan2021SubgraphX, Pope2019FidelitySparsity, Yuan2020GNNExpSurvey}. We compare against SOTA, generic GNN explainers, namely SubgraphX \cite{Yuan2021SubgraphX}, PGExplainer \cite{Luo2020PGExplainer}, and GNNExplainer \cite{Ying2019GNNExplainer}, on candidate small (Cotton), medium (FGVC Aircraft), and large (iNaturalist) scale datasets, and report our findings in \cref{fig:fidelity_sparsity}.
TRD consitently provides the highest levels of Fidelity not only in the dense, but also in the high sparsity regimes, outperforming generic GNN explainers by significant margins. Unlike the generic explainers, since TRD takes into account the edit distance with the proxy graph for computing the explanations, it is able to leverage the class-level topological information, thereby achieving higher precision. The supplementary contains results on the remaining datasets, details on the Fidelity and Sparsity metrics, and experiments on the functional equivalence of TRD with post-hoc explanations.

\begin{figure}[!ht]
    \centering
    \includegraphics[width=0.49\columnwidth]{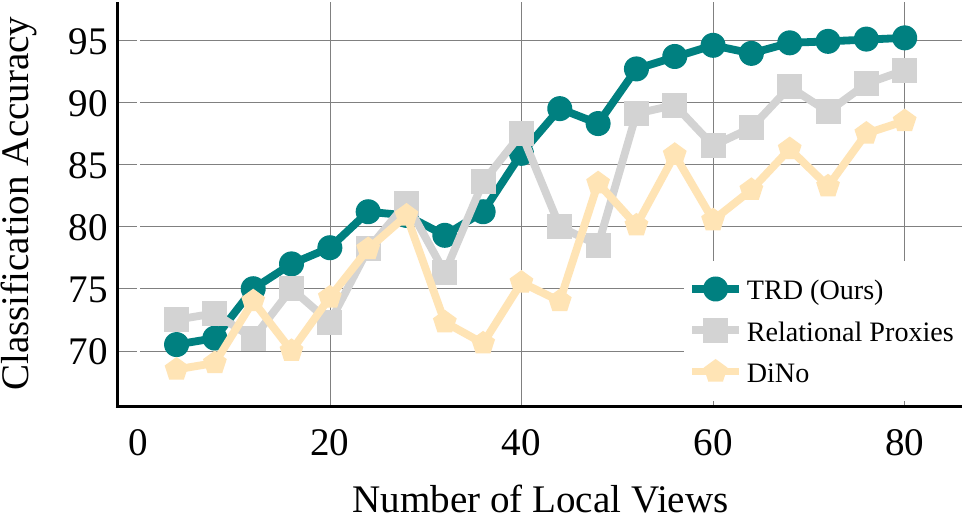}
    \includegraphics[width=0.49\columnwidth]{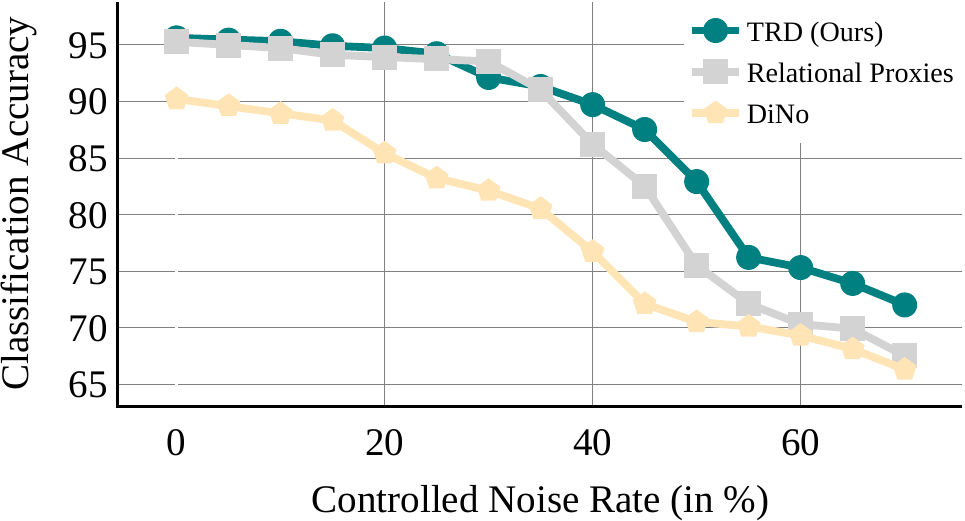}
    \caption{Classification accuracies of relational learning based methods with different noise models on FGVC Aircraft. Left: Increasing number of local views sampled randomly across the entire image (instead of just the global view). Right: Increasing the \emph{proportion} of noisy views in a controlled manner by sampling from the region outside of the global view.
    }
    \label{fig:robustness}
\end{figure}

\myparagraph{Robustness to Noisy-Views}
\cref{fig:robustness} shows the performance of TRD on FGVC Aircraft under the following two noise injection models -- (1) The local views are sampled uniformly at random across the entire image, instead of just the global view. This model thus randomizes the amount of label information present in $|\mathbb{L}|$. (2) A fraction $\eta$ of the local views are sampled from outside the global view, and the remaining ones ($1 - \eta$) come from within the global view, where $\eta$ is the variable across experiments. This puts a fixed upper bound to the amount of label information in $|\mathbb{L}|$. It can be seen that under model (1), TRD is significantly more stable to changing degrees of uncertainty in label information compared to other relational learning-based methods. Under model (2), TRD outperforms existing methods as the amount of label information is decreased by deterministically increasing the noise rate.
As discussed in \cref{subsec:robustness} and formally proved in \cref{app:robustness}, transitivity acts as a semantic invariant between the instance and the proxy graph spaces, which is a condition that subgraphs induced by the noisy views do not satisfy, leading to the observed robustness.

\myparagraph{Effect of Causal Interventions} Here, we aim to understand the behaviour of our model under corruptions that affect the underlying causal factors of the data generating process.
To this end, we train TRD by replacing a subset of the local views for each instance with local views from other classes, both during training and inference. As the proxies are obtained via a clustering of the instance graphs, these local views consequently influence the proxy graphs. We report our quantitative and qualitative findings in \cref{tab:causality} and \cref{fig:causality} (\cref{sec:causality}) respectively.
TRD significantly outperforms Relational Proxies at all noise rates (percentage of local views replaced), and the gap between their performances widens as the percentage of corruption increases (\cref{tab:causality}). Qualitatively (\cref{fig:causality}), our model successfully disregards the views introduced from the negative class at both the instance and proxy level. Such views can be seen as being very weakly connected to the global view, as well as the correct set of local views that actually belong to that class. Under this causal intervention, the TRD objective is thus equivalent to performing classification while having access to only the subgraph of clean views from the correct class.

\begin{table}[!ht]
    \centering
    \begin{tabular}{cccccc}
        \toprule
        $\eta$ & 10 & 20 & 30 & 40 & 50 \\
        \midrule
        Relational Proxies & 93.22 & 87.12 & 79.35 & 70.99 & 63.60 \\
        \midrule
        \textbf{TRD (Ours)} & \textbf{94.90} & \textbf{91.54} & \textbf{82.80} & \textbf{76.35} & \textbf{70.55} \\
        \bottomrule
    \end{tabular}
    \vspace{10pt}
    \caption{Quantitative performance comparison between Relational Proxies and TRD with increasing rates of corruption ($\eta$: percentage local views from different classes).}
    \label{tab:causality}
\end{table}

\subsection{FGVC Performance}

\begin{table*}[!t]
    \centering
    \resizebox{\textwidth}{!}{
    \begin{tabular}{lccccccc}
         \toprule
         \multirow{2}{*}{\textbf{Method}} & \multicolumn{2}{c}{\textbf{Small}} & \multicolumn{4}{c}{\textbf{Medium}} & \textbf{Large} \\
         \cmidrule(r){2-3} \cmidrule(r){4-7} \cmidrule{8-8}
         & \textbf{Cotton} & \textbf{Soy} & \textbf{FGVC Aircraft} & \textbf{Stanford Cars} & \textbf{CUB} & \textbf{NA Birds} & \textbf{iNaturalist} \\
         \midrule
         MaxEnt, NeurIPS'18 & - & -  & 89.76 & 93.85 & 86.54 & - & - \\
         DBTNet, NeurIPS'19 & - & - & 91.60 & 94.50 & 88.10 & - & - \\
         StochNorm, NeurIPS'20 & 45.41 & 38.50 & 81.79 & 87.57 & 79.71 & 74.94 & 60.75 \\

         MMAL, MMM'21 & 65.00 & 47.00 & 94.70 & 95.00 & 89.60 & 87.10 & 69.85 \\
         FFVT, BMVC'21 & 57.92 & 44.17 & 79.80 & 91.25 & 91.65 & 89.42 & 70.30 \\
         CAP, AAAI'21 & - & - & 94.90 & 95.70 & 91.80 & 91.00 & - \\
         GaRD, CVPR'21 & 64.80 & 47.35 & 94.30 & 95.10 & 89.60 & 88.00 & 69.90 \\
        WTFocus, ECCV'22 & - & - & 94.70 & 95.30 & 90.80 & - & - \\
        SR-GNN, TIP'22 & - & - & 95.40 & 96.10 & 91.90 & - & - \\
         TransFG, AAAI'22 & 45.84 & 38.67 & 80.59 & 94.80 & 91.70 & 90.80 & 71.70 \\
         Relational Proxies, NeurIPS'22 & 69.81 & 51.20 & 95.25 & 96.30 & 92.00 & 91.20 & 72.15 \\
        PMRC, CVPR'23 & - & - & 94.80 & 95.40 & 91.80 & - & - \\
         \midrule
         \textbf{TRD (Ours)} & \textbf{70.90} \textpm~0.22 & \textbf{52.15} \textpm~0.12 & \textbf{95.60} \textpm~0.08 & \textbf{96.35} \textpm~0.03 & \textbf{92.10} \textpm~0.04 & \textbf{91.45} \textpm~0.12 & \textbf{72.27} \textpm~0.05 \\
         \bottomrule
    \end{tabular}}
    \caption{Comparison with SOTA on standard small, medium, and large scale FGVC benchmarks.} 
    \label{tab:sota_comparison}
\end{table*}

\myparagraph{Comparison with SOTA} \cref{tab:sota_comparison} compares the performance of TRD with SOTA on benchmark FGVC datasets.
Some of the most commonly used approaches involve some form of regularizer \cite{dubey2018MaxEntFGVC, kou2020StochNorm}, bilinear features \cite{zheng2019DBT}, or transformers \cite{He2022TransFG, Wang2021FFVT}. However, methods that use multi-view information \cite{Zhang2021MMAL, Liu2022WhereToFocus}, especially their relationships \cite{Behera2021CAP, Zhao2021GaRD, Chaudhuri2022RelationalProxies, Bera2022SRGNN, Tang2023PMRC} for prediction have been the most promising, although their final image and class representations are abstract.
TRD can be seen to provide the best performance across all benchmarks,
while also being fully interpretable.
We attribute this to
the robustness of our method to noisy views, which is known to have a significant impact on metric learning algorithms \cite{chuang2022RINCE}. 

\myparagraph{Ablation Studies}
\cref{tab:ablations} shows the contributions of the components of TRD, namely Complementarity Graph: CG (\cref{subsec:pipeline}), Proxy Decomposition: PD (\cref{cor:ProxyGraphs}), and Transitivity Recovery: TR (\cref{thm:transitivity_recovery})
%
\begin{wraptable}[12]{r}{0.5\textwidth}
    \centering
    \resizebox{0.5\textwidth}{!}{
    \begin{tabular}{ccccccc}
        \toprule
         ID & \textbf{CG} & \textbf{PD} & \textbf{TR} & \textbf{Aircraft} & \textbf{Cotton} & \textbf{Soy} \\
         \midrule
         0. & & & & 94.60 & 67.70 & 46.00 \\
         1. & \ding{51} & & & 95.10 & 69.60 & 47.70 \\
         2. &  & \ding{51} & & 94.94 & 68.31 & 46.70 \\
         3. & \ding{51} & \ding{51} & & 95.15 & 69.70 & 50.32 \\
         4. & & \ding{51} & \ding{51} & 95.40 & 70.10 & 50.99 \\
         \midrule
         5. & \ding{51} & \ding{51} & \ding{51} & \textbf{95.60} & \textbf{70.90} & \textbf{52.15} \\
         \bottomrule
    \end{tabular} }
    \caption{Ablations on the key components of TRD
    namely Complementarity Graph (CG), Proxy Decomposition (PD), and TR (Transitivity Recovery).
    }
    \label{tab:ablations}
\end{wraptable}
to its downstream classification accuracy on the FGVC Aircraft, Cotton, and Soy Cultivar datasets.
Row 0 is the baseline with none of our novel components. Row 1 corresponds to the setting where the proxy representation is still abstract, \ie, in $\mathbb{R}^n$. We concatenate the nodes in $\mathcal{G}_s$  and propagate it through a 1-hidden layer MLP to summarize it in $\mathbb{R}^n$.
As we decompose the proxy into a graph (PD, Row 2) based on the instance node and edge clustering-based approach as discussed in \cref{subsec:pipeline} for class-level interpretability, we can see that the performance drops. This happens because of a lack of prior knowledge about the proxy-graph structure, which increases room for noisy views to be more influential, harming classification robustness. To alleviate this, we introduce TR based on our findings from \cref{thm:transitivity_recovery}, which effectively wards off the influence of noisy views.
With PD, Rows 3 and 4 respectively show the individual contributions of CG, and enforcing the Transitivity invariant between instance and proxy graphs.
Row 5 shows that TRD is at its best with all components included.

\section{Conclusion \& Discussion}
We presented Transitivity Recovering Decompositions (TRD),
an approach for learning interpretable relationships for FGVC.
We theoretically showed the existence of
semantically equivalent graphs
for abstract relationships, and derived their key information theoretic and topological properties. TRD is a search algorithm in the space of graphs that looks for solutions that satisfy the above properties, providing a concrete, human understandable representation of the learned relationships.
Our experiments revealed that TRD not only provides 
end-to-end transparency, all the way up to the class-proxy representation, but also achieves state-of-the-art results on small, medium, and large scale benchmark FGVC datasets, a rare phenomenon in the interpretability literature.
We also showed that our method is robust to noisy input views, both theoretically and empirically, which we conjecture to be a crucial factor behind its effectiveness.

\myparagraph{Limitations}
Although robust to noisy views, our method is not fully immune to spurious correlations (\cref{fig:qualitative_results}, rightmost graph -- the wing and the sky are spuriously correlated; additional examples in the supplementary). Combined with recent advances in learning decorrelated representations \cite{roth2023disentanglement}, we believe that our method can overcome this limitation while remaining fully interpretable.

\myparagraph{Societal Impacts} Our method makes a positive societal impact by adding interpretability to existing fine-grained relational representations, in a provably robust manner. Although we are not aware of any specific negative societal impacts that our method might have, like most deep learning algorithms, our method is susceptible to the intrinsic biases of the training set.

\section*{Acknowledgements}
This work was supported by the MUR PNRR project FAIR - Future AI Research (PE00000013) funded by the NextGenerationEU.

\clearpage

\bibliographystyle{ieee_fullname}
\bibliography{bibliography}

\newpage
\appendix
\onecolumn

\section{Appendix}

Below, we provide some theoretical results and quantification of empirical observations from the existing literature \cite{Federici2020MIB, Caron2021DiNo, Chaudhuri2022RelationalProxies} that are used in the main text.

\subsection{Identities of Mutual Information}
\label{subsec:mi_idt}
Let three random variables $\vectorname{x}$, $\vectorname{y}$, and $\vectorname{z}$ form the Markov Chain $\vectorname{x} \rightarrow \vectorname{z} \rightarrow \vectorname{y}$. Then,
\begin{itemize}
    \item Positivity:
    \begin{equation*}
        I(\vectorname{x}; \vectorname{y}) \geq 0, I(\vectorname{x}; \vectorname{y} | \vectorname{z}) \geq 0
    \end{equation*}
    
    \item Chain Rule:
    \begin{equation*}
        I(\vectorname{x}; \vectorname{y}) = I(\vectorname{x}; \vectorname{y} | \vectorname{z}) + I(\vectorname{x}; \vectorname{z})
    \end{equation*}
    \item Data Processing Inequality:
    \begin{equation*}
        I(\vectorname{x}; \vectorname{z}) \geq I(\vectorname{x}; \vectorname{y})
    \end{equation*}
\end{itemize}

\subsection{Information Gap}
\label{subsec:inf_gap}

The Information Gap \cite{Chaudhuri2022RelationalProxies} is the uncertainty that remains about the label information $\vectorname{y}$ given a relation-agnostic representation $\vectorname{z}$. Quantitatively,
\begin{equation*}
    I(\vectorname{x}; \vectorname{r} | \vectorname{z}) = I(\vectorname{x}; \vectorname{y}) - I(\vectorname{z}; \vectorname{y})
\end{equation*}

An implication of the above is that the label information in $\vectorname{x}$ can be factorized as follows:
\begin{equation*}
    I(\vectorname{x}; \vectorname{y}) = I(\vectorname{x}; \vectorname{r} | \vectorname{z}) + I(\vectorname{z}; \vectorname{y})
\end{equation*}

\subsection{Sufficiency}
\label{subsec:sufficiency}
A representation $\vectorname{z}$ of $\vectorname{x}$ is sufficient \cite{Federici2020MIB} for predicting its label $\vectorname{y}$ if and only if, given $\vectorname{z}$, there remains no uncertainty about the label of $\vectorname{y}$ of $\vectorname{x}$. Formally,
\begin{equation*}
    I(\vectorname{x}; \vectorname{y} | \vectorname{z}) = 0 \iff I(\vectorname{x}; \vectorname{y}) = I(\vectorname{z}; \vectorname{y})
\end{equation*}


\subsection{Relation-Agnostic Representations - Information Theoretic}
\label{subsec:rel-agn}
An encoder $f$ is said to produce relation-agnostic representations \cite{Chaudhuri2022RelationalProxies} if it independently encodes the global view $\vectorname{g}$ and local views $\vectorname{v} \in \mathbb{L}$ of $\vectorname{x}$ without considering their relationship information $\vectorname{r}$. Quantitatively,
\begin{equation*}
    I(\vectorname{x}; \vectorname{y} | \vectorname{z}) = I(\vectorname{x}; \vectorname{r}) = I(\vectorname{v}_1 \vectorname{v}_2...\vectorname{v}_{|\mathbb{L}|} ; \vectorname{g})
\end{equation*}

\subsection{Distributive property of $f$}
\label{subsec:dist_f}
Since $I(\vectorname{x}; \vectorname{r})$ refers to local-to-global relationships $I(\vectorname{v}_1 \vectorname{v}_2...\vectorname{v}_{|\mathbb{L}|} ; \vectorname{g})$, the definition of relation agnosticity in \cref{subsec:rel-agn} implies that $\vectorname{z} = f(\vectorname{x})$ does not reduce any uncertainty about emergence, \ie,
\begin{equation*}
    I(\vectorname{x}; \vectorname{r}) = I(\vectorname{v}_1 \vectorname{v}_2...\vectorname{v}_{|\mathbb{L}|} ; \vectorname{g}) = I(f(\vectorname{v}_1) f(\vectorname{v}_2)...f(\vectorname{v}_{|\mathbb{L}|}) ; f(\vectorname{g})) = I(\vectorname{z}_1, \vectorname{z}_2, ... \vectorname{z}_{\mathbb{|L|}}; \vectorname{z}_\vectorname{g})
\end{equation*}


\subsection{Sufficient Learner}
\label{subsec:suff_learner}
Following are the necessary and sufficient conditions for a sufficient learner \cite{Chaudhuri2022RelationalProxies}:
\begin{itemize}
    \item It should have a relation-agnostic encoder $f$ producing mappings $\vectorname{x} \mapsto \vectorname{z}$.
    \item There must be a separate encoder $\xi$ that takes relation-agnostic representations $\vectorname{z}$ as input, and bridges the information gap $I(\vectorname{x}; \vectorname{r} | \vectorname{z})$.
\end{itemize}


\subsection{Semantic Consistency}
\label{subsec:sem_const}
A metric space $(\mathcal{M}, d)$ along with its embedding function $\varphi: \mathbb{X} \to \mathcal{M}$ are called semantically consistent, if the following holds:
\begin{equation*}
    I(\vectorname{x}; \vectorname{y} | \varphi(\vectorname{x})) \propto d(\varphi(\vectorname{x}), \vectorname{y})
\end{equation*}
In other words, $\mathcal{M}$ encodes the semantics of $\vectorname{x}$ via its distance function $d$.


\subsection{Semantic Emergence}
\label{subsec:sem_emergence}
Without loss of generality, let us assume that $\vectorname{g}$ only contains semantic information. Semantic Emergence refers to the fact that if, for a particular view, its embedding $\vectorname{z}_v$ encodes a large amount of semantic information, it must also have a large amount of information about the global-view (local-to-global correspondence in DiNo \cite{Caron2021DiNo}). Intuitively, this is the case because all views $\vectorname{v} \in \mathbb{V}$, in some way, arise from $\vectorname{g}$, and local-to-global relationships encode semantic information. Quantitatively,
\begin{equation*}
    d(\vectorname{z}_\vectorname{v}; \vectorname{y}) \propto \frac{1}{I(\vectorname{v}; \vectorname{g})}
\end{equation*}


\subsection{Proofs of Theorems}
\label{subsec:proofs}

\myparagraph{\cref{thm:existence}} For a relational representation $\vectorname{r}$ that minimizes the information gap $I(\vectorname{x}; \vectorname{y} | \mathbb{Z}_\mathbb{V})$,
there exists a metric space $(\mathcal{M}, d)$ that defines a Semantic Relevance Graph
such that:
\begin{align*}
    d(\vectorname{z}_i, \vectorname{y}) &> \delta \wedge d(\vectorname{z}_j, \vectorname{y}) > \delta \Rightarrow \\
    \exists \;\xi: \mathbb{R}^n \xrightarrow{\text{sem}} \mathbb{R}^n &\mid \vectorname{r} = \xi(\varphi(\vectorname{z}_i), \varphi(\vectorname{z}_j)), d(\vectorname{r}, \vectorname{y}) \leq \delta
\end{align*}
where $\mathcal{M} \in \mathbb{R}^n$, $\xrightarrow{\text{sem}}$ and $d$ denote semantically consistent transformations and distance metrics respectively, $\varphi: \mathbb{Z}_\mathbb{V} \xrightarrow{\text{sem}} \mathcal{M}$, and $\delta$ is some finite, empirical bound on semantic distance.

\begin{proof}
    By contradiction.
    
    Let us assume that there exists no metric space $(\mathcal{M}, d)$ that induces a Semantic Relevance Graph on $\mathbb{Z}_\mathbb{V}$ satisfying the given constraint.
    In other words, given $\mathbb{Z}_\mathbb{V}$, we get an upper-bound of $\delta$ on the amount of label information $\vectorname{y}$ that can be captured from $\vectorname{x}$, and there is no function $\xi$ that can reduce the uncertainty about the label any further.

    Since $f$ and $d$ are semantically consistent (\cref{subsec:sem_const}), in information theoretic terms, $\mathbb{Z}_\mathbb{V}$ is a \emph{sufficient representation} (\cref{subsec:sufficiency}), \ie, both sides of the following are true:
    \begin{align}
        I(\vectorname{x}, \vectorname{y}) &= I(\mathbb{Z}_\mathbb{V}, \vectorname{y}) \propto \delta \notag \\
        \iff I(\vectorname{x}; \vectorname{y} | \mathbb{Z}_\mathbb{V}) &\propto \displaystyle\sum_{\vectorname{z} \in \mathbb{Z}_\mathbb{V}}d(\vectorname{z}, \vectorname{y}) = \delta,
        \label{eqn:sufficiency}
    \end{align}
    where $\delta \geq 0$ (\cref{subsec:mi_idt}). However, since $f$ is only a relation-agnostic encoder (as stated in the Preliminaries), it exhibits an information gap (\cref{subsec:inf_gap}), \ie,
    \begin{equation}
        I(\vectorname{x}, \vectorname{y}) = I(\mathbb{Z}_\mathbb{V}, \vectorname{y}) + I(\vectorname{x}; \vectorname{r} | \mathbb{Z}_\mathbb{V}),
        \label{eqn:inf_gap}
    \end{equation}
    where $I(\vectorname{x}; \vectorname{r} | \mathbb{Z}_\mathbb{V}) > 0$.
    This leads to a contradiction between \cref{eqn:sufficiency} and \cref{eqn:inf_gap}, establishing the falsity of the initial assumption and proving \cref{thm:existence}.
\end{proof}

\myparagraph{\cref{thm:max_mi}} The graph $\mathcal{G}^*$ underlying the output relational representation is a member of $\mathbb{G}$ for which the mutual information with the label is maximal. Formally,
\begin{equation*}
    \mathcal{G}^* = \arg \max_{\mathcal{G} \in \mathbb{G}} I(\mathcal{G}, \vectorname{y})
\end{equation*}

\begin{proof}
\label{proof:max_mi}
    By contradiction.


    Let us assume the existence of a graph $\mathcal{G}' \in \mathbb{G}$ such that $I(\mathcal{G}', \vectorname{y}) > I(\mathcal{G}^*, \vectorname{y})$.
    As shown in \cref{thm:existence}, since $\mathcal{G}^*$ encodes both relation-agnostic (nodes) and relation-aware (edges) information, it is a sufficient representation (\cref{subsec:sufficiency}) of $\vectorname{x}$. Hence, for $\vectorname{z} \in \mathbb{Z}_\mathbb{V}$,
    \begin{equation*}
        I(\mathcal{G}^*; \vectorname{y}) = I(\vectorname{x}; \vectorname{r} | \vectorname{z}) + I(\vectorname{z}; \vectorname{y}) = I(\vectorname{x}; \vectorname{y})
    \end{equation*}
    Thus, for a $\mathcal{G}'$ to exist,
    \begin{equation}
        I(\mathcal{G}'; \vectorname{y}) > I(\vectorname{x}; \vectorname{y})
        \label{eqn:gy_ge_xy}
    \end{equation}
    However, computing $\mathcal{G}'$ involves the following Markov Chain:
    \begin{equation*}
        \vectorname{x} \rightarrow \mathbb{Z}_\mathbb{V} \rightarrow \mathcal{G}'
    \end{equation*}
    Thus, from the Data Processing Inequality (\cref{subsec:mi_idt}):
    \begin{equation*}
        I(\vectorname{x}; \mathbb{Z}_\mathbb{V}) \geq I(\mathbb{Z}_\mathbb{V}; \mathcal{G}')
    \end{equation*}
    Using the chain rule of mutual information (\cref{subsec:mi_idt}):
    \begin{align*}
        &\implies I(\vectorname{x}; \vectorname{y}) - I(\vectorname{x}; \vectorname{y} | \mathbb{Z}_\mathbb{V}) \geq I(\mathbb{Z}_\mathbb{V}; \mathcal{G}') \\
        &\implies I(\vectorname{x}; \vectorname{y}) - I(\vectorname{x}; \vectorname{y} | \mathbb{Z}_\mathbb{V}) \geq I(\mathcal{G}'; \vectorname{y}) - I(\mathcal{G}'; \vectorname{y} | \mathbb{Z}_\mathbb{V})
    \end{align*}
    Now, $I(\mathcal{G}'; \vectorname{y} | \mathbb{Z}_\mathbb{V}) = 0$, since $\mathbb{Z}_\mathbb{V} \rightarrow \mathcal{G}'$ ($\mathcal{G}'$ is derived from $\mathbb{Z}_\mathbb{V}$),
    \begin{align*}
        \implies I(\vectorname{x}; \vectorname{y}) - I(\vectorname{x}; \vectorname{y} | \mathbb{Z}_\mathbb{V}) \geq I(\mathcal{G}'; \vectorname{y})
        \implies I(\vectorname{x}; \vectorname{y}) \geq I(\mathcal{G}'; \vectorname{y}) + I(\vectorname{x}; \vectorname{y} | \mathbb{Z}_\mathbb{V})
    \end{align*}
    By the positivity property of mutual information \cite{Federici2020MIB}, $I(\vectorname{x}; \vectorname{y} | \mathbb{Z}_\mathbb{V}) \geq 0$.
    Hence,
    \begin{equation*}
        I(\vectorname{x}; \vectorname{y}) \geq I(\mathcal{G}'; \vectorname{y}),
    \end{equation*}

    which is in contradiction to \cref{eqn:gy_ge_xy}. Hence the initial assumption must be false.
\end{proof}

\myparagraph{\cref{thm:transitivity_recovery}} A function $\varphi(\vectorname{z})$ that reduces the uncertainty $I(\vectorname{v}_i\vectorname{v}_j; \vectorname{g} | \varphi(\vectorname{z}_i)\varphi(\vectorname{z}_j))$ 
about the emergent relationships among the set of views can define a Semantic Relevance Graph by being transitivity recovering.

\begin{proof}
    Without loss of generality, consider three \emph{transitively} related views $\vectorname{v}_1$, $\vectorname{v}_2$, and $\vectorname{v}_3$ such that:
    \begin{equation*}
        d(\vectorname{z}_2, \vectorname{y}) < \delta \land d(\vectorname{z}_1, \vectorname{y}) > \delta \land d(\vectorname{z}_3, \vectorname{y}) > \delta
    \end{equation*}
    In qualitative terms, $\vectorname{v}_2$ plays a strong role in determining $\vectorname{y}$ given $\mathbb{Z}_\mathbb{V}$, while the other views do not. From the property of Semantic Emergence (\cref{subsec:sem_emergence}),
    \begin{equation*}
        d(\vectorname{z}_2, \vectorname{y}) < \delta \Rightarrow I(\vectorname{v}_1\vectorname{v}_2; \vectorname{g}) > \gamma \land I(\vectorname{v}_2\vectorname{v}_3; \vectorname{g}) > \gamma
    \end{equation*}
    However, since $\vectorname{v}_1$, $\vectorname{v}_2$, and $\vectorname{v}_3$ are transitively related, 
    \begin{equation*}
        I(\vectorname{v}_1\vectorname{v}_3; \vectorname{g}) > \gamma
    \end{equation*}
    Now, $f$ being a relation agnostic encoder, it has no way to minimize the uncertainty in the emergence $I(\vectorname{v}_1\vectorname{v}_3; \vectorname{g})$. Hence, using the distributive property of $f$ (\cref{subsec:dist_f}),
    \begin{equation*}
        I(\vectorname{v}_1\vectorname{v}_3; \vectorname{g}) > \gamma \Rightarrow I(\vectorname{z}_1\vectorname{z}_3; \vectorname{z}_\vectorname{g}) > \gamma
    \end{equation*}
    We know that $I(\vectorname{x}; \vectorname{r}) = I(\vectorname{v_1}\vectorname{v_3}; \vectorname{g}) = I(\vectorname{z}_1\vectorname{z}_3; \vectorname{g})$ (\cref{subsec:rel-agn}).

    Consider a function $\varphi(\vectorname{z})$ that maps elements in $\mathbb{Z}_\mathbb{V}$ to $(\mathcal{M}, d)$ such that $d(\varphi(\vectorname{z}_i), \varphi(\vectorname{z}_j)) \propto 1/I(\vectorname{z}_i\vectorname{z}_j; \vectorname{g})$. $\varphi$ is thus \emph{transitivity recovering}, as it allows for the identification of transitivity among $\vectorname{v}_1$, $\vectorname{v}_2$ and $\vectorname{v}_3$ via the distance metric $d$. By allowing for such a distance computation, $\varphi$ minimizes $I(\vectorname{z}_1\vectorname{z}_3; \vectorname{g} | \varphi(\vectorname{z}_1)\varphi(\vectorname{z}_3))$, meaning that the knowledge obtained from $\varphi$ reduces the uncertainty about the emergent relationships.
    This in turn also minimizes $I(\vectorname{v}_1\vectorname{v}_3; \vectorname{g} | \varphi(\vectorname{z}_1)\varphi(\vectorname{z}_3))$, and consequently $I(\vectorname{x}; \vectorname{r}| \varphi(\vectorname{z}_1)\varphi(\vectorname{z}_3))$.

    However, $I(\vectorname{x}; \vectorname{r}) = I(\vectorname{x}; \vectorname{y} | \vectorname{z}_1\vectorname{z}_3)$ (\cref{subsec:rel-agn}). Hence, $\varphi$ also minimizes the information gap $I(\vectorname{x}; \vectorname{y} | \vectorname{z}_1\vectorname{z}_3\varphi(\vectorname{z}_1)\varphi(\vectorname{z}_3))$, which implies that:
    \begin{equation*}
        \exists \;\xi \mid \vectorname{r} = \xi(\varphi(\vectorname{z}_1), \varphi(\vectorname{z}_3)), d(\vectorname{r}, \vectorname{y}) < \delta,
    \end{equation*}
    thus defining a Semantic Relevance Graph (\cref{thm:existence}).
\end{proof}

\subsubsection{Robustness}
\label{app:robustness}





\begin{definition}[\textbf{Robustness}]
A classifier $f_\eta$ is said to be robust (noise-tolerant) if it has the same classification accuracy as that of $f$ under the distribution $\mathcal{D}$ \cite{Ghosh2015Robustness}. Formally,
\begin{equation*}
    P_\mathcal{D}[f(x)] = P_\mathcal{D}[f_\eta(x_\eta)],
\end{equation*}

where $P_\mathcal{D}$ defines the probability distribution over $\mathcal{D}$, and $x_\eta$ is the noisy version of the input $x$.
In other words, $f_\eta$ would base its prediction on the $(1 - \eta)|\mathbb{V}|$ noise-free views only.
\label{def:robustness}
\end{definition}

\begin{lemma}
Let $\mathbb{V}^\eta$ be the set of noisy views in $\mathbb{V}$. Let $\mathbb{V}^* = \mathbb{V} - \mathbb{V}^\eta$ be the set of noise free (and hence, transitively related - Theorem 3) views.
    Then, the cut-set $\mathbb{C} = (\mathbb{V}^*, \mathbb{V}^\eta) = \varnothing$ when mapped to the output space of $\varphi: \mathbb{V} \to \mathbb{G}$.
    \label{lemma:separability}
\end{lemma}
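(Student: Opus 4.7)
The plan is to establish the lemma by combining the transitivity-recovering property of $\varphi$ (\cref{thm:transitivity_recovery}) with the noise definition $\operatorname{label}(\vectorname{v}^\eta)\neq y$. Recall that an edge in the output space of $\varphi$, i.e.\ in the Semantic Relevance Graph, carries a non-negligible weight between $\vectorname{v}_i$ and $\vectorname{v}_j$ precisely when $d(\varphi(\vectorname{z}_i),\varphi(\vectorname{z}_j))$ is small, which by construction is equivalent to $I(\vectorname{z}_i\vectorname{z}_j;\vectorname{z}_{\vectorname{g}})>\gamma$. So I only need to show that no pair $(\vectorname{v}^*,\vectorname{v}^\eta)\in\mathbb{V}^*\times\mathbb{V}^\eta$ can satisfy this inequality.

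First, I would translate the label-level noise condition into an information-theoretic statement. By \cref{subsec:sem_emergence} (Semantic Emergence), $d(\vectorname{z}_{\vectorname{v}},\vectorname{y})\propto 1/I(\vectorname{v};\vectorname{g})$; since $\operatorname{label}(\vectorname{v}^\eta)\neq y$, the semantic distance $d(\vectorname{z}_{\vectorname{v}^\eta},\vectorname{y})$ is large, forcing $I(\vectorname{v}^\eta;\vectorname{g})$ to be small. By the data processing inequality (\cref{subsec:mi_idt}), $I(\vectorname{v}^\eta\vectorname{v}^*;\vectorname{g})\le I(\vectorname{v}^\eta;\vectorname{g})+I(\vectorname{v}^*;\vectorname{g})$, but more usefully the relation-agnostic decomposition (\cref{subsec:rel-agn,subsec:dist_f}) implies that the joint contribution is bounded by the semantic compatibility of the two views with $\vectorname{g}$; when one of them is uninformative about $\vectorname{g}$, any emergent gain vanishes and the joint MI falls below the emergence threshold $\gamma$.

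Second, I would invoke transitivity to close the argument. By \cref{def:transitivity}, a triple $(\vectorname{v}^\eta,\vectorname{v}^*,\vectorname{v}^{**})$ with $\vectorname{v}^*,\vectorname{v}^{**}\in\mathbb{V}^*$ can be transitively related only if all three pairwise emergences exceed $\gamma$; the previous paragraph rules this out for every pair containing $\vectorname{v}^\eta$. Since $\varphi$ is transitivity-recovering (\cref{thm:transitivity_recovery}), its output distance $d(\varphi(\vectorname{z}_i),\varphi(\vectorname{z}_j))$ is small only along transitive pairs; all non-transitive pairs are mapped to distance so large that the corresponding edge weight collapses to $0$ (the $\mathbb{0}^n$ edge discussed in \cref{subsec:pipeline}). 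Hence every candidate cut edge $(\vectorname{v}^*,\vectorname{v}^\eta)$ is removed under $\varphi$, yielding $\mathbb{C}=\varnothing$ as claimed.

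The main obstacle I anticipate is the second paragraph: rigorously converting the purely label-level noise condition $\operatorname{label}(\vectorname{v}^\eta)\neq y$ into the quantitative bound $I(\vectorname{v}^\eta\vectorname{v}^*;\vectorname{g})\le\gamma$. The cleanest route is probably to chain Semantic Emergence with the contrapositive of \cref{def:transitivity}: assume for contradiction that the cut-set is non-empty, pick such an edge together with any companion noise-free view to form a triple, apply the transitivity premise to force $I(\vectorname{v}^\eta;\vectorname{g})>\gamma$, then contradict the emergence-semantics correspondence which forced that quantity to be small. This contrapositive framing also makes the proof symmetric across the two endpoints of the cut edge and avoids having to estimate the joint MI directly.
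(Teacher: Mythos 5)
Your strategy --- translate $\operatorname{label}(\vectorname{v}^\eta)\neq y$ into a statement about emergence via Semantic Emergence, and then let the transitivity-recovering property of $\varphi$ delete the offending edges --- is not the route the paper takes, and it has a genuine gap at exactly the point you flag. The problem is the claim that a pair containing a noisy view must have joint emergence below threshold. By the chain rule, $I(\vectorname{v}^\eta\vectorname{v}^*;\vectorname{g}) = I(\vectorname{v}^*;\vectorname{g}) + I(\vectorname{v}^\eta;\vectorname{g}\mid\vectorname{v}^*) \geq I(\vectorname{v}^*;\vectorname{g})$, so a single highly informative clean partner can push the pair above $\gamma$ no matter how uninformative $\vectorname{v}^\eta$ is. This is precisely the ``degenerate emergence'' case described in the intuition for \cref{thm:transitivity_recovery} and in \cref{subsec:role_of_transitivity}: pairwise emergence can be high when only one member of the pair contributes, which is why the pairwise criterion alone cannot separate noisy from clean views. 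The subadditivity bound $I(\vectorname{v}^\eta\vectorname{v}^*;\vectorname{g})\leq I(\vectorname{v}^\eta;\vectorname{g})+I(\vectorname{v}^*;\vectorname{g})$ you invoke is also not the data processing inequality and is false in general (synergy can make the joint term exceed the sum of the marginals). Your proposed contrapositive fix does not close the hole either: \cref{def:transitivity} is an implication among the three pairs of a triple, and a triple in which the pair $(\vectorname{v}^*,\vectorname{v}^\eta)$ has high joint MI driven entirely by $\vectorname{v}^*$ need not violate that implication, so you cannot force the contradiction $I(\vectorname{v}^\eta;\vectorname{g})>\gamma$ out of it.

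The paper's own proof avoids mutual-information estimates entirely and argues in the metric space induced by $\varphi$. It first establishes that the Hausdorff edit distance on the output of $\varphi$ is a genuine metric rather than a pseudometric (using geometric relation-agnosticity to obtain the separation axioms, hence $h(x,y)>0$ for all $x\neq y$), and then applies the triangle inequality to a 3-clique $\{\vectorname{v}_i,\vectorname{v}_j,\vectorname{v}_k\}$ with $\vectorname{v}_i,\vectorname{v}_k\in\mathbb{V}^*$ and $\vectorname{v}_j\in\mathbb{V}^\eta$: the direct edge $(\vectorname{v}_i,\vectorname{v}_k)$ is strictly shorter than the two-hop path through $\vectorname{v}_j$, so upon convergence only the clean edge survives and no path between transitive views passes through a noisy view. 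If you want to salvage your information-theoretic route, you would need to work with the conditional quantity $I(\vectorname{v}^\eta;\vectorname{g}\mid\vectorname{v}^*)$ --- the marginal contribution of the noisy view --- rather than the joint $I(\vectorname{v}^\eta\vectorname{v}^*;\vectorname{g})$, and argue that the edge weights produced by $\varphi$ track that marginal contribution; that is a different, and stronger, property than what \cref{thm:transitivity_recovery} provides.
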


\begin{proof}
     $\varphi$ restricts $\mathbb{G}$ to the Hausdorff space (Theorem 3, Inference and Learning objective). In the general case, since the Hausdorff distance defines a pseudometric \cite{martinon1990HausdorffMetric}, it satisfies the triangle inequality. In our specific case, it also defines a complete metric.
     This is because $\varphi$ also satisfies
     the requirement of geometric relation agnositicity in the fine-grained scenario under the minimization of the cross-entropy loss with a categorical ground-truth \cite{Chaudhuri2022RelationalProxies}:
     \begin{equation*}
         \forall \vectorname{z}_l \in \mathbb{Z_L} : n_\epsilon(\vectorname{z}_l) \cap n_\epsilon(\vectorname{z}_g) = \phi
     \end{equation*}
     The above implies that the representation space of $\varphi$ satisfies the separation axioms \cite{Nelson1966SeparationAxioms}, and hence:
     \begin{equation*}
         h(x, y) > 0, \forall x \neq y,
     \end{equation*}
     where $h(\cdot, \cdot)$ is the Hausdorff edit distance \cite{Fischer2015HED,Riba2021ParamHED}. Therefore, for $\varphi(\vectorname{z})$, the Hausdorff distance defines a complete metric while satisfying the triangle inequality. Now, consider a set of views $\{\vectorname{v}_i, \vectorname{v}_j, \vectorname{v}_k\}$, 
     where $\vectorname{v}_i, \vectorname{v}_k \in \mathbb{V}^*$, and $\vectorname{v}_j \in \mathbb{V}^\eta$.
     Suppose these nodes define a 3-clique (clique of size $3$) in the instance graph. The completeness of the Hausdorff distance in $\varphi(\vectorname{z})$,
     as well as the satisfaction of the triangle inequality thus gives us:
     \begin{equation*}
         d(\varphi(\vectorname{v}_i), \varphi(\vectorname{v}_k)) < d(\varphi(\vectorname{v}_i), \varphi(\vectorname{v}_j)) + d(\varphi(\vectorname{v}_j), \varphi(\vectorname{v}_k))
     \end{equation*}
     
     Hence, for the aforementioned clique formed by $\vectorname{v}_i, \vectorname{v}_j$, and $\vectorname{v}_k$, the only edge that survives $\varphi$ is $(\vectorname{v}_i, \vectorname{v}_k)$. In other words, upon convergence, $\varphi$ would never produce a path between two transitively related views that involves a noisy view, rendering $\mathbb{V}^*$ and $\mathbb{V}^\eta$ disconnected. This completes the proof of the lemma.
\end{proof}

\begin{corollary}
    There is no path in $\mathcal{G}^*$ connecting two views in $\mathbb{V}^*$ that involves a noisy view from $\mathbb{V}^\eta$. Thus, the elements of $\mathbb{V}^*$ and $\mathbb{V}^\eta$ respectively form homogeneous subgraphs of transitive and noisy views in $\mathcal{G}^*$ and $\mathcal{G}_\vectorname{p}$.
    \label{cor:separability}
\end{corollary}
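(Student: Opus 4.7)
The plan is to lift the edge-level statement of \cref{lemma:separability} to a path-level statement in the instance graph $\mathcal{G}^*$, and then transport the resulting block structure across the clustering step that produces $\mathcal{G}_\vectorname{p}$. The first half is graph-theoretic and should follow immediately from the lemma; the second half requires arguing that the Sinkhorn--Knopp clustering used in the TRD pipeline cannot mix a transitive and a noisy cluster.

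For the path statement, I would argue by contradiction. Suppose there is a path $\vectorname{v}_{i_0}, \vectorname{v}_{i_1}, \ldots, \vectorname{v}_{i_m}$ in $\mathcal{G}^*$ with $\vectorname{v}_{i_0}, \vectorname{v}_{i_m} \in \mathbb{V}^*$ and some intermediate $\vectorname{v}_{i_t} \in \mathbb{V}^\eta$. Let $s$ be the smallest index with $\vectorname{v}_{i_s} \in \mathbb{V}^\eta$; then $s \geq 1$ and $\vectorname{v}_{i_{s-1}} \in \mathbb{V}^*$, so the edge $(\vectorname{v}_{i_{s-1}}, \vectorname{v}_{i_s})$ belongs to the cut-set $\mathbb{C} = (\mathbb{V}^*, \mathbb{V}^\eta)$. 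But \cref{lemma:separability} has just established that this cut-set is empty in the image of $\varphi$, and $\mathcal{G}^*$ lives in that image, contradicting the existence of such an edge. Hence no such path exists, and $\mathbb{V}^*$ and $\mathbb{V}^\eta$ induce two disconnected, homogeneous subgraphs of $\mathcal{G}^*$.

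To transfer this to the proxy graph, I would use the construction of $\mathcal{G}_\vectorname{p}$ described in \cref{subsec:pipeline}: its nodes and edges are obtained by online Sinkhorn--Knopp clustering of the instance node and edge embeddings $\varphi_n(\vectorname{z})$, $\varphi_e(\vectorname{z})$. The proof of \cref{lemma:separability} showed that $\varphi$ realises the separation axioms on $\mathbb{V}^* \cup \mathbb{V}^\eta$, i.e. the $\varepsilon$-neighbourhoods of transitive and noisy embeddings are disjoint, and the Hausdorff distance strictly separates them. Any centroid-style assignment under a complete Hausdorff metric must therefore place a transitive embedding into a cluster whose centroid is the limit of transitive embeddings, and symmetrically for noisy ones; a mixed cluster would violate the triangle inequality argument used in the lemma. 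Consequently the proxy node set splits into disjoint transitive and noisy pools, and by the same cut-set argument applied to edge clusters (whose endpoints inherit the split), no proxy edge can cross between them, giving the same block structure in $\mathcal{G}_\vectorname{p}$.

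The main obstacle I anticipate is not the path extension, which is essentially bookkeeping on top of \cref{lemma:separability}, but the clustering step: one has to make precise that the Sinkhorn--Knopp solution respects the topological separation in $\varphi$'s output. The cleanest way I see is to note that, upon convergence of the TRD objective, the intra-class optimal-transport plan has vanishing mass on pairs whose Hausdorff distance is bounded below by the separation constant guaranteed by the lemma, so any transport between a transitive embedding and a noisy centroid would incur strictly positive cost and cannot appear in the optimum, closing the argument.
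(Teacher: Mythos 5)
Your proposal is correct and, for the instance-graph half, takes essentially the same route the paper intends: the paper states this corollary with no proof of its own, treating it as an immediate consequence of \cref{lemma:separability}, whose proof already ends by observing that $\varphi$ ``would never produce a path between two transitively related views that involves a noisy view.'' Your cut-set bookkeeping (any $\mathbb{V}^*$-to-$\mathbb{V}^*$ path through $\mathbb{V}^\eta$ must traverse an edge of the empty cut $\mathbb{C}$) is exactly the missing formalization of that remark.

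Where you genuinely go beyond the paper is the second half, the transfer to $\mathcal{G}_\vectorname{p}$: the paper simply asserts that the homogeneous block structure survives in the proxy graph and offers no argument. Your sketch --- that the separation established in the lemma forces the Sinkhorn--Knopp assignment to keep transitive and noisy embeddings in disjoint clusters, because any transport plan mixing them would incur cost bounded below by the separation constant --- is a reasonable way to close that gap, and you are right to identify it as the delicate step. Be aware, though, that as stated it still leans on an unproven quantitative claim (a uniform lower bound on the cross-population Hausdorff distance at convergence, rather than mere disjointness of $\varepsilon$-neighbourhoods), so it is a plausible argument rather than a complete one; the paper itself never supplies even this much.
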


\begin{definition} [\textbf{Continents and Islands}]
    A \emph{continent} is a complete subgraph (clique) in $\mathcal{G}^*$ exclusively composed of transitively related views. An \emph{island} in $\mathcal{G}^*$ is a subgraph that exclusively contains noisy views. 
\end{definition}

\begin{lemma}
    Let $\mathbb{E}_{12}$ be the set of edges connecting continents $\mathbb{C}_1$ and $\mathbb{C}_2$. The elements of $\mathbb{E}_{12}$ are mutually redundant, \ie,
    \begin{equation*}
        H(e_j | e_i) = 0, \;\; \forall (e_i, e_j) \in \mathbb{E}_{12}
    \end{equation*}
    \label{lemma:reducibility}
    \vspace{-20pt}
\end{lemma}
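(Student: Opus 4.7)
The plan is to use the transitivity property inside both continents to show that every edge in $\mathbb{E}_{12}$ carries the same semantic content, and then upgrade this semantic equivalence to deterministic redundancy via the semantic consistency of $\varphi$ together with the cluster-based construction of the proxy/concept graph.

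First, I would fix arbitrary edges $e_i = (\vectorname{v}_a, \vectorname{v}_b)$ and $e_j = (\vectorname{v}_c, \vectorname{v}_d)$ in $\mathbb{E}_{12}$ with $\vectorname{v}_a, \vectorname{v}_c \in \mathbb{C}_1$ and $\vectorname{v}_b, \vectorname{v}_d \in \mathbb{C}_2$. Since $\mathbb{C}_1$ and $\mathbb{C}_2$ are continents (cliques of transitively related views), the pairs $(\vectorname{v}_a, \vectorname{v}_c)$ and $(\vectorname{v}_b, \vectorname{v}_d)$ are themselves transitively related intra-continent pairs. Applying \cref{def:transitivity} iteratively to the triples $(\vectorname{v}_a, \vectorname{v}_b, \vectorname{v}_d)$ and $(\vectorname{v}_a, \vectorname{v}_c, \vectorname{v}_d)$ then forces $I(\vectorname{v}_p\vectorname{v}_q; \vectorname{g}) > \gamma$ for every $(\vectorname{v}_p, \vectorname{v}_q) \in \mathbb{C}_1 \times \mathbb{C}_2$, so every element of $\mathbb{E}_{12}$ sits in the same emergence regime.

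Next, I would upgrade this inequality to equivalence of semantic content. By \cref{thm:transitivity_recovery}, $\varphi$ is transitivity recovering, so it places any two views whose emergent contributions are indistinguishable at the $\gamma$-threshold into the same semantic neighborhood of $\mathcal{M}$; combined with Semantic Emergence (\cref{subsec:sem_emergence}) this means every node of $\mathbb{C}_1$ collapses to a single proxy cluster (and analogously for $\mathbb{C}_2$). The edge embedding $\varphi_e(\vectorname{z}_p, \vectorname{z}_q)$ therefore depends on the pair $(\vectorname{v}_p, \vectorname{v}_q) \in \mathbb{C}_1 \times \mathbb{C}_2$ only through its pair of cluster memberships, which is common to every cross-continent edge. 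To finally obtain strict $H(e_j \mid e_i) = 0$, I would invoke the online Sinkhorn--Knopp clustering of \cref{subsec:pipeline}: all cross-continent edges get assigned to a single edge-cluster in the concept graph, and the L-HED minimization in \cref{eq:lhed} drags the corresponding instance-level edge embeddings onto this common representative at convergence, making $e_j$ a deterministic function of $e_i$.

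The main obstacle will be the middle step: rigorously moving from ``both edges lie above the $\gamma$-threshold'' to ``their embeddings coincide up to the cluster representative'' without circularity. I expect this to require explicitly combining the transitivity-recovering property of $\varphi$ with the equipartition guarantees of Sinkhorn--Knopp over exactly $|\mathbb{V}|$ node clusters and $|\mathbb{V}|(|\mathbb{V}|-1)/2$ edge clusters, so that the granularity of the proxy decomposition matches the granularity of intra-continent substitutability. A secondary technical point is checking that \cref{lemma:separability} still applies here in the purely transitive regime (no noisy views involved), which guarantees that no cross-continent path is rerouted through a bridging continent and hence that the cluster-representative of $\mathbb{E}_{12}$ is indeed unique.
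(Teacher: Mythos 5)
Your route diverges from the paper's and has two genuine gaps. First, the step that ``iteratively'' applies \cref{def:transitivity} to cross-continent triples such as $(\vectorname{v}_a, \vectorname{v}_b, \vectorname{v}_d)$ is not licensed: transitivity is a property that a given triple may or may not satisfy, and it is guaranteed only for views inside a single continent. Indeed, if every pair in $\mathbb{C}_1 \times \mathbb{C}_2$ had emergence above $\gamma$, as you conclude, then $\mathbb{C}_1 \cup \mathbb{C}_2$ would itself be a clique of transitively related views, i.e., a single continent, contradicting the premise that $\mathbb{E}_{12}$ joins two distinct continents. Second, the upgrade to $H(e_j \mid e_i) = 0$ via clustering does not go through with the paper's own construction: the concept graph uses $|\mathbb{V}|$ node clusters and $|\mathbb{V}|(|\mathbb{V}|-1)/2$ edge clusters (one per node and per edge slot of a complete graph), and the Sinkhorn--Knopp equipartition constraint works \emph{against} collapsing an entire continent, or the whole of $\mathbb{E}_{12}$, onto a single representative. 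The ``deterministic function at convergence'' claim therefore rests on an implementation detail that does not behave as you need it to.

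The paper's proof is instead a short message-passing argument that needs neither step. Because $\mathbb{C}_1$ is a clique, after propagation every vertex $v_i \in \mathbb{C}_1$ has access to exactly the same information, namely the joint emergent content $I(\vectorname{v}_1, \dots, \vectorname{v}_{|\mathbb{C}_1|}; \vectorname{g})$ of the whole continent; this is the entire entropy $H(e_i)$ that any outgoing edge can carry. For a message-passing operator, $e_j$ factors through $v_j \to v_i \to e_i$, so $I(e_j; \mathbb{C}_2) = I(e_i; \mathbb{C}_2)$ and hence $H(e_j \mid e_i) = 0$: a single cross-continent edge already determines the connectivity information of all the others. This clique-saturation observation is the missing idea in your plan --- the redundancy comes from all sources in $\mathbb{C}_1$ sharing the same information after propagation, not from extending transitivity across continents or from the clustering stage.
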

\begin{proof}
    Consider a pair of vertices $v_i, v_j \in \mathbb{C}_1$, such that both are connected to $\mathbb{C}_2$ via edges $e_i$, and $e_j$ respectively. However, since $\mathbb{C}_1$ is a clique, $v_i$ and $v_j$ (and all other nodes in $\mathbb{C}_1$) are also connected.
    Thus, the complete set of information that $\vectorname{v}_i$ has access to (and can potentially encode via $e_i$) is the transitivity of (local-to-global relationship emerging from) all the local views in $\mathbb{C}_1$ given by:
    \begin{equation*}
        H(e_i) = I(\vectorname{v_1}, \vectorname{v_2}, ..., \vectorname{v}_{|\mathbb{C}_1|}; \vectorname{g})
    \end{equation*}
    
    Let $\gamma$ be a message passing function (like a GNN) that carries information across connected nodes. Thus, for $\gamma$, the following holds:
    \begin{equation*}
        e_j \equiv v_j \xrightarrow{\gamma} v_i \xrightarrow{\gamma} e_i \implies I(e_j; \mathbb{C}_2) = I(e_i; \mathbb{C}_2) \implies H(e_j | e_i) = 0,
    \end{equation*}
    meaning that the knowledge of a single edge between $\mathbb{C}_1$ and $\mathbb{C}_2$ is sufficient to determine the connectivity of the two continents. No other edge in $\mathbb{E}_{12}$ can thus provide any additional information, thereby concluding the proof.
\end{proof}

\myparagraph{\cref{thm:robustness}} In the representation space of $\varphi$, the uncertainty in estimating the topology $\tau^\eta$ of $\mathbb{V}^\eta$ is exponentially greater than the uncertainty in estimating the topology $\tau^*$ of $\mathbb{V}^*$, in terms of the error rate $\epsilon$.

\begin{proof}

It is known that the total number of possible topologies $\tau$ for a graph with $n$ nodes is given by \cite{Kleitman1970NumTopologies}:
\begin{equation}
    |\tau| = 2^{(n^2 / 4)}
    \label{eqn:num_top}
\end{equation}

In what follows, we show that it is possible to dramatically reduce this number to $\mathcal{O}(n)$ for continents, but not for islands.


According to \cref{lemma:separability} and \cref{cor:separability}, the groups of transitive and noisy view nodes can be separated into disjoint sets with no edges between them.
Let $\mathbb{V}^* = \{\vectorname{v}_1, \vectorname{v}_2, ..., \vectorname{v}_k\}$ be a set of transitively related views (a continent) with topology $\mathcal{T}$.
Let $\mathbb{V}^\eta = \{\vectorname{v}_{k+1}, \vectorname{v}_{k+2}, ..., \vectorname{v}_n\}$ be a set of noisy views (an island) with topology $\mathcal{T}_\eta$. Let $\pi^\tau$ denote the probability mass of a topology $\tau$ in $\mathcal{D}$ (fraction of samples in $\mathbb{X}$ with topology $\tau$).


Let $\tau^*$ be the ground truth topology of a class, determined exclusively by the continents (since noisy views should not determine the label). Let $\hat\tau$ be the best approximation of $\tau^*$ that can be achieved by observing $M$ \emph{unique} samples, such that:
\begin{equation*}
    h[\mathcal{G}(\tau^*) - \mathcal{G}(\hat\tau)] \leq \epsilon,
\end{equation*}
where $\mathcal{G}(\tau)$ is the graph with topology $\tau$. In other words, $\epsilon$ is the upper-bound on the statistical error in the approximation of the ground truth topology that can be achieved  with $M$ unique samples.

Now, uniquely determining $\mathcal{G}(\tau^*)$ involves determining all the $k$-cliques involving the global-view (Transitivity Recovery from Theorem 3).
However, the T\'uran's theorem \cite{Turan1941CliqueBound} puts an upper-bound on the number of edges $|\mathbb{E}|$ of a graph with clique number $k$ and $n$ nodes at:
\begin{equation*}
    |\mathbb{E}| = ( 1 - \frac{1}{k} ) \frac{n^2}{2},
\end{equation*}

which gives a direct upper-bound on the sample complexity $M$ for learning $\hat\tau$.

If there are $|\mathbb{E}|$ edges representing transitive relationships among $n$ views, and the maximum clique size for a class is $k$, the number of $k$-cliques to encode all possible clique memberships with this configuration would be given by:
\begin{equation}
    \log_k e = \frac{\log_2 e}{\log_2 k} = \frac{\log_2 [( 1 - \frac{1}{k} ) \frac{n^2}{2}]}{\log_2 k} = \frac{\log_2 [( 1 - \frac{1}{k} ) n ^ 2]}{\log_2 k} = \log_2 ( 1 - \frac{1}{k} ) - \log_2 k + 2\log_2 n = \mathcal{O}(\log_2 n)
    \label{eqn:num_trans_cliques}
\end{equation}

Since the maximum clique size $k$ is a constant for a given class. The above number of samples would be required to gain the knowledge of all the cliques in the graph.

To analyze the connectivity across different continents in $G^*$, we can apply \cref{lemma:reducibility} to reduce all the continents in $G^*$ into single nodes, connected by an edge if there is at least one node connecting the two continents in $G^*$. Using this observation, and \cref{lemma:reducibility}, the number of topologies $|\mathcal{R}|$ of such a graph can be given by:
\begin{equation}
    |\mathcal{R}| = 2^{(\mathcal{O}(\log_2 n) / 4)} = \mathcal{O}(n)
    \label{eqn:cross_clique_top}
\end{equation}

Thus with a clique number of $k$, $\mathcal{G}^*$ could have a maximum of $\mathcal{O}(\log_2 n)$ continents (\cref{eqn:num_trans_cliques}), connected with each other in $\mathcal{O}(n)$ possible ways (\cref{eqn:cross_clique_top}). Since there is only $1$ possibility for the topologies of each clique, the only variation would come from the cross-continent connectivity, which has $\mathcal{O}(n)$ potential options. Using the Haussler's theorem \cite{Haussler1988QuantifyingIB}, the number of samples $M$ required to achieve a maximum error rate of $\epsilon$ with probability $(1 - \delta)$ for a continent is given by:
\begin{equation}
    M^* = \frac{\log_2 \mathcal{O}(n) + \log_2 \frac{1}{\delta}}{\epsilon \pi^{\hat\tau}} = \frac{\log_2 \mathcal{O}(n) + \log_2 \frac{1}{\delta}}{\epsilon (1/c)} \approx \frac{\log_2 \mathcal{O}(n) + \log_2 \frac{1}{\delta}}{\epsilon},
    \label{eqn:transitive_sc}
\end{equation}

where $c$ is the number of classes, and in practice $1/c \gg 0$. 
Note that the above bound is tight since we restrict our hypothesis class to only having transitivity recovering decompositions (Theorem 3). Thus, one would need a number of samples that is linear in the number of transitive views ($n$), as well as the the error rate ($\epsilon$), and the certainty ($\delta$), to determine $\hat\tau$ for a class. It can also be seen that $M^*$ is completely independent of the noise rate $\eta$.



However, for $\mathbb{V}^\eta$, since its constituent views are noisy, no prior assumptions can be made about its structure. Thus, the number of possible topologies for $\mathbb{V}^\eta$ can be obtained by substituting $n$ with $\eta$ in \cref{eqn:num_top} as follows:
\begin{equation*}
    |\tau^\eta| = 2^{(\eta^2 / 4)} = \mathcal{O}(2^\eta)
\end{equation*}

Thus, based on the Haussler's theorem, the sample complexity for learning the topology of an island would be given by:
\begin{equation}
    M^\eta \geq \frac{\log_2 \mathcal{O}(2^\eta) + \log_2 \frac{1}{\delta}}{\epsilon \pi^{\tau^\eta}} \approx \frac{\log_2 \mathcal{O}(2^\eta) + \log_2 \frac{1}{\delta}}{\epsilon \cdot \epsilon} = \frac{\log_2 \mathcal{O}(2^\eta) + \log_2 \frac{1}{\delta}}{\epsilon^2},
    \label{eqn:noisy_sc}
\end{equation}
since the probability of finding an instance with a specific noisy topology $\pi^{\tau^\eta}$ is very small, and hence can be approximated with the error-rate ($\epsilon$). Note that the small probability mass for a specific $\tau^\eta$ comes from the worst-case assumption that the dataset would have all possible forms of noise, and hence, the probability for a specific kind would be very low. Also note that unlike \cref{eqn:transitive_sc}, the bound cannot be tightened as the premise of transitivity does not hold.

Thus, while the sample complexity for learning a transitive topology $M^*$ is linear in the number of views and the error rate \cref{eqn:transitive_sc}, that for a noisy topology $M^\eta$ is exponential in the number of views and quadratic in the error rate \cref{eqn:noisy_sc}. Hence, in the worst case, when $n \approx \eta$ \cite{Ghosh2015Robustness}, $M^\eta \gg M^*$. This implies that it is significantly more difficult for $\varphi$ to learn the topology $\tau^\eta$ of an island than it is to learn the topology $\hat\tau$ of a continent.

From \cref{eqn:noisy_sc}, the probability of $\varphi$ making a correct prediction based on noisy views would be given by:
\begin{align*}
    M_\eta \geq \frac{\log_2 \mathcal{O}(2^\eta) + \log_2 \frac{1}{\delta}}{\epsilon^2}
    &\implies \log_2 \frac{1}{\delta} \leq M_\eta \epsilon^2 - \log_2 \mathcal{O}(2^\eta) \\
    &\implies \log_2 \frac{1}{\delta} \leq M_\eta \epsilon^2 - \eta \\
    &\implies \frac{1}{\eta}\log_2 \frac{1}{\delta} \leq \frac{M_\eta}{\eta} \epsilon^2 - 1 \approx \epsilon^3 - 1 \\
    &\implies \log_2 \frac{1}{\delta} \leq \eta\epsilon^3 - \eta \\
    &\implies \frac{1}{\delta} \leq 2^{\eta(\epsilon^3 - 1}) \\
    &\implies \delta_\eta \geq \mathcal{O}(2^{-\epsilon^3})
\end{align*}
where the noise rate $\eta$ is a constant for a particular dataset, and $M_\eta / \eta \approx \epsilon$, since $M_\eta \gg \eta$, as $M_\eta = \mathcal{O}(2^\eta)$.
From \cref{eqn:transitive_sc}, the probability of $\varphi$ making a correct prediction based on transitive views would be given by:
\begin{align*}
    M_* = \frac{\log_2 \mathcal{O}(n) + \log_2 \frac{1}{\delta}}{\epsilon}
    &\implies \log_2 \frac{1}{\delta} = M_* \epsilon - \log_2 \mathcal{O}(n) \approx M_* \epsilon - \log_2 n \\
    &\implies \log_2 \frac{1}{\delta} = (\theta \log_2 n) \epsilon - \log_2 n \\
    &\implies \log_2 \frac{1}{\delta} = \log_2 n^{\theta \epsilon}  - \log_2 n \\
    &\implies \log_2 \frac{1}{\delta} = \log_2 \frac{n^{\theta \epsilon}}{n} = \log_2 n^{\theta \epsilon - 1} \\
    &\implies \frac{1}{\delta} = n^{\theta \epsilon - 1} - 2^{\Gamma \epsilon - 1} \\
    %
    %
    &\implies \delta_* = \mathcal{O}(2^{-\epsilon}),
\end{align*}

where $\theta = M_* / \log_2 n$, which is a constant for a particular class in a dataset, and so is the number of transitively related views $n = |\mathbb{V}^*| = 2^l$ for a class, and $\Gamma = \theta l$.
Thus the relative uncertainty between $\delta_\eta$ and $\delta_*$ would be given by:
\begin{align*}
    \frac{\delta_\eta}{\delta_*} \geq \frac{\mathcal{O}(2^{-\epsilon^3})}{\mathcal{O}(2^{-\epsilon})} \approx \frac{2^{-\epsilon^3}}{2^{-\epsilon}}
    =\frac{(2^{-\epsilon})^{\epsilon^2}}{2^{-\epsilon}}
    = (2^{-\epsilon})^{\epsilon^2 - 1}
    = 2^{-\epsilon^3 + \epsilon}
    = \mathcal{O}(2^{-\epsilon^3})
\end{align*}
This completes the proof of the theorem.
\end{proof}




\subsection{Discussion on Graph-Based Image Representations and Proxy-Based Graph Metric Learning}
\label{subsec:graph_rep}
Graph-based image representation using patches was recently proposed in ViG \cite{han2022ViG}. Although the initial decomposition of an image into a graph allowed for flexible cross-view relationship learning, the image graph is eventually abstracted into a single representation in $\mathbb{R}^n$. This prevents one from identifying the most informative subgraph responsible for prediction, at both the instance level, and at the class-level. Although interpreting ViGs may be possible via existing GNN explainablity algorithms \cite{Ying2019GNNExplainer, Luo2020PGExplainer, Wang2021LocalGlobalExplainability}, they are typically post-hoc in nature, meaning that only the output predictions can be explained and not the entire classification pipeline. Incurred computational and time overheads that come with post-hoc approaches in general, are additional downsides. Our approach preserves the transparency of the entire classification pipeline by maintaining a graph-based representation all the way from the input to the class-proxy, thereby allowing us to explain predictions at both instance and class levels in an ante-hoc manner.

Although graph metric learning has been shown to be effective for learning fine-grained discriminative features from graphs \cite{Li2022RGCL}, instance-based contrastive learning happens to be (1) computationally expensive and (2) unable to capture the large intra-class variations in FGVC problems \cite{Movshovitz-Attias2017ProxyNCA, Kim2020CVPR}.
Proxy-based graph metric learning \cite{Zhu2020ProxyGML} has been shown to address both issues, while learning generalized representations for each class.
However, the metric space embeddings learned by the above methods are not fully explainable in terms of identifying the most influential subgraph towards the final prediction, which is an issue we aim to address in our work.

\subsection{Role of Transitivity}
\label{subsec:role_of_transitivity}
The idea of transitivity allows us to narrow down sets of views whose relation-agnostic information have been exhaustively encoded in $\mathbb{Z}$, and the only available sources of information comes only when all the views are observed \emph{collectively} as the emergent relationship.
On the other hand, dealing with a general local-to-global relationship recovery would additionally include view-pairs, \emph{only one} of which might be contributing to the relational information ($I(\vectorname{v}_1\vectorname{v}_2; \vectorname{g})$ and $I(\vectorname{v}_2\vectorname{v}_3; \vectorname{g})$), thus not exhibiting emergence. Using only the former in our proof helps us to identify the set of nodes responsible for bridging the information gap (via emergence) as the ones satisfying transitivity.

\subsection{Learning the Graph Edit Costs}
\label{subsec:graph_edit_costs}
The GNN proposed in \cite{Riba2021ParamHED} would directly learn the mapping $\phi: \mathbb{Z}_\mathbb{V} \to \mathbb{R}$ for computing the edit costs.
In TRD, we instead split the GNN for edit cost computation into a \textbf{prefix network} $\varphi: \mathbb{Z}_\mathbb{V} \to \mathcal{M}$ and a \textbf{cost head} $\psi: \mathcal{M} \to \mathbb{R}$. Note that this factorization is just a different view of $\phi$ that allows us to draw parallels of our implementation with the theory of TRD.



\subsection{Further comparisons with Relational Proxies}

\myparagraph{Same number of local views} We evaluate both Relational Proxies and TRD with the same number of local views in the normal (no explicit addition of noise) setting on the FGVC Aircraft dataset, and report our findings in \cref{tab:same_number_of_local_views}. TRD marginally outperforms Relational Proxies for all values of the number of local views, and exhibits a trend of scaling in accuracy with increasing number of local views. Relational Proxies, on the other hand, does not seem to benefit from increasing the number of local views, possibly due to its lack of robustness to noisy views.

\begin{table}[!ht]
    \centering
    \begin{tabular}{lcccc}
        \toprule
        \textbf{\# Local Views} & \textbf{8} & \textbf{16} & \textbf{32} & \textbf{64} \\
        \midrule
        Relational Proxies & 95.25 & 95.30 & 95.29 & 95.31 \\
        \midrule
        \textbf{TRD (Ours)} & \textbf{95.27} & \textbf{95.45} & \textbf{95.52} & \textbf{95.60} \\
        \bottomrule
    \end{tabular}
    \vspace{8pt}
    \caption{Comparison of TRD with Relational Proxies under the same number of local views.}
    \label{tab:same_number_of_local_views}
\end{table}

\myparagraph{Compute cost}
In \cref{tab:compute_cost}, we provide the computational costs of Relational Proxies and TRD in terms of wall clock time evaluated on FGVC Aircraft (same experimental settings including the number of local views).
We can see that TRD is significantly more efficient than Relational Proxies in terms of both single sample inference as well as training time until convergence. This is because of the following reasons:

\begin{itemize}
    \item The Complementarity Graph in TRD is constructed exactly once before training, and the semantic relevance graph, as well as the proxy graph are learned as part of the training process.
    \item TRD does not involve updating the relation-agnostic encoder, which is a ResNet50, as part of the training process. Relational Proxies requires it to be updated, thereby exhibiting computationally heavier forward (as local view embeddings cannot be pre-computed) and backward passes.
\end{itemize}

\begin{table}[!ht]
    \centering
    \begin{tabular}{ccc}
        \toprule
         & Average Inference Time (ms) & Training Time (hrs) \\
        \midrule
        Relational Proxies & 130 & 22 \\
        \midrule
        \textbf{TRD (Ours)} & 110 & 15 \\
        \bottomrule
    \end{tabular}
    \caption{Compute cost of TRD relative to Relational Proxies.}
    \label{tab:compute_cost}
\end{table}

\subsection{Over-smoothing}
To understand whether the process of modelling emergent relationships is vulnerable to the over-smoothing phenomenon in GNNs, we evaluated TRD using GATs of up to 64 layers on FGVC Aircraft, presenting our findings in \cref{tab:oversmoothing}. We see that the performance does drop as the number of layers are increased beyond 8. To validate whether this is due to the oversmoothing phenomenon, we measure the degree of distinguishability among the nodes by taking the average of their pairwise $L^2$ distances. The table shows that the distinguishability also decreases as we increase the number of layers, suggesting that the over-smoothing phenomenon does occur. Under the light of the above experiments, the 8-layer GAT is an optimal choice for our problem.

\begin{table}[!ht]
    \centering
    \begin{tabular}{lccccc}
        \toprule
        \textbf{GAT-Depth} & \textbf{4} & \textbf{8} & \textbf{16} & \textbf{32} & \textbf{64} \\
        \midrule
        Accuracy & 95.05 & \textbf{95.60} & 95.32 & 94.78 & 94.20 \\
        \midrule
        Distinguishability & 0.87 & 0.63 & 0.49 & 0.21 & 0.09 \\
        \bottomrule
    \end{tabular}
    \vspace{8pt}
    \caption{Effect of over-smoothing in GAT on learning emergent relationships.}
    \label{tab:oversmoothing}
\end{table}

\subsection{Results on ImageNet subsets}

Following existing FGVC literature \cite{Chaudhuri2022RelationalProxies, dubey2018MaxEntFGVC},  we evaluate the contribution of our novel Transitivity Recovery objective in the coarse-grained (multiple fine-grained subcategories in a single class) and fine-grained subsets of ImageNet, namely Tiny ImageNet \cite{Li2017TinyImageNet} and Dogs ImageNet (Stanford Dogs \cite{Khosla2011StanfordDogs}) respectively, and report our findings in \cref{tab:imagenet_subsets}. Although our method can surpass existing SOTA in both the settings, larger gains ($\Delta$) are achieved in the fine-grained setting, suggesting that TRD is particularly well suited for that purpose.

\begin{table}[!ht]
    \centering
    \begin{tabular}{ccc}
    \toprule
         & \textbf{Tiny ImageNet} & \textbf{Dogs ImageNet} \\
         \midrule
        \textbf{MaxEnt} \cite{dubey2018MaxEntFGVC} & 82.29 & 75.66 \\
        \textbf{Relational Proxies} \cite{Chaudhuri2022RelationalProxies} & 88.91 & 92.75 \\
        $a:$ \textbf{w/o Transitivity Recovery} & 88.10 & 91.03 \\
        $b:$ \textbf{with Transitivity Recovery} & 89.02 & 93.10 \\
        \midrule
        $\Delta = (b - a)$ & 0.92 & \textbf{2.07} \\
        \bottomrule
    \end{tabular}
    \vspace{8pt}
    \caption{Coarse-grained vs fine-grained classification results.}
    \label{tab:imagenet_subsets}
\end{table}

\subsection{Dependence on input image size}

We follow recent SOTA FGVC approaches that use relation-agnostic encoders to extract global and local views \cite{Chaudhuri2022RelationalProxies, Zhang2021MMAL, Wei2017SCDA}. In particular, our view extraction process is exactly the same as Relational Proxies \cite{Chaudhuri2022RelationalProxies}, with the same input image resolution and backbone (relation-agnostic) encoder. The above approaches first extract the global view from the input image, and crop out local views from the global view. Since there are two scales at which the image is cropped, all the crops are resized to 224x224. In practice, this provides a similar resolution to resizing the full input image to a higher resolution at the start. To evaluate this hypothesis, we re-trained and re-evaluated our model with the input images resized to 448x448, keeping the remainder of the process of view extraction the same. We provide our results on multiple datasets in \cref{tab:input_size}, which shows that the performance gains at the higher input resolution are minor, thus supporting our hypothesis.

\begin{table}[!ht]
    \centering
    \begin{tabular}{cccc}
        \toprule
         & \textbf{Soy} & \textbf{FGVC Aircraft} & \textbf{Stanford Cars} \\
        \midrule
        \textbf{TRD:} $\mathbf{224 \times 224}$ & 52.15 & 95.60 & 96.35 \\
        \textbf{TRD:} $\mathbf{448 \times 448}$ & 52.23 & 95.62 & 96.39 \\
        \bottomrule
    \end{tabular}
    \vspace{8pt}
    \caption{Dependence of TRD on input image size.}
    \label{tab:input_size}
\end{table}

\clearpage

\subsection{Results with Causal Interventions}
\label{sec:causality}

\begin{figure}[!htbp]
    \centering

    \includegraphics[width=0.32\textwidth]{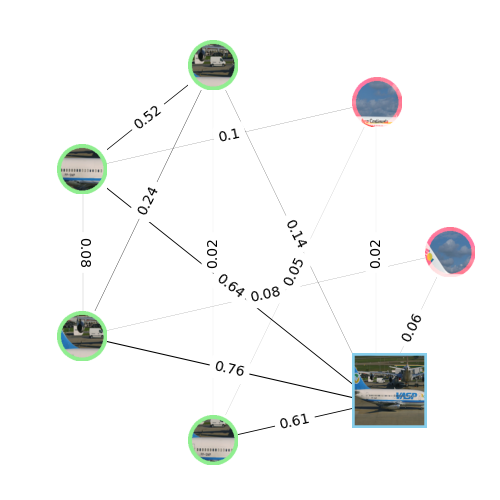}
    \includegraphics[width=0.32\textwidth]{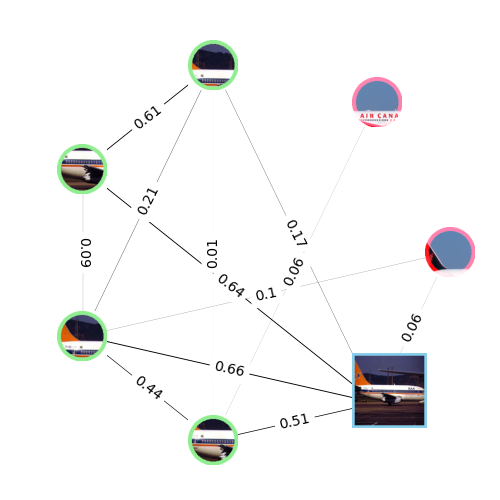}
    \includegraphics[width=0.32\textwidth]{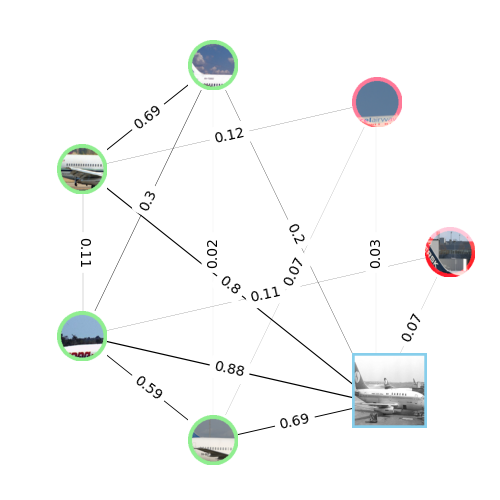}

    \includegraphics[width=0.32\textwidth]{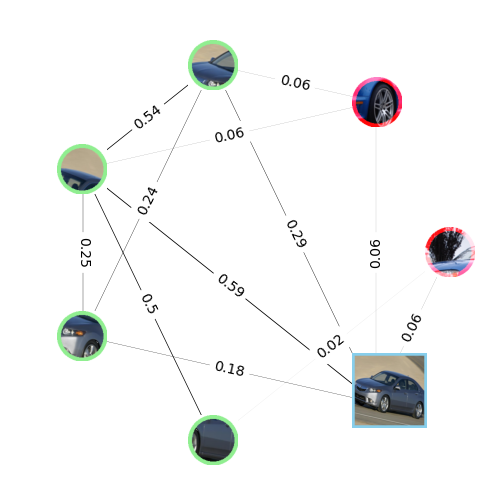}
    \includegraphics[width=0.32\textwidth]{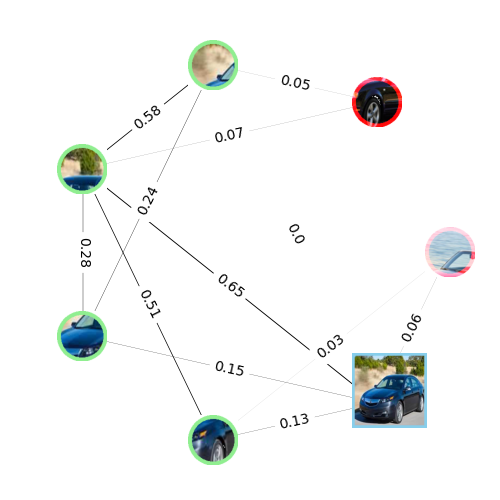}
    \includegraphics[width=0.32\textwidth]{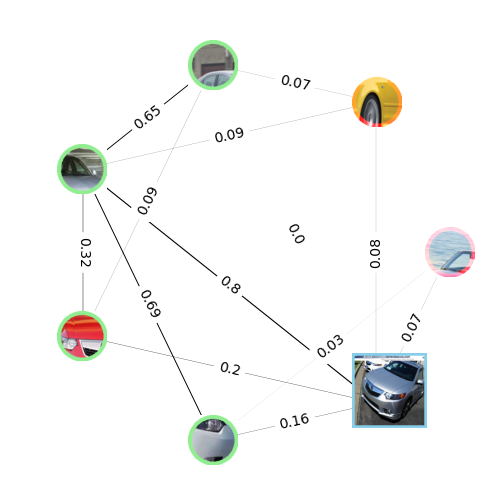}

    \includegraphics[width=0.32\textwidth]{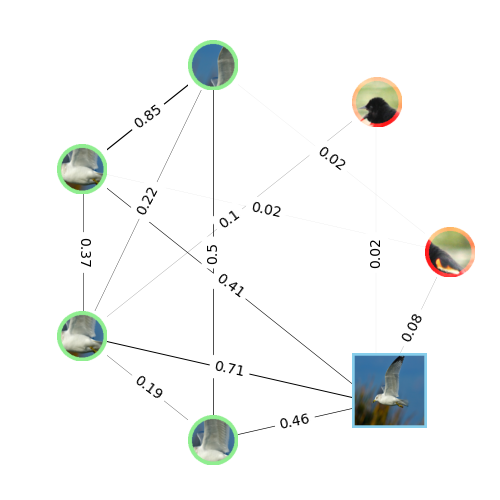}
    \includegraphics[width=0.32\textwidth]{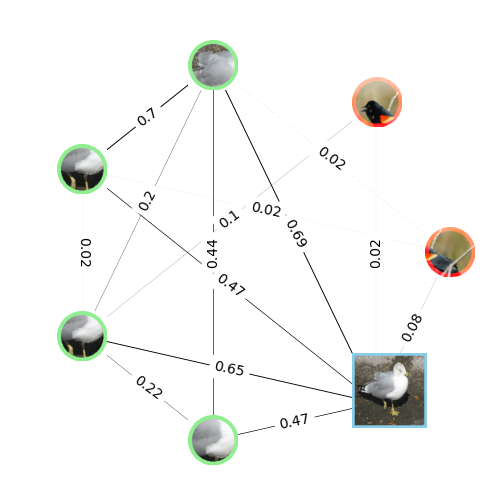}
    \includegraphics[width=0.32\textwidth]{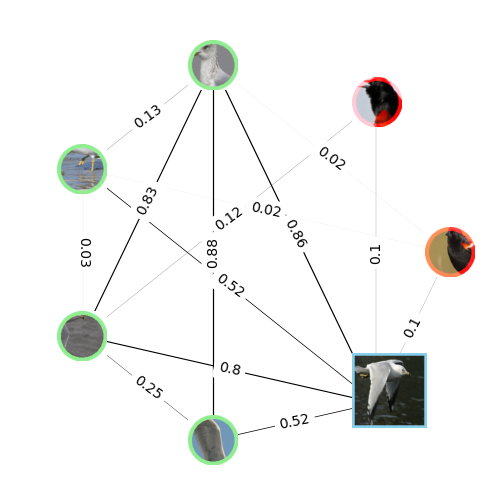}

    \caption{Top-down: Qualitative results on FGVCAircraft, Stanford Cars, and NA Birds with causal intervention. The first two columns from the left are instance graphs, and the rightmost column contains the proxy graphs. The local views of the correct class are bordered in green, while the ones from a different class are bordered in red. A subset of the local views for each instance was corrupted with the introduction of local views from other classes, both during training and inference. These local views consequently also feature in the proxy graphs. The negative local views from different classes (red) can be seen to have a significantly weaker connection to the global view (bordered in blue), and the correct set of local views (green).}
    \label{fig:causality}
\end{figure}


\newpage

\section{Supplementary Material}

\subsection{Qualitative Results from Transitivity Recovering Decompositions (TRD)}
\label{subsec:app_qualitative}

\begin{figure}[!h]
    \centering
    \includegraphics[width=0.23\textwidth]{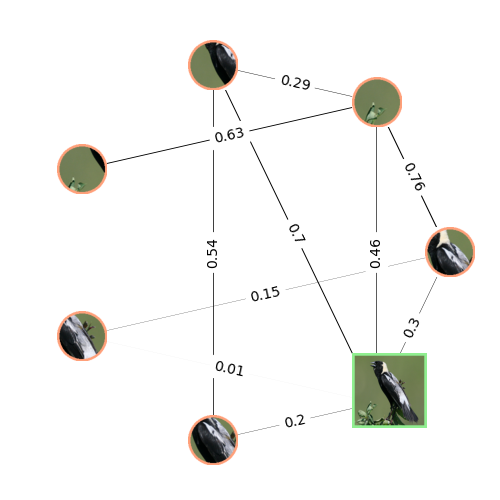}
    \includegraphics[width=0.23\textwidth]{CUB/1.png.png}
    \includegraphics[width=0.23\textwidth]{CUB/2.png.png}
    \includegraphics[width=0.23\textwidth]{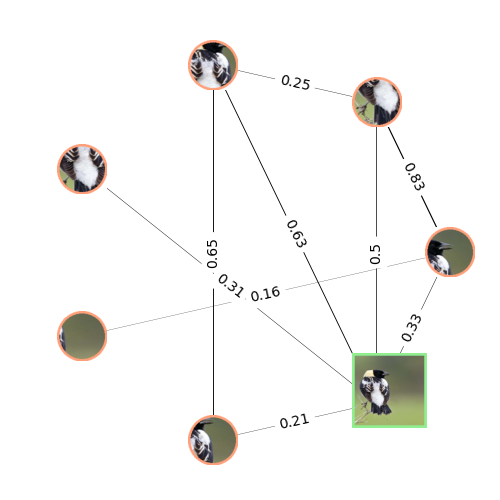}
    \includegraphics[width=0.23\textwidth]{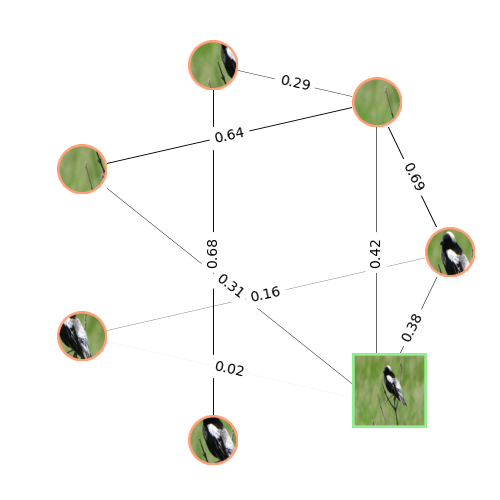}
    \includegraphics[width=0.23\textwidth]{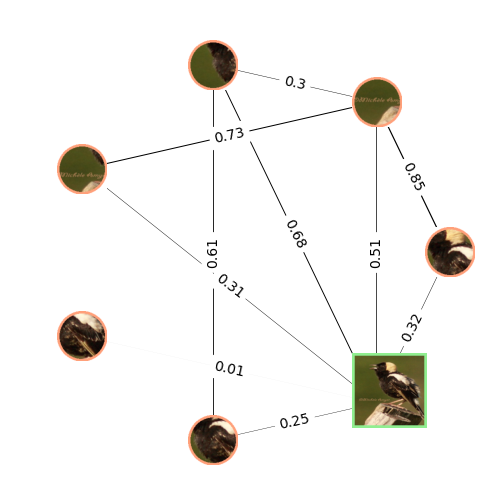}
    \includegraphics[width=0.23\textwidth]{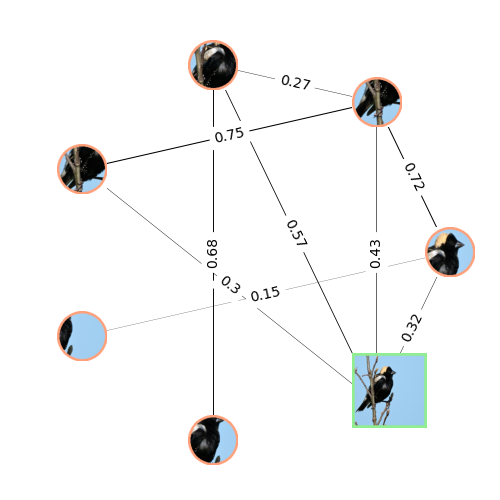}
    \includegraphics[width=0.23\textwidth]{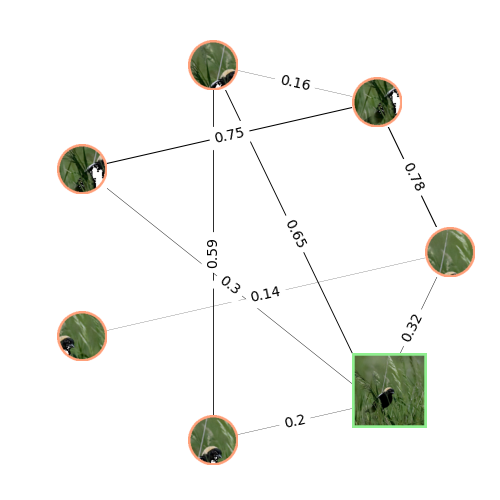}
    \includegraphics[width=0.23\textwidth]{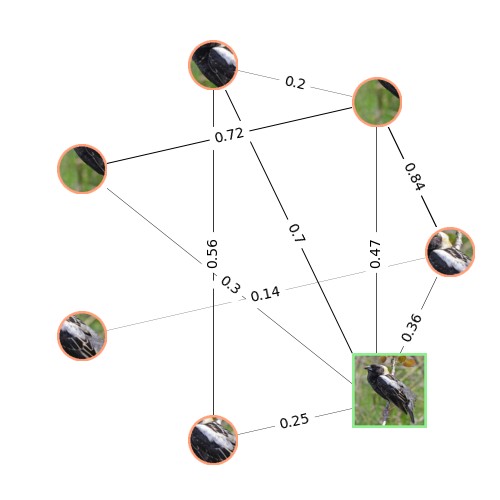}
    \includegraphics[width=0.23\textwidth]{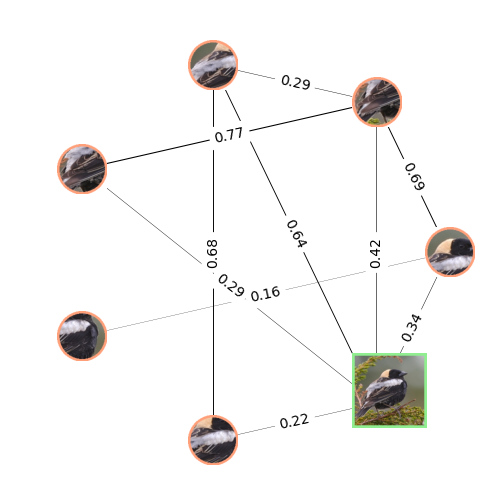}
    \includegraphics[width=0.23\textwidth]{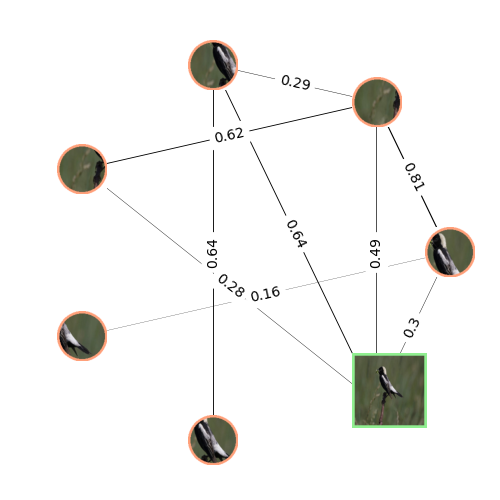}
    \includegraphics[width=0.23\textwidth]{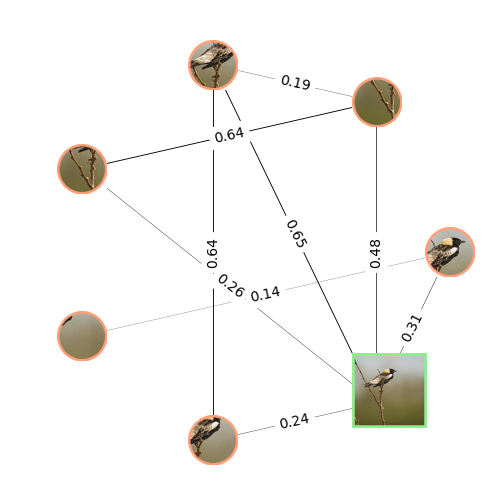}
    \includegraphics[width=0.23\textwidth]{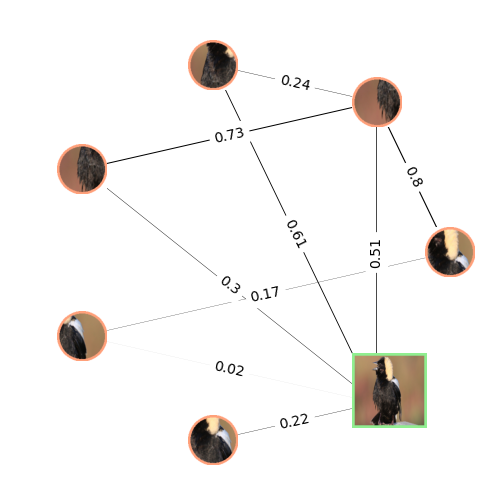}
    \includegraphics[width=0.23\textwidth]{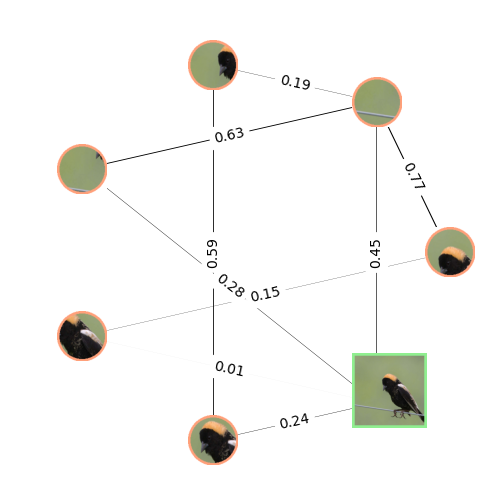}
    \includegraphics[width=0.23\textwidth]{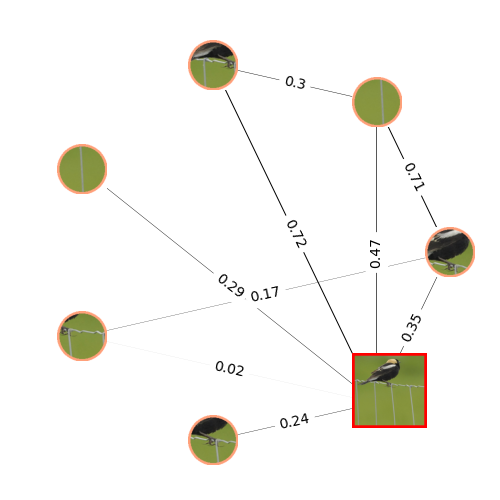}
    \includegraphics[width=0.23\textwidth]{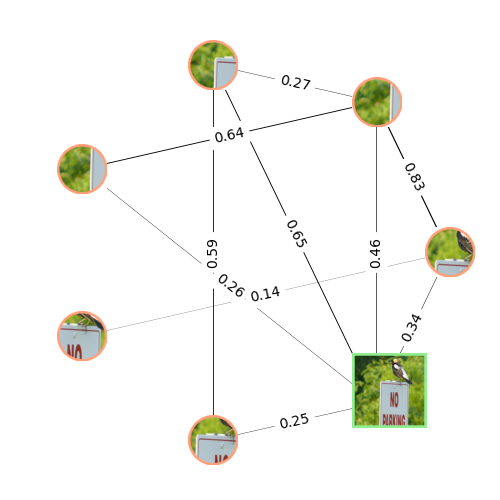}
    \includegraphics[width=0.23\textwidth]{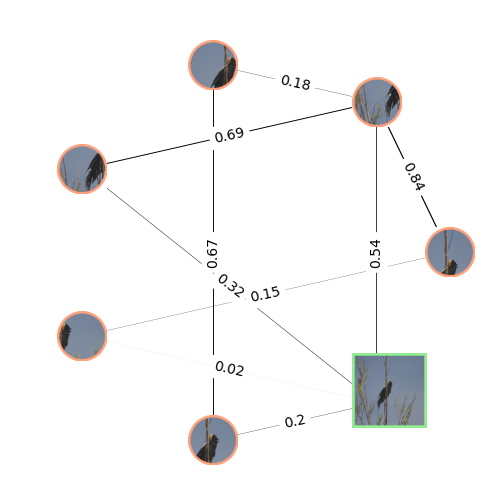}
    \includegraphics[width=0.23\textwidth]{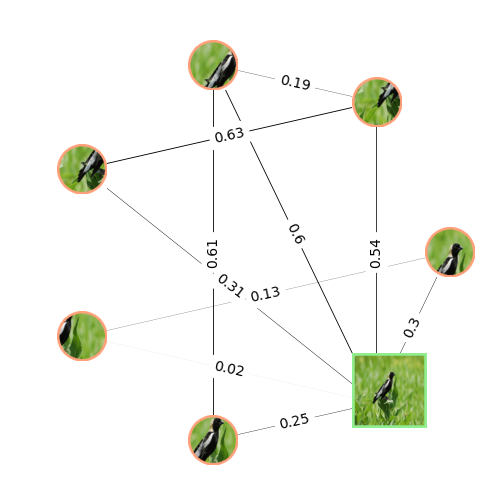}
    \includegraphics[width=0.23\textwidth]{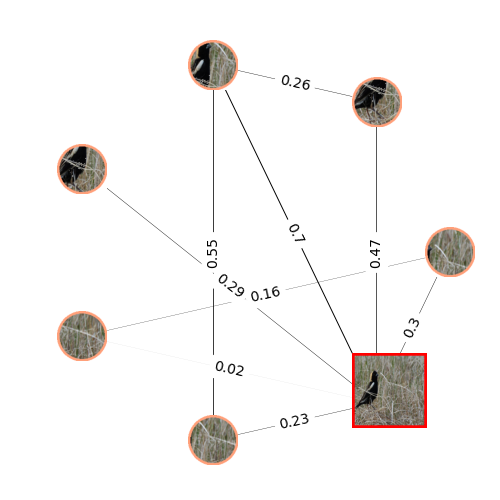}
    \includegraphics[width=0.23\textwidth]{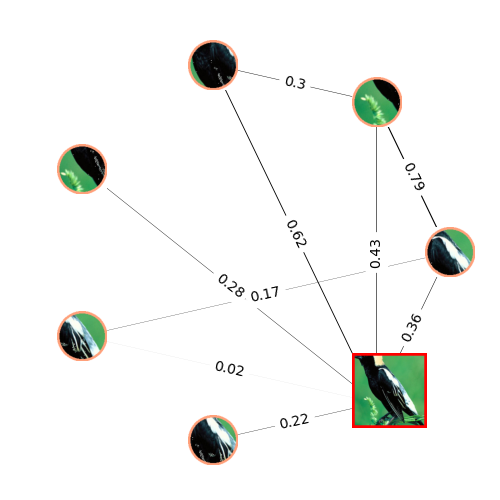}
    \includegraphics[width=0.23\textwidth]{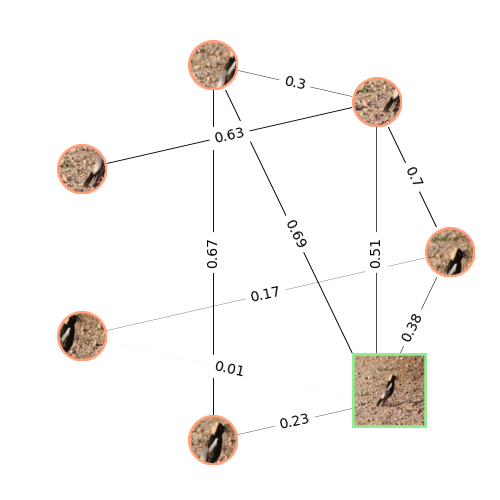}
    \includegraphics[width=0.23\textwidth]{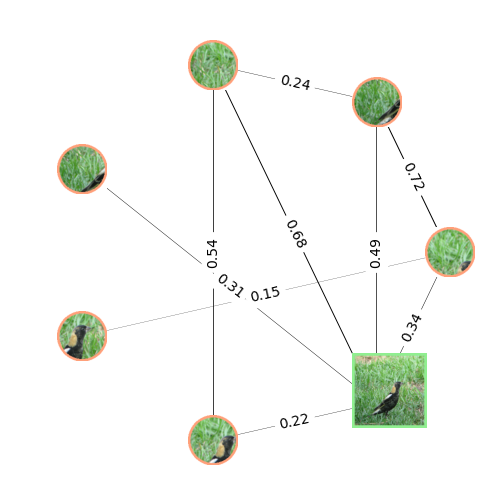}
    \includegraphics[width=0.23\textwidth]{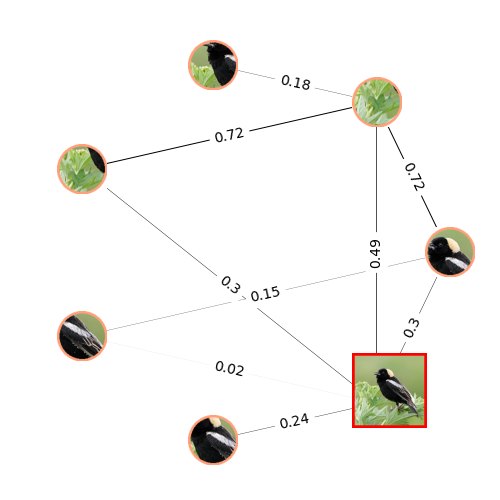}
    \includegraphics[width=0.23\textwidth]{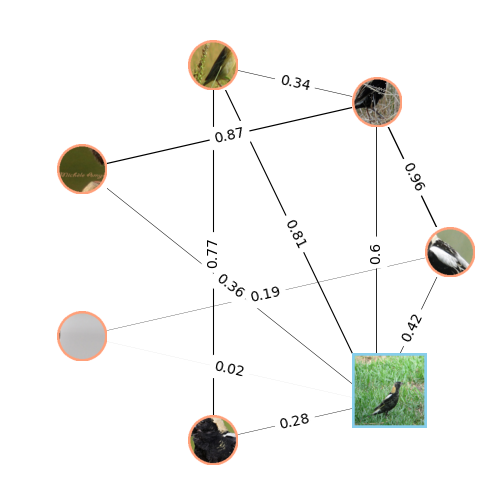}
    \caption{Sample relational interpretation graphs from TRD on CUB. The global views of the correct classifications are bordered with green, and the incorrect ones with red. The global view of the class-proxy graph is colored in blue.}
    \label{fig:qual_cub}
\end{figure}

\begin{figure}
    \centering
    \includegraphics[width=0.23\textwidth]{StanfordCars/0.png.png}
    \includegraphics[width=0.23\textwidth]{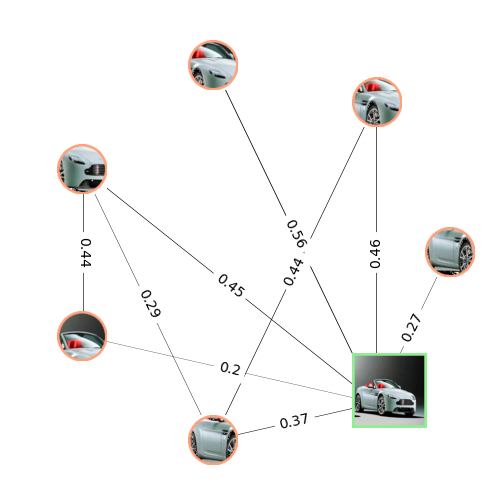}
    \includegraphics[width=0.23\textwidth]{StanfordCars/2.png.png}
    \includegraphics[width=0.23\textwidth]{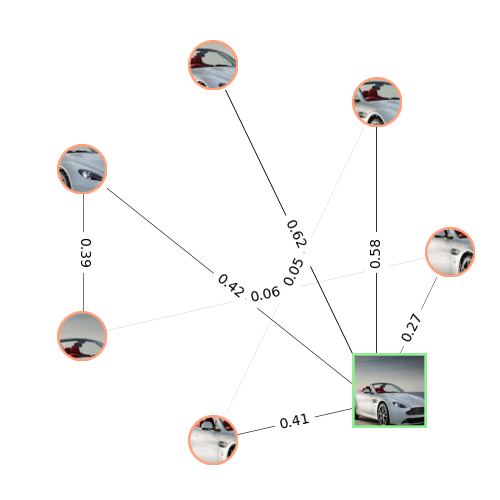}
    \includegraphics[width=0.23\textwidth]{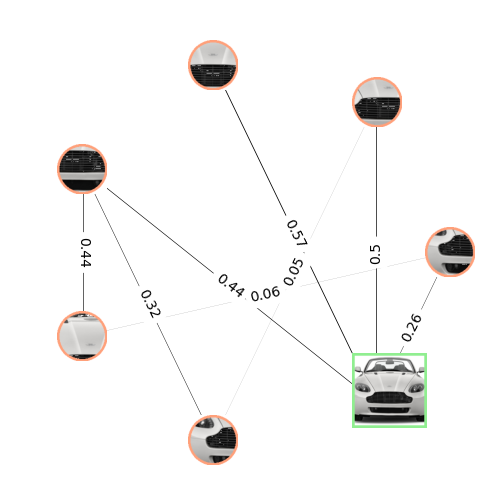}
    \includegraphics[width=0.23\textwidth]{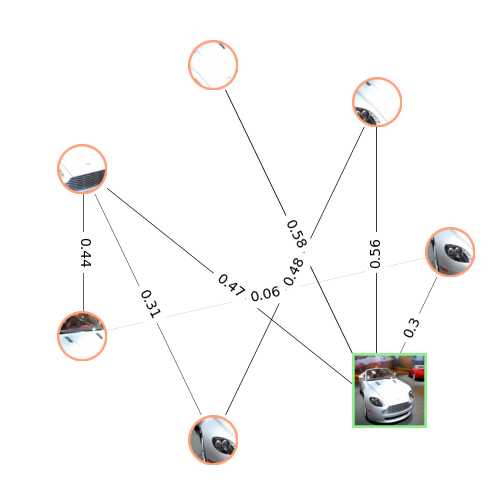}
    \includegraphics[width=0.23\textwidth]{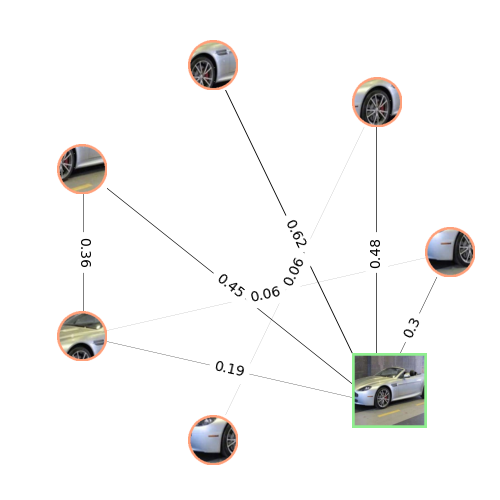}
    \includegraphics[width=0.23\textwidth]{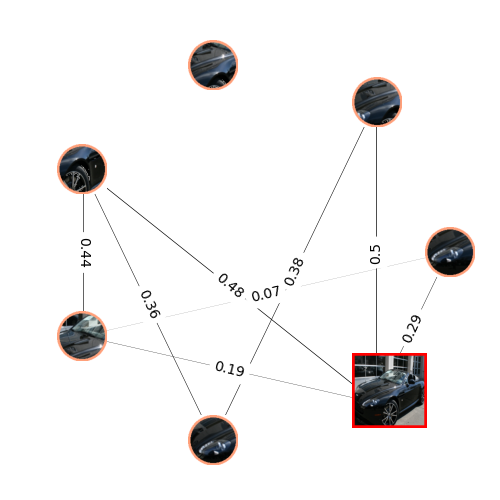}
    \includegraphics[width=0.23\textwidth]{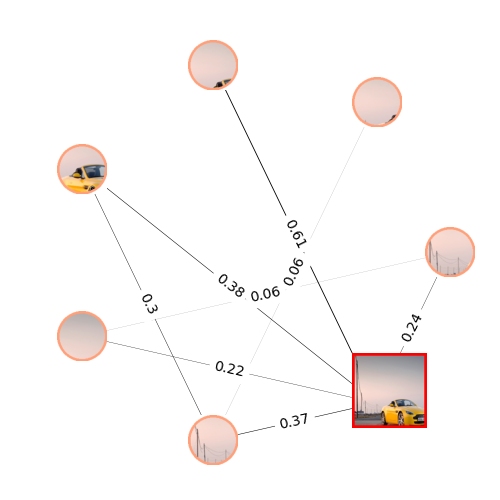}
    \includegraphics[width=0.23\textwidth]{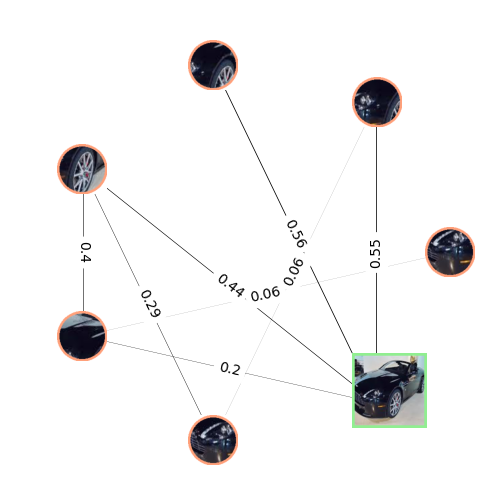}
    \includegraphics[width=0.23\textwidth]{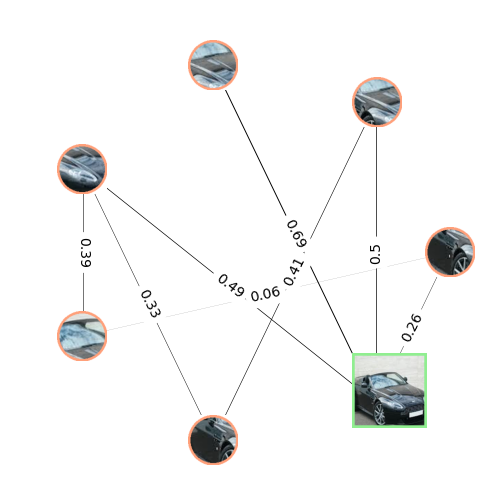}
    \includegraphics[width=0.23\textwidth]{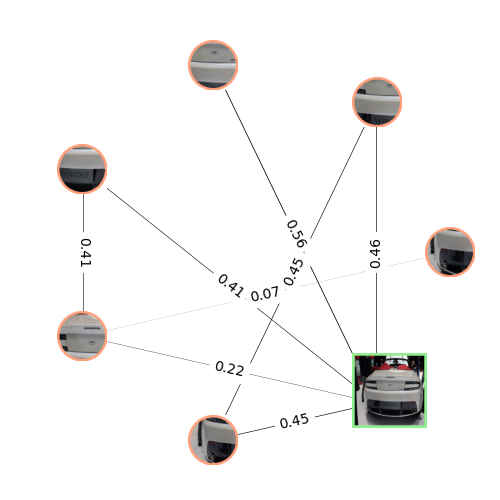}
    \includegraphics[width=0.23\textwidth]{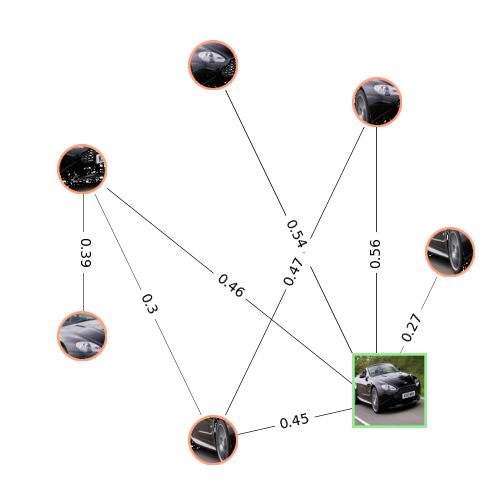}
    \includegraphics[width=0.23\textwidth]{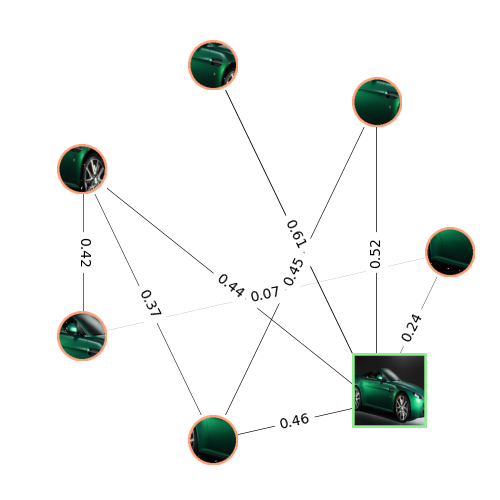}
    \includegraphics[width=0.23\textwidth]{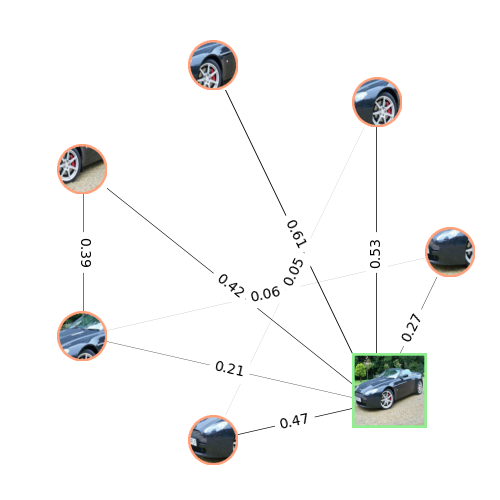}
    \includegraphics[width=0.23\textwidth]{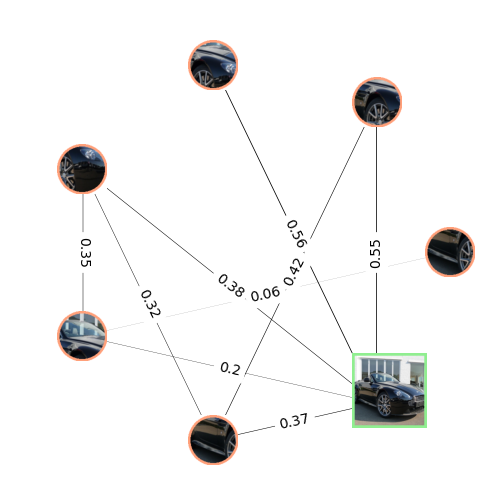}
    \includegraphics[width=0.23\textwidth]{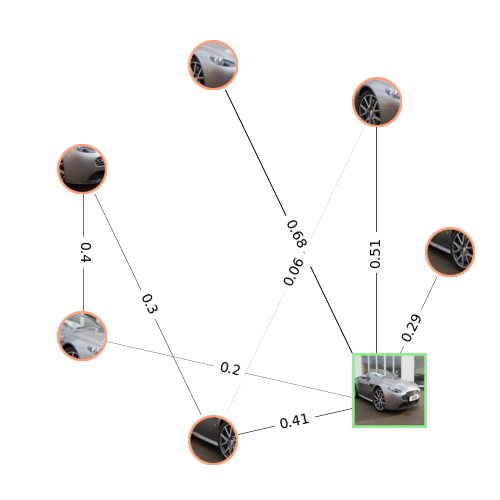}
    \includegraphics[width=0.23\textwidth]{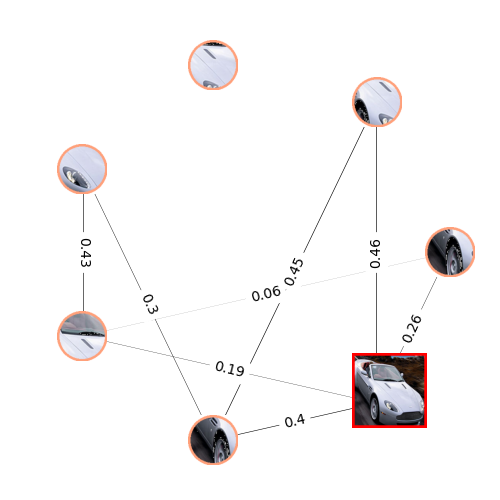}
    \includegraphics[width=0.23\textwidth]{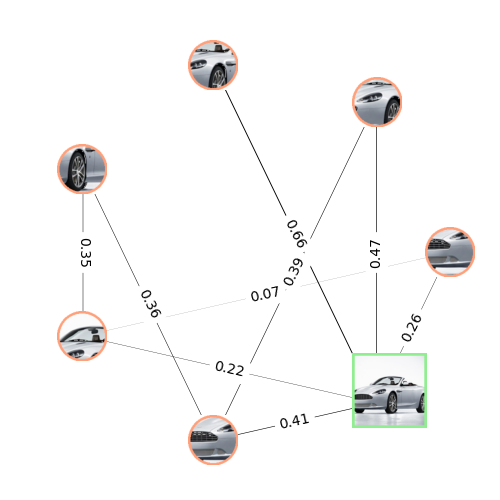}
    \includegraphics[width=0.23\textwidth]{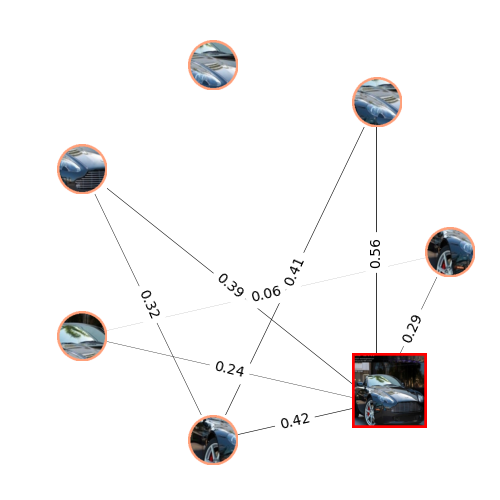}
    \includegraphics[width=0.23\textwidth]{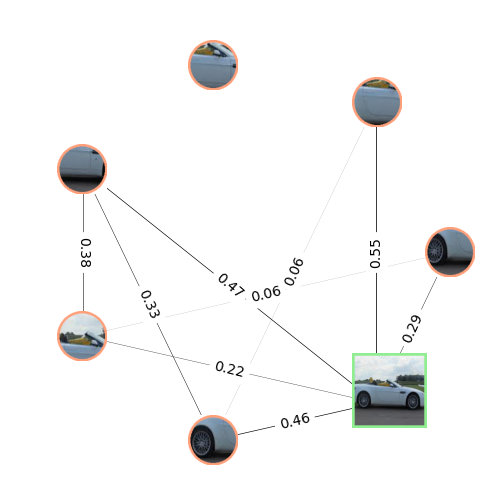}
    \includegraphics[width=0.23\textwidth]{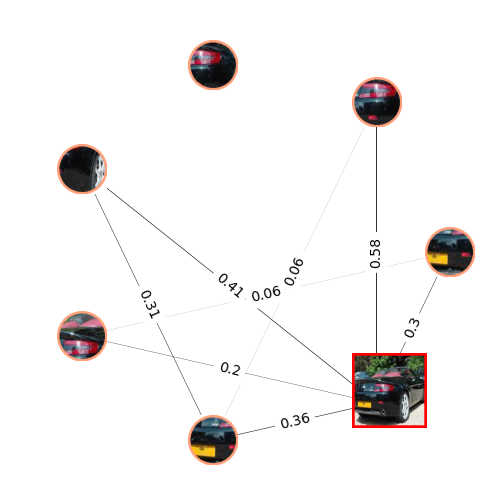}
    \includegraphics[width=0.23\textwidth]{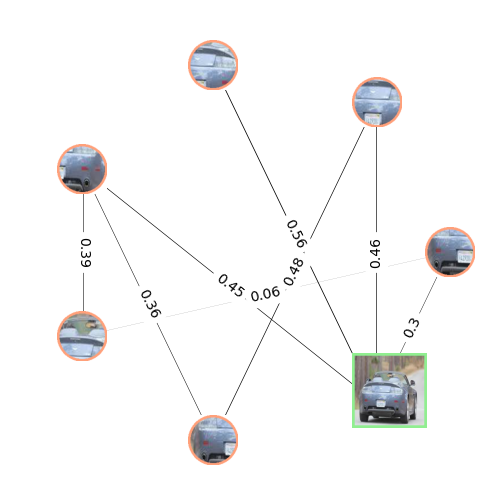}
    \includegraphics[width=0.23\textwidth]{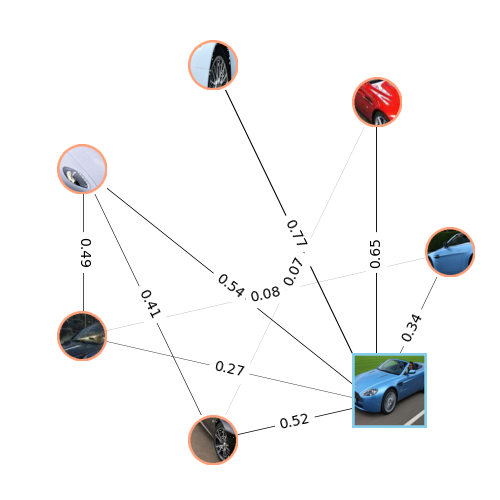}
    \caption{Sample relational interpretation graphs from TRD on Stanford Cars. The global views of the correct classifications are bordered with green, and the incorrect ones with red. The global view of the class-proxy graph is colored in blue.}
    \label{fig:qual_stanfordcars}
\end{figure}

\begin{figure}
    \centering
    \includegraphics[width=0.23\textwidth]{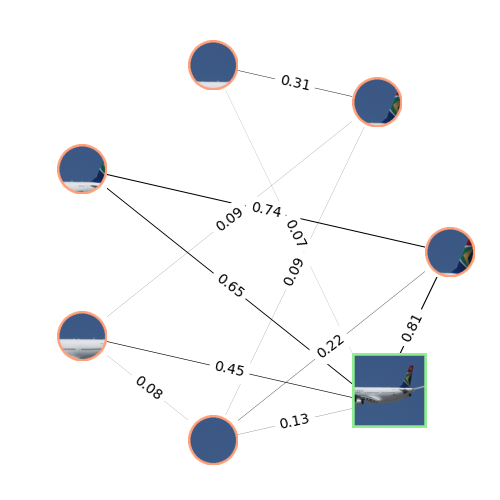}
    \includegraphics[width=0.23\textwidth]{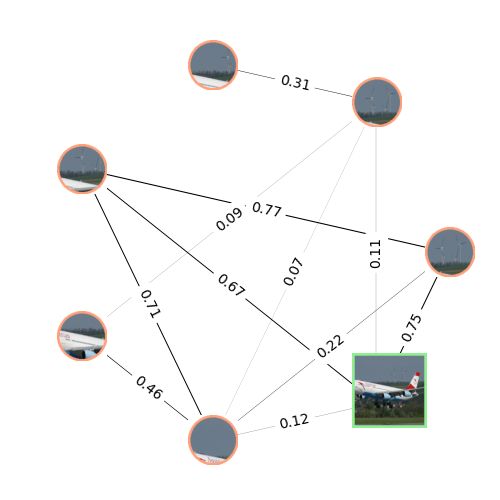}
    \includegraphics[width=0.23\textwidth]{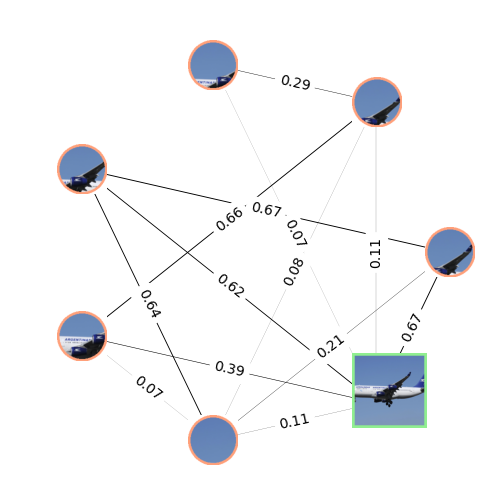}
    \includegraphics[width=0.23\textwidth]{FGVCAircraft/3.png.png}
    \includegraphics[width=0.23\textwidth]{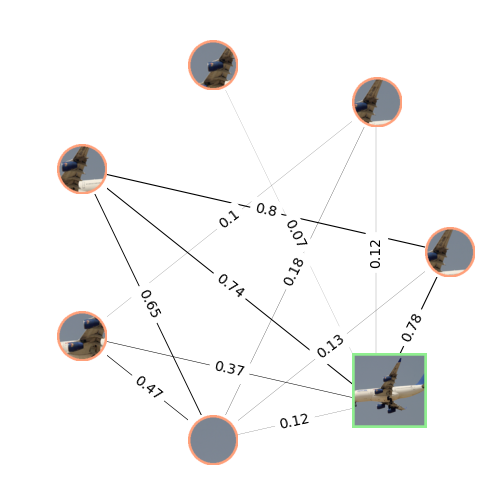}
    \includegraphics[width=0.23\textwidth]{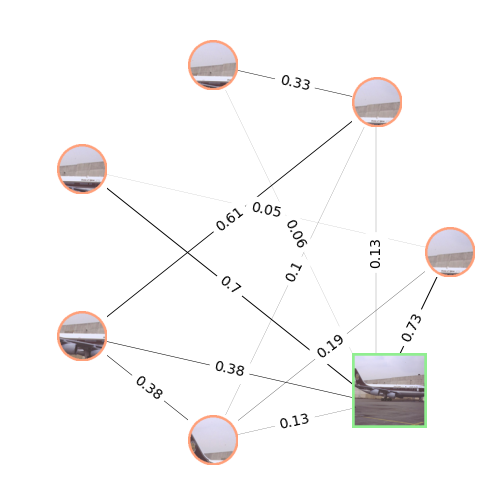}
    \includegraphics[width=0.23\textwidth]{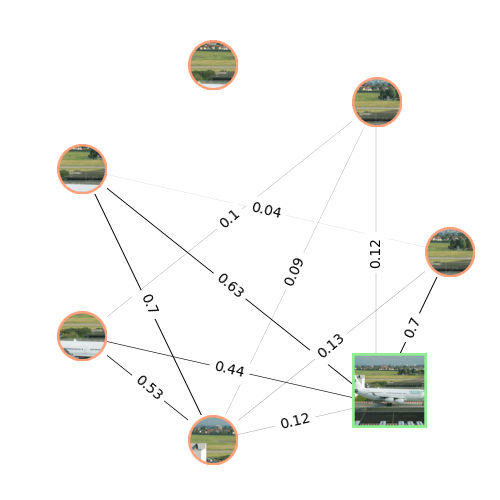}
    \includegraphics[width=0.23\textwidth]{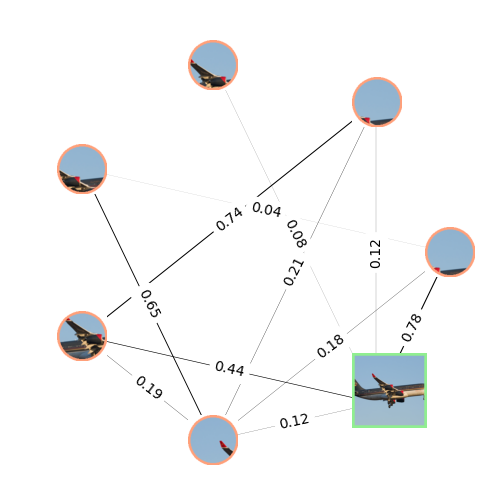}
    \includegraphics[width=0.23\textwidth]{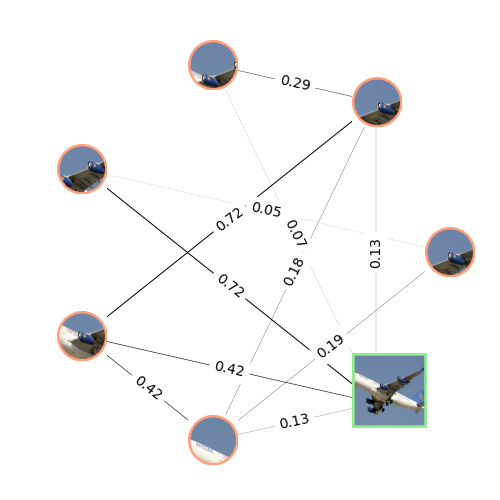}
    \includegraphics[width=0.23\textwidth]{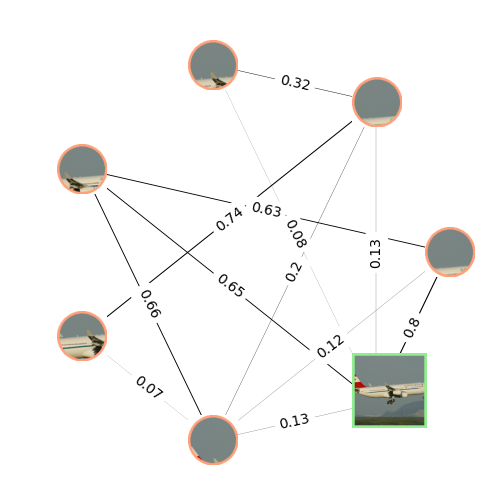}
    \includegraphics[width=0.23\textwidth]{FGVCAircraft/10.png.png}
    \includegraphics[width=0.23\textwidth]{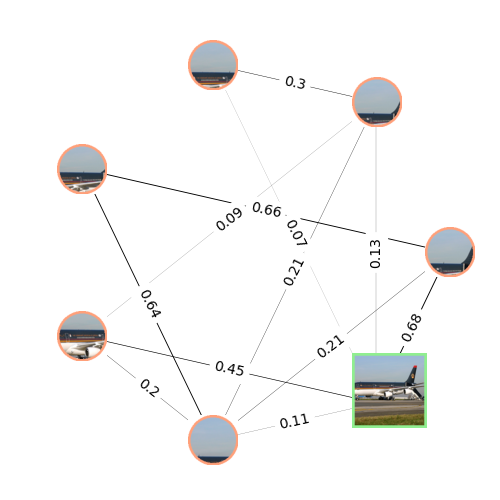}
    \includegraphics[width=0.23\textwidth]{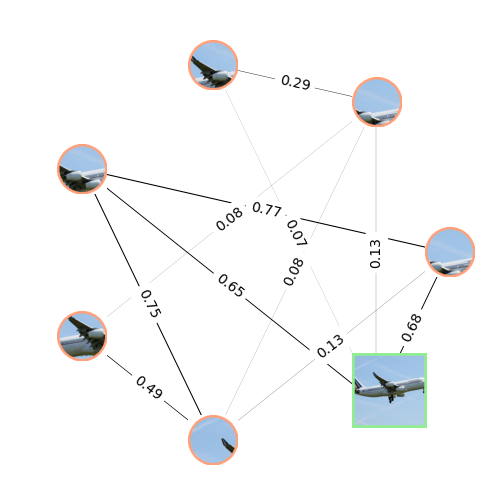}
    \includegraphics[width=0.23\textwidth]{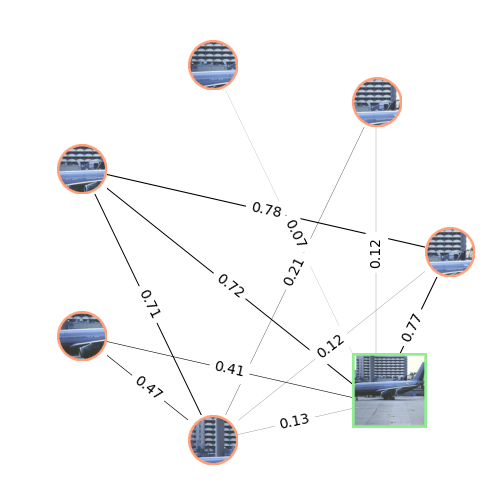}
    \includegraphics[width=0.23\textwidth]{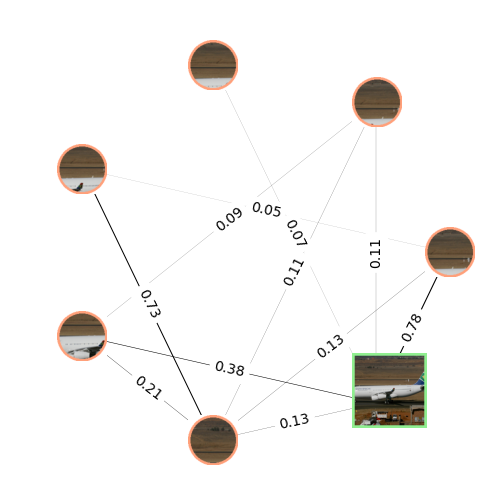}
    \includegraphics[width=0.23\textwidth]{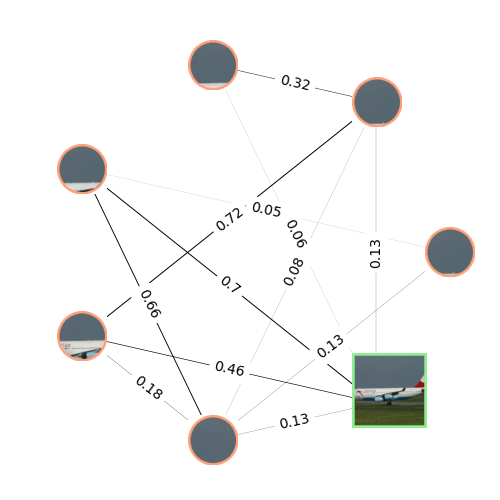}
    \includegraphics[width=0.23\textwidth]{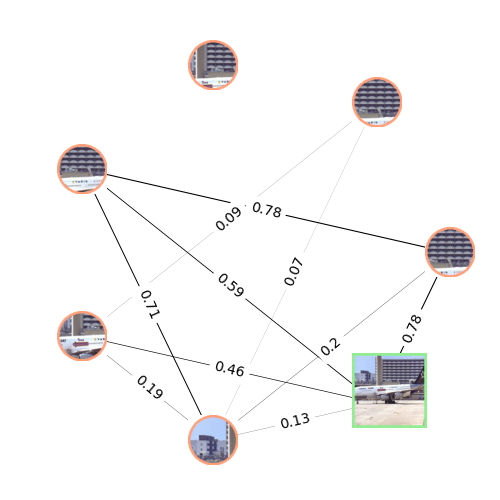}
    \includegraphics[width=0.23\textwidth]{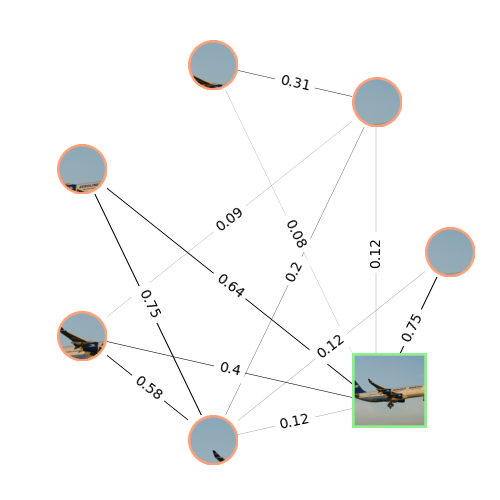}
    \includegraphics[width=0.23\textwidth]{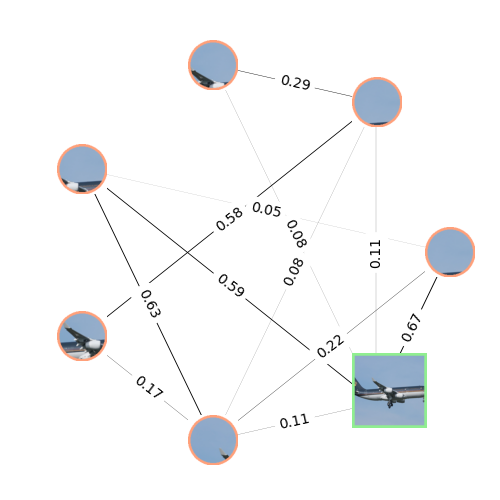}
    \includegraphics[width=0.23\textwidth]{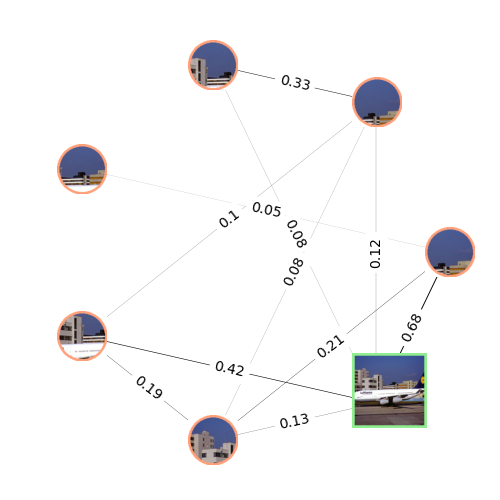}
    \includegraphics[width=0.23\textwidth]{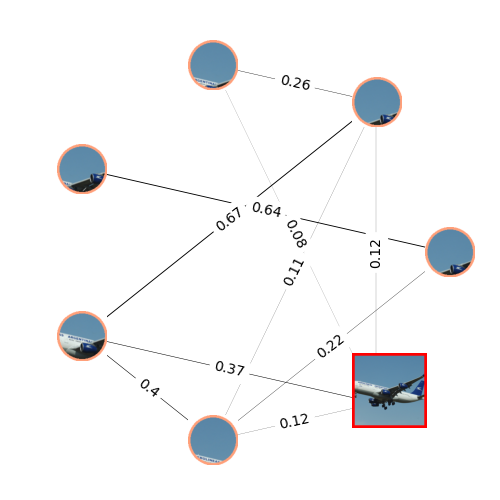}
    \includegraphics[width=0.23\textwidth]{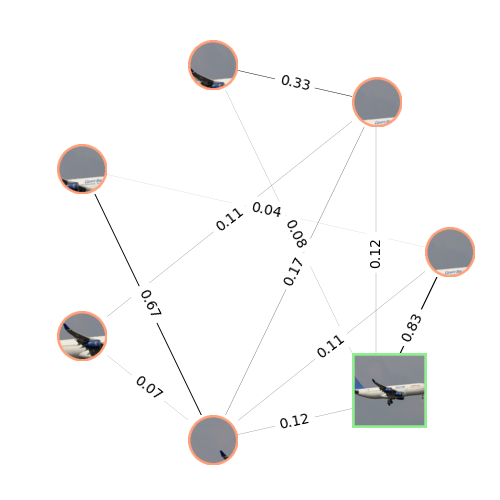}
    \includegraphics[width=0.23\textwidth]{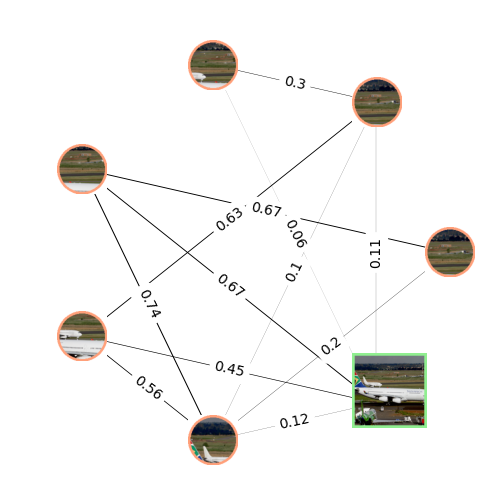}
    \includegraphics[width=0.23\textwidth]{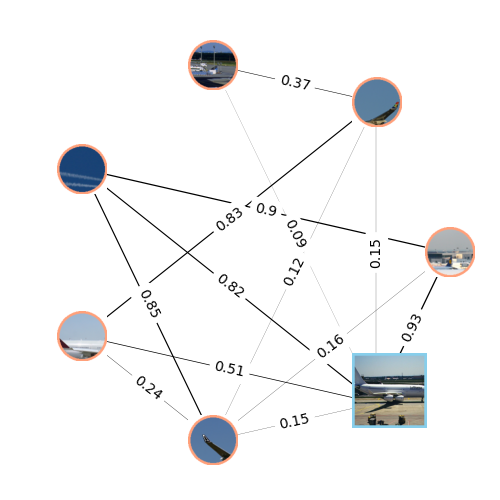}
    \caption{Sample relational interpretation graphs from TRD on FGVC Aircraft. The global views of the correct classifications are bordered with green, and the incorrect ones with red. The global view of the class-proxy graph is colored in blue.}
    \label{fig:qual_aircraft}
\end{figure}

\begin{figure}
    \centering
    \includegraphics[width=0.23\textwidth]{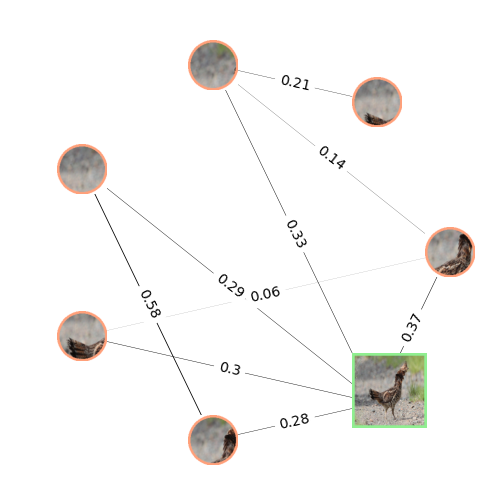}
    \includegraphics[width=0.23\textwidth]{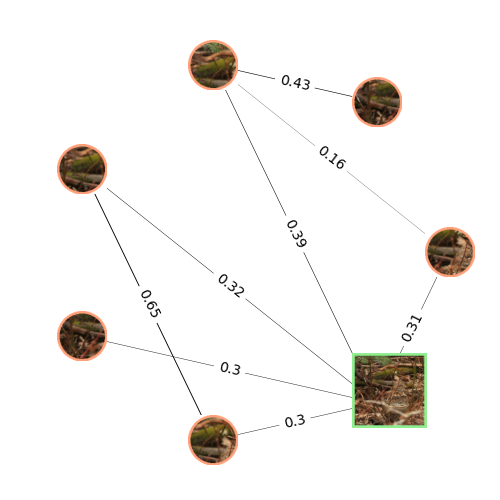}
    \includegraphics[width=0.23\textwidth]{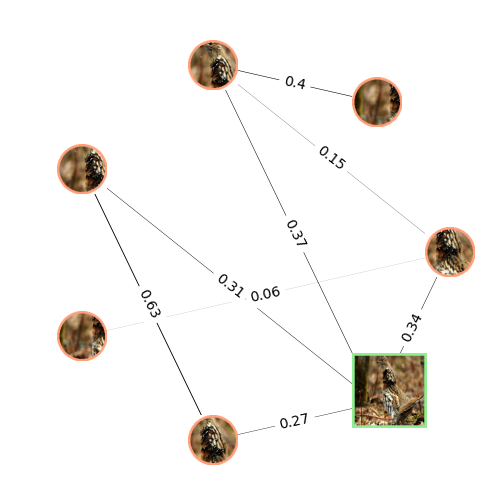}
    \includegraphics[width=0.23\textwidth]{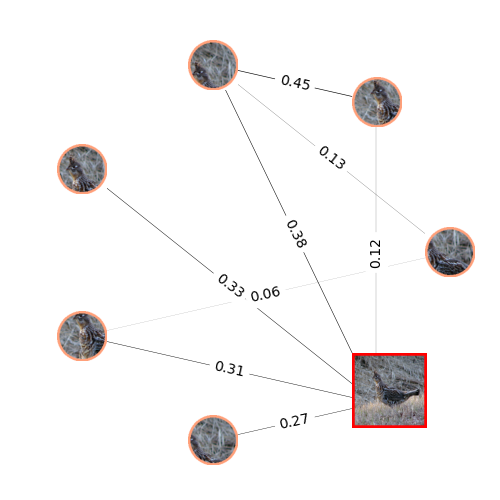}
    \includegraphics[width=0.23\textwidth]{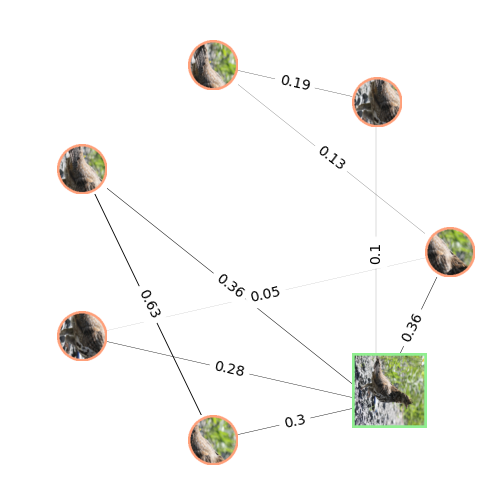}
    \includegraphics[width=0.23\textwidth]{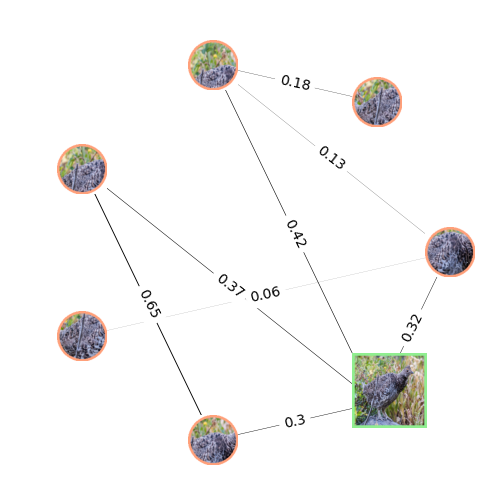}
    \includegraphics[width=0.23\textwidth]{NABirds/6.png.png}
    \includegraphics[width=0.23\textwidth]{NABirds/7.png.png}
    \includegraphics[width=0.23\textwidth]{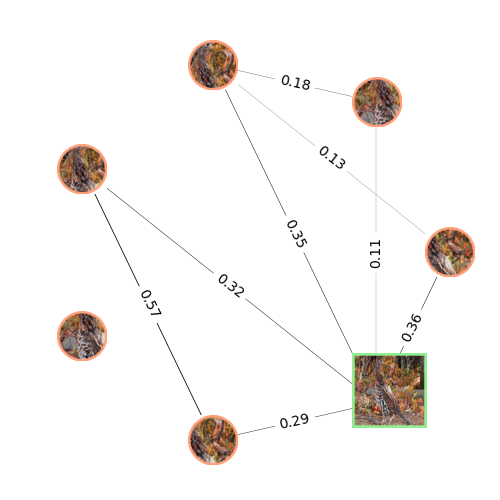}
    \includegraphics[width=0.23\textwidth]{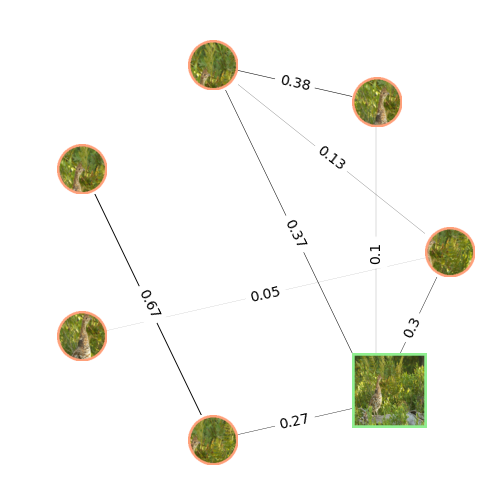}
    \includegraphics[width=0.23\textwidth]{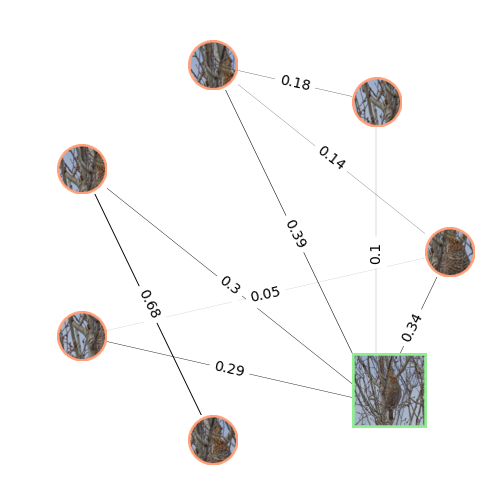}
    \includegraphics[width=0.23\textwidth]{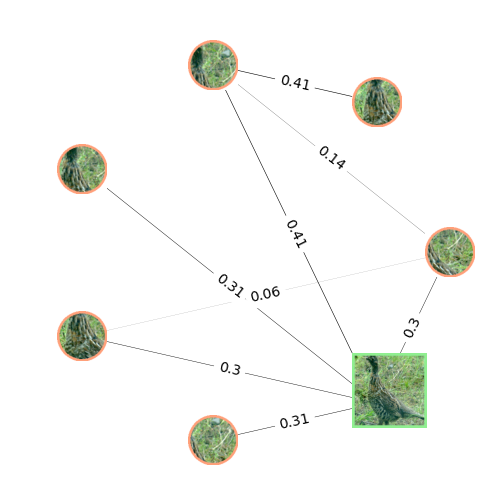}
    \includegraphics[width=0.23\textwidth]{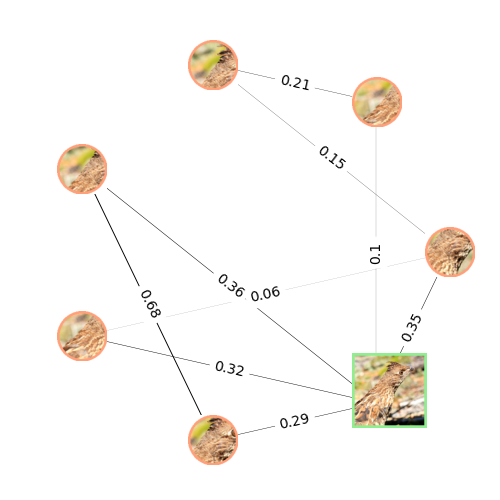}
    \includegraphics[width=0.23\textwidth]{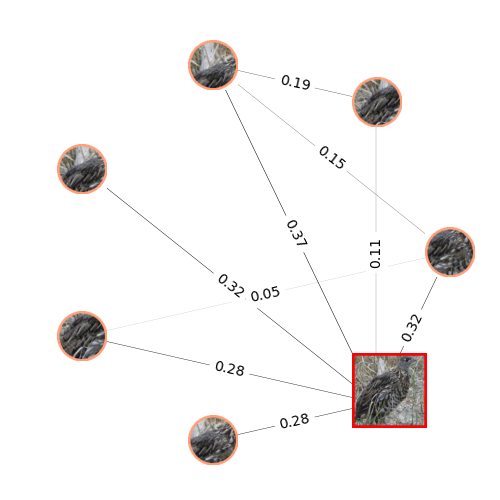}
    \includegraphics[width=0.23\textwidth]{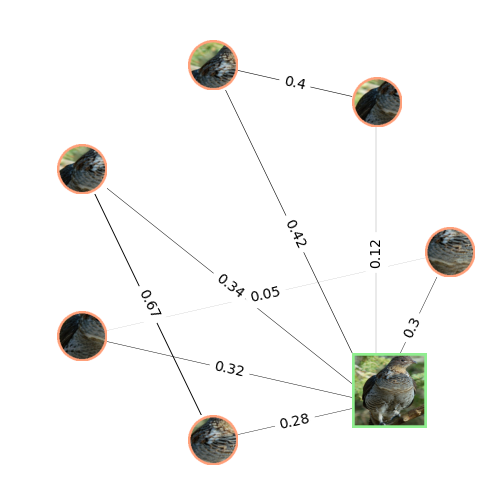}
    \includegraphics[width=0.23\textwidth]{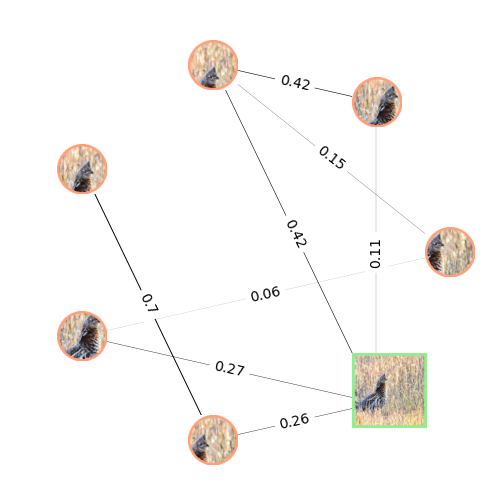}
    \includegraphics[width=0.23\textwidth]{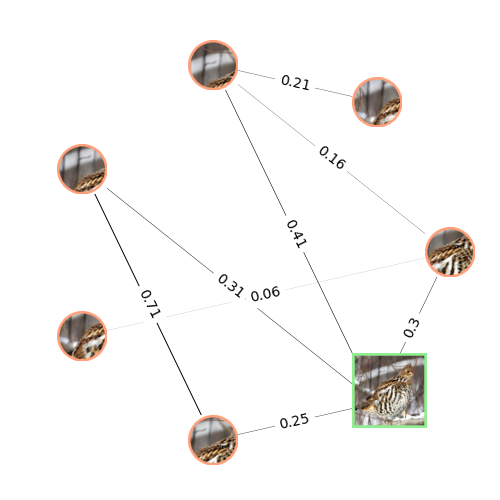}
    \includegraphics[width=0.23\textwidth]{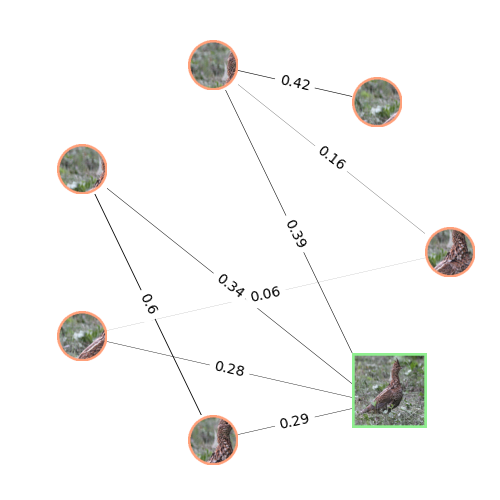}
    \includegraphics[width=0.23\textwidth]{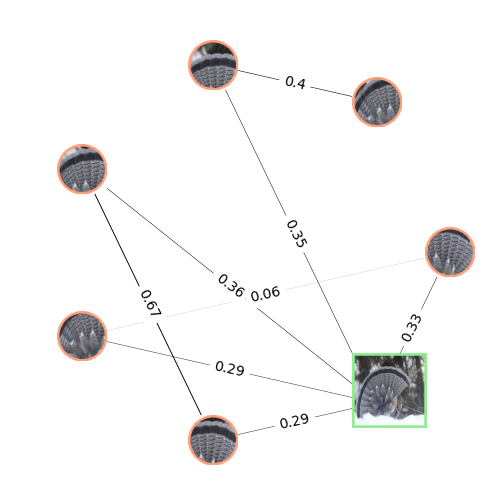}
    \includegraphics[width=0.23\textwidth]{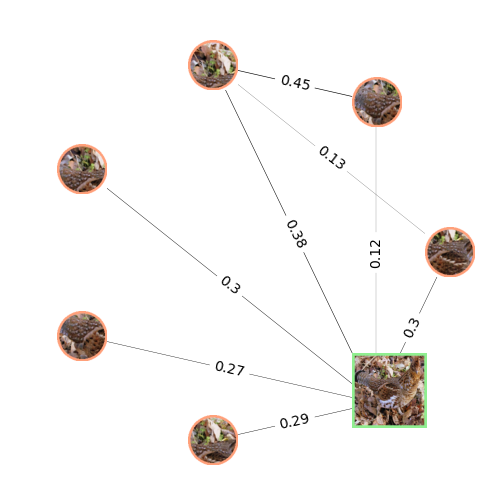}
    \includegraphics[width=0.23\textwidth]{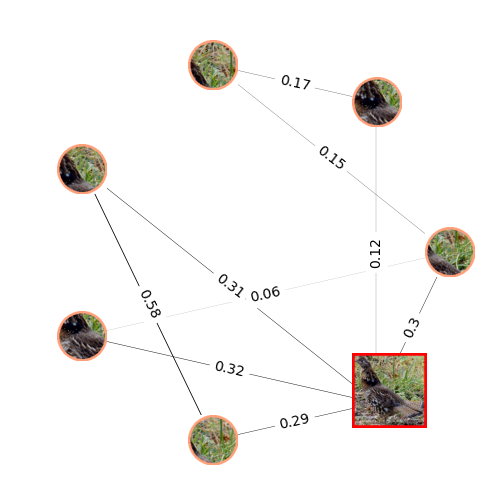}
    \includegraphics[width=0.23\textwidth]{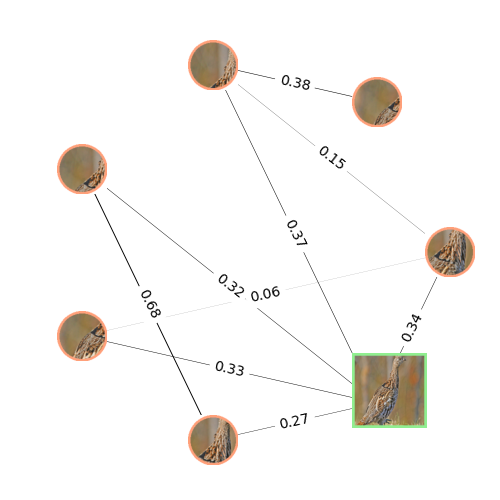}
    \includegraphics[width=0.23\textwidth]{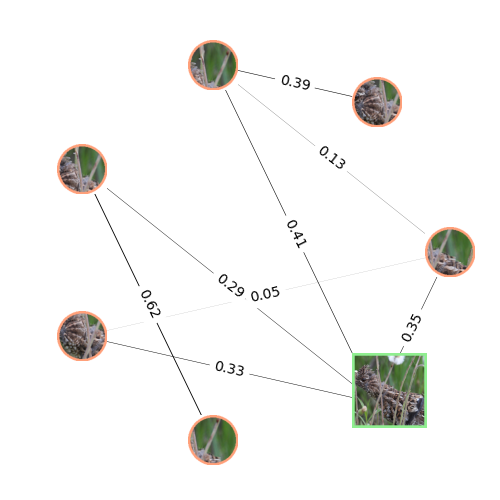}
    \includegraphics[width=0.23\textwidth]{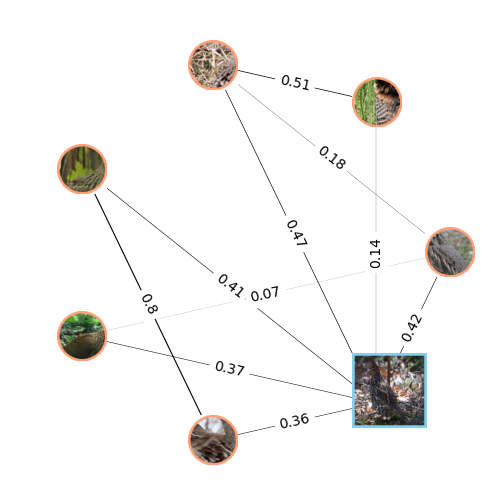}
    \caption{Sample relational interpretation graphs from TRD on NA Birds. The global views of the correct classifications are bordered with green, and the incorrect ones with red. The global view of the class-proxy graph is colored in blue.}
    \label{fig:qual_nabirds}
\end{figure}

\newpage

\subsection{Fidelity \textit{vs} Sparsity Experiments}

\cref{fig:fidelity_sparsity_supp} shows the Fidelity \textit{vs} Sparsity curves on the remaining datasets, \ie, Soy Cultivar, CUB, Stanford Cars, and NA Birds. It can be seen that TRD gives consistently better fidelity over generic GNN explainers across all sparsity levels, and across all datasets, while being fairly stable.

\begin{figure}[!ht]
    \centering
    \includegraphics[width=\textwidth]{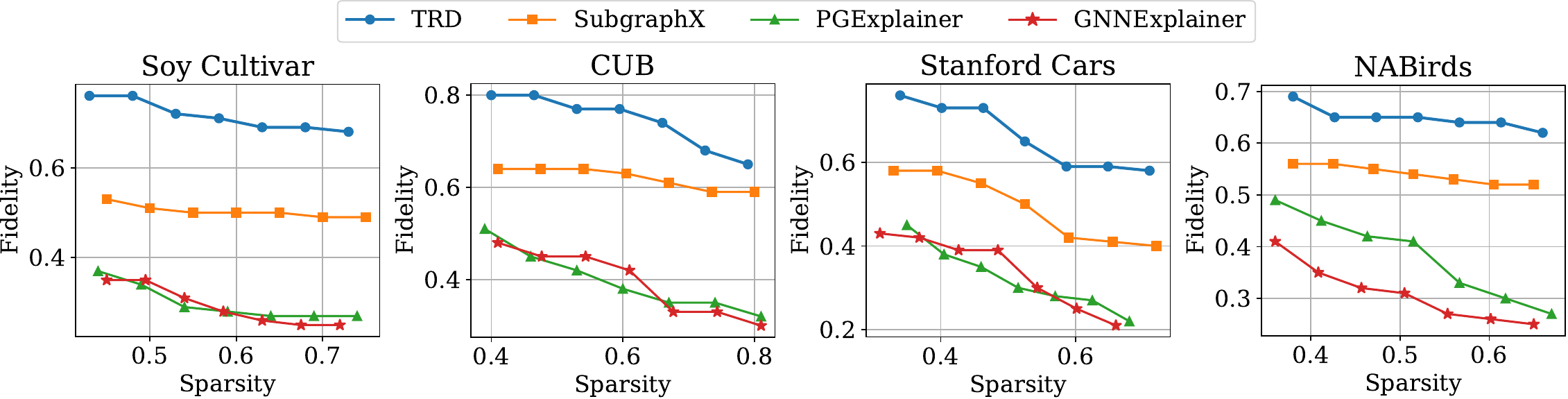}
    \caption{Fidelity \textit{vs} Sparsity curves of TRD and SOTA GNN interpretability methods on datasets apart from the ones presented in the main manuscript.}
    \label{fig:fidelity_sparsity_supp}
\end{figure}

We follow \cite{Yuan2021SubgraphX} and use \cite{liu2021jmlr} for obtaining the fidelity and sparsity scores for all the methods.
Specifically, the fidelity and sparsity values are given by:

\begin{align*}
    \texttt{fidelity} &= \frac{1}{N}\sum_N h(\mathcal{G}_s', \mathcal{G}_p) - h(\mathcal{G}_s, \mathcal{G}_p) \\
    \texttt{sparsity} &= \frac{1}{N}\sum_N 1 - \frac{|\mathcal{G}_s'|}{\mathcal{G}_s}
\end{align*}

where $\mathcal{G}_s'$ is the subgraph of the semantic relevance graph $G_s$ with highest emergence, and $\mathcal{G}_p$ is the proxy graph, and N is the number of test set samples.
Also, as noted in \cite{Yuan2021SubgraphX} as well, since the sparsity values cannot be fully controlled, we obtain the plots in approximately similar sparsity ranges for all methods.

\subsection{Equivalence of TRD to Post-Hoc Explanations}

\myparagraph{Obtaining Post-Hoc Relational Explanations}
We assume access to the relational embeddings $\vectorname{r}$, and the class-proxy embeddings $\mathbb{P}$ of the source model \cite{Chaudhuri2022RelationalProxies, Caron2021DiNo} for which we are trying to generate post-hoc explanations. We start by obtaining a local-view graph $\mathcal{G}_L$ as per \cite{Chaudhuri2022RelationalProxies}, which we propagate through a GNN encoder $\varphi': \mathbb{Z}_\mathbb{V} \to \mathcal{M}$. We partition the nodes of $\varphi'(\mathcal{G}_L)$ into positive and negative sets based on their corresponding similarities with the predicted class-proxy embedding $\vectorname{p} \in \mathbb{P}$ obtained from the source model. Based on this partition, we train $\varphi'$ to contrastively align the positive views with the relational embedding $\vectorname{r}$ of that instance obtained from the source model, while pushing apart the negative ones.

To ensure that the ante-hoc nature of TRD does not affect its explainability accuracy, we compare its performance with post-hoc explanations.
To additionally ensure the generalizability of our model, we perform such comparisons with post-hoc explanations obtained across multiple existing abstract relational learning models -- Relational Proxies \cite{Chaudhuri2022RelationalProxies} and DiNo \cite{Caron2021DiNo}, that provide SoTA performance in learning fine-grained visual features.
Due to the unique nature of our problem of capturing the number of transitively emergent subgraphs in an explanation, the usual metric of AUC \cite{Ying2019GNNExplainer, Luo2020PGExplainer} for quantitatively evaluating GNN explanations is too simplistic for our setting.
For this purpose, to quantify the equivalence of relational explanations obtained from two different algorithms, we propose a metric based on counting the number of $k$-cliques in the explanation graphs, which we call mean Average Clique Similarity (mACS) @ $k$.

For a particular value of $k$, we count the number of $k$-cliques \emph{containing the global view} in the explanation graphs for a particular class and average them, obtaining the Average Clique Count (ACC) for that class. We do this individually for the two algorithms. We take the absolute value of the difference of the ACCs obtained from the two algorithms, divide it by the greater of the two ACCs, and call this the Average Clique Difference (ACD). We then compute the mean of the ACDs across all classes and subtract it from 1 to get the mACS for the two algorithms. We only consider cliques containing the global view because the generated explanations are designed so as to explicitly capture the learned local-to-global relationships.

We vary the value of $k$ and analyze the mACSs between TRD and the post-hoc explanations for the aforementioned algorithms. The results are reported in \cref{tab:posthoc_equivalence}. It can be seen that the explanations obtained from TRD 
are highly similar to the post-hoc explanations for the SoTA abstract relational embeddings.
The mACSs between TRD and DiNo are slightly lower because DiNo does not model the relation-agnostic and relation-aware representation spaces independently as is required for a sufficient learner (Appendix A.6).

\begin{table}[!ht]
    \centering
    \begin{tabular}{ccccc}
        \toprule
         mACS @ & $k=4$ & $k=8$ & $k=12$ & $k=16$ \\
         \midrule
         \textbf{Relational Proxies} & 0.99 & 0.99 & 0.96 & 0.95  \\
         \textbf{DiNo} & 0.87 & 0.85 & 0.85 & 0.81\\
         \bottomrule
    \end{tabular} 
    \caption{Post-hoc equivalence of TRD to existing methods with increasing clique number $k$.
    }
    \label{tab:posthoc_equivalence}
\end{table}

\end{document}